\definecolor{es-blue}{rgb}{0.1372,0.666,1.}
\definecolor{beige}{HTML}{D6BC74}
\theoremstyle{plain}
\newtheorem{theorem}{Theorem}[section]
\newtheorem{proposition}[theorem]{Proposition}
\theoremstyle{definition}
\theoremstyle{remark}
\DeclareRobustCommand{\contrast}{{\,\circlerighthalfblack\,}}
\DeclareRobustCommand{\rcontrast}{{\,\circlelefthalfblack\,}}
\DeclareRobustCommand{\E}{\operatornamewithlimits{\mathbb{E}}}
\newdimen\indexdigits
\newcommand\paddigits[1]{\hbox to \indexdigits{\hfill#1}}
\let\oldparagraph\paragraph
\renewcommand{\paragraph}[1]{\vspace{-7pt}\oldparagraph{#1}}
\title{Compositional Sculpting of\\Iterative Generative Processes}
\author{
Timur Garipov\thanks{Correspondence to Timur Garipov (\texttt{timur@csail.mit.edu}).}\\
MIT CSAIL
\And
Sebastiaan De Peuter \\
Aalto University
\And
Ge Yang\\
MIT CSAIL\\
Institute for Artificial Intelligence \\
and Fundamental Interactions
\AND
Vikas Garg\\
Aalto University \\
YaiYai Ltd\\
\And
Samuel Kaski\\
Aalto University \\ University of Manchester
\And
Tommi Jaakkola\\
MIT CSAIL
}
\date{}
\begin{document}

\maketitle

\begin{abstract}
\noindent
High training costs of generative models and the need to fine-tune them for specific tasks have created a strong interest in model reuse and composition.
A key challenge in composing iterative generative processes, such as GFlowNets and diffusion models, is that to realize the desired target distribution, all steps of the generative process need to be coordinated, and satisfy delicate balance conditions.
In this work, we propose Compositional Sculpting: a general approach for defining compositions of iterative generative processes. We then introduce a method for sampling from these compositions built on classifier guidance.
We showcase ways to accomplish compositional sculpting in both GFlowNets and diffusion models. We highlight two binary operations --- the harmonic mean (\(p_1 \otimes p_2\)) and the contrast (\(p_1 \contrast p_2\)) between pairs, and the generalization of these operations to multiple component distributions.
We offer empirical results on image and molecular generation tasks. Project codebase: \url{https://github.com/timgaripov/compositional-sculpting}.
\end{abstract}

\section{Introduction}

Large-scale general-purpose pre-training of machine learning models has produced impressive results in computer vision~\cite{Kirillov2023sam,radford2021learning,cherti2022reproducible}, image generation~\cite{rombach2022high,ho2020denoising,ramesh2021zero}, natural language processing~\cite{devlin2018bert,brown2020language,openai2023chatgpt,Roberts2022t5,chowdhery2022palm}, robotics~\cite{Brohan2022rt1,Driess2023palm-e,Ahn2022saycan} and basic sciences~\cite{jumper2021highly}. By distilling vast amounts of data, such models can produce powerful inferences that lead to emergent capabilities beyond the specified training objective~\cite{Bommasani2021foundation-models}. However, generic pre-trained models are often insufficient for specialized tasks in engineering and basic sciences. Field-adaptation via techniques such as explicit fine-tuning on bespoke datasets~\cite{Zhang2023control-net}, human feedback~\cite{ouyang2022instruct-gpt}, or cleverly designed prompts~\cite{Suris2023ViperGPTVI,Wei2022chain-of-thought} is therefore often required. An alternative approach is to compose the desired distribution using multiple simpler component models.

Compositional generation~\citep{hinton1999products, vedantam2018generative, du2020compositional, du2021unsupervised, liu2021learning, liu2022compositional, du2023reduce} views a complex target distribution in terms of simpler pre-trained building blocks which it can learn to mix and match into a tailored solution to a specialized task. 
Besides providing a way to combine and reuse previously trained models, composition is a powerful modeling approach. A composite model fuses knowledge from multiple sources: base models trained for different tasks, enabling increased capacity beyond that of any of the base models in isolation. If each individual base model captures a certain property of the data, composing such models provides a way to specify distributions over examples that exhibit multiple properties simultaneously \citep{hinton2002training}. The need to construct complex distributions adhering to multiple constraints arises in numerous practical multi-objective design problems such as multi-objective molecule generation \cite{jin2020multi, xie2021mars, jain2022multiobjective}. In the context of multi-objective generation, compositional modeling provides mechanisms for adjustment and control of the resulting distribution, which enables exploration of different trade-offs between the objectives and constraints.

Prior work on generative model composition \citep{hinton1999products, hinton2002training, du2020compositional} has developed operations for piecing together Energy-Based Models (EBMs) via algebraic manipulations of their energy functions. For example, consider two distributions \(p_1(x) \varpropto \exp\{-E_1(x)\}\) and \(p_2(x) \varpropto \exp\{-E_2(x)\}\) induced by energy functions \(E_1\) and \(E_2\). Their \textit{product} $p_\text{prod}(x) \varpropto p_1(x) p_2(x) \varpropto \exp \left ( -\big(E_1(x) + E_2(x)\big) \right )$ and \textit{negation} $p_\text{neg}(x) \varpropto p_1(x)/(p_2(x))^\gamma \varpropto \exp \left ( -\big(E_1(x) - \gamma E_2(x)\big) \right )$ correspond to operations on the underlying energy functions.

Iterative generative processes including diffusion models \citep{sohl2015deep, song2020improved, ho2020denoising, song2021scorebased} and GFlowNets~\cite{bengio2023gflownet,bengio2021flow} progressively refine coarse objects into cleaner ones over multiple steps. The realization of effective compositions of these models is complicated by the fact that simple alterations in their generation processes result in non-trivial changes in the distributions of the final objects. For instance, the aforementioned product and negation between EBMs cannot be realized simply by means of adding or subtracting associated score-functions. Prior work addresses these challenges by connecting diffusion models with EBMs through annealed Markov-Chain Monte-Carlo~(MCMC) inference. However, Metropolis-Hastings corrections are required to ensure that the annealing process reproduces the desired distribution~\cite{du2023reduce}.

\citet{jain2022multiobjective} develop Multi-Objective GFlowNets (MOGFNs), an extension of GFlowNets for multi-objective optimization tasks. The goal of a vanilla GFlowNet model is to capture the distribution induced by a single reward (objective) function $p_\theta(x) \varpropto R(x)$ (see Section \ref{ssec:background_gflow} for details of GFlowNet formulation). A Multi-Objective GFlowNet aims to learn a single conditional model that can realize distributions corresponding to various combinations (e.g. a convex combination) of multiple reward functions.
While a single MOGFN effectively realizes a spectrum of compositions of base reward functions, the approach assumes access to the base rewards at training time. Moreover, MOFGNs require the set of possible composition operations to be specified at generative model training time. In this work, we address post hoc composition of pre-trained GFlowNets (or diffusion models) and provide a way to create compositions that need not be specified in advance.

In this work, we introduce Compositional Sculpting, a general approach for the composition of pre-trained models. We highlight two special examples of binary operations --- \emph{harmonic mean}: (\(p_1 \otimes p_2\)) and \emph{contrast}: (\(p_1 \contrast p_2\)). More general compositions are obtained as conditional distributions in a probabilistic model constructed on top of pre-trained base models. We show that these operations can be realized via classifier guidance. We provide results of empirical verification of our method on molecular generation (with GFlowNets) and image generation (with diffusion models).

\begin{figure}[!t]
\centering
\begin{subfigure}[t]{0.18\linewidth}
\centering
\includegraphics[width=\linewidth]{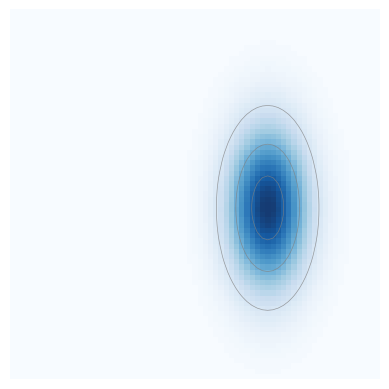}
\caption{\(p_1\)}
\end{subfigure}\hfill%
\begin{subfigure}[t]{0.18\linewidth}
\centering
\includegraphics[width=\linewidth]{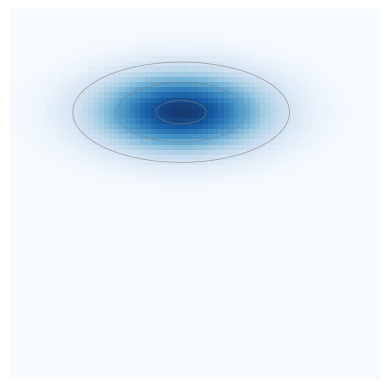}
\caption{\(p_2\)}
\end{subfigure}\hfill%
\begin{subfigure}[t]{0.18\linewidth}
\centering
\includegraphics[width=\linewidth]{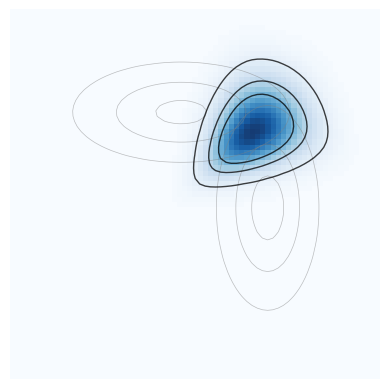}
\caption{\(p_1 \otimes p_2\)}
\label{fig:demo_hm}
\end{subfigure}\hfill%
\begin{subfigure}[t]{0.18\linewidth}
\centering
\includegraphics[width=\linewidth]{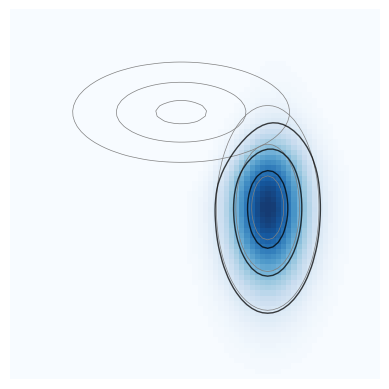}
\caption{\(p_1 \contrast p_2\)}
\label{fig:demo_contrast_12}
\end{subfigure}\hfill%
\begin{subfigure}[t]{0.18\linewidth}
\centering
\includegraphics[width=\linewidth]{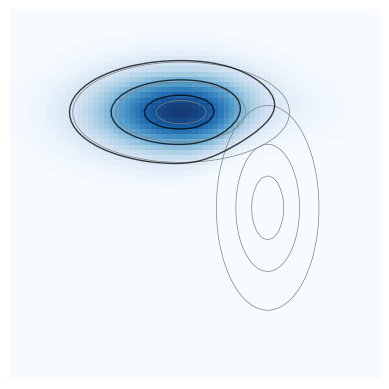}
\caption{\(p_1 \rcontrast p_2\)}
\end{subfigure}\hfill%
\begin{subfigure}[t]{0.03\linewidth}
\centering
\includegraphics[height=85pt]{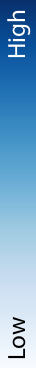}
\end{subfigure}%
\caption{\textbf{Composition operators.} (a,b) base distributions \(p_1\) and \(p_2\). (c) harmonic mean of \(p_1\) and \(p_2\). (d) contrast of \(p_1\) with \(p_2\) (e) the reverse contrast \(p_1 \rcontrast p_2\). Note \(\contrast\) is asymmetric. Grey lines show contours of PDF level sets of base Gaussian distributions $p_1$, $p_2$. Black lines show contours of PDF levels sets of composite distributions.}
\label{fig:demo}
\vspace{-1em}
\end{figure}

\section{Background}

\subsection{Generative Flow Networks (GFlowNets) }
\label{ssec:background_gflow}

GFlowNets~\cite{bengio2021flow, bengio2023gflownet} are an approach for generating compositional objects (e.g. graphs). The objective of GFlowNet training is specified by a ``reward function'' $R(x) \geq 0$ defined on the set of objects $\mathcal{X}$. The objective is to learn a generative model $p(x)$ that assigns more probability mass on high-reward objects. Formally, GFlowNets seek to produce the distribution $p(x) = \nicefrac{R(x)}{Z}$, where $Z = \sum_x R(x)$. %

\paragraph{Generative process.} The generation of a complete object $x$ is realized through a sequence of incremental changes of incomplete states $s_0 \to s_1 \to \ldots \to s_{n - 1} \to x$ starting at the designated initial state $s_0$. Formally, the structure of possible generation trajectories $\tau = (s_0 \to s_1 \to \ldots \to s_{n - 1} \to x)$ is captured by a DAG $(\mathcal{S}, \mathcal{A})$ where $\mathcal{S}$ is the set of states (both complete and incomplete) and $\mathcal{A}$ is the set of directed edges (actions) $s \to s'$. The set $\mathcal{S}$ has a designated initial state $s_0$ 
and the set of complete objects (terminal states) $\mathcal{X}$ is a subset of $\mathcal{S}$.
Each generation trajectory $\tau$ starts at the initial state $s_0$, follows the edges $(s \to s') \in \mathcal{A}$ of the DAG, and terminates at one of the terminal states $x \in \mathcal{X}$. We use $|\tau|$ to denote the length of the trajectory (the number of transitions).

This sequential generation process is controlled by a parameterized stochastic ``forward policy'' $P_F(s' \vert s; \theta)$ which for each state $s \in \mathcal{S} \setminus \mathcal{X}$ specifies a probability distribution over all possible successor states $s'\!:\!(s \to s') \in \mathcal{A}$. Generation is performed by starting at $s_0$ and sequentially sampling transitions from the forward policy $P_F(\cdot \vert \cdot)$ until a terminal state is reached.

\subsection{Diffusion Models}

Diffusion models, see~\citealp{sohl2015deep, song2019generative, song2020improved, ho2020denoising, song2021scorebased}) are a family of generative models developed for continuous domains. Given a dataset of samples $\{\hat x_i\}_{i=1}^n$ forming the empirical distribution $\hat p(x) = \frac{1}{n} \sum_i \delta_{\hat x_i}(x)$ in $\mathcal{X} = \mathbb{R}^d$, diffusion models seek to approximate \(\hat p(x)\) via a generative process \(p(x)\), which can then be used to generate new samples.

\paragraph{Stochastic Differential Equation (SDE) perspective.}

We discuss diffusion models from the perspective of stochastic differential equations (SDE) \citep{song2021scorebased}. A diffusion process is a noising process that gradually destroys the original ``clean'' data $x$. It can be specified as a time-indexed collection of random variables $\{x_t\}_{t=0}^T$ in $\mathcal{X} = \mathbb{R}^d$. We use $p_t(\cdot)$ to denote the density of the distribution of $x_t$. The process interpolates between the data distribution $p_0(x) = \hat p(x)$ at $t = 0$, and the prior distribution $p_T(x)$ at $t = T$, which is typically constructed to have a closed form (e.g. standard normal) to enable a simple sampling scheme. The evolution of $x_t$ is described by the ``forward SDE''
$dx_t = f_t(x_t)\,dt + g_t\, dw_t$,
where $w_t$ is the standard Wiener process, the function $f_t: \mathbb{R}^d \to \mathbb{R}^d$ is called the drift coefficient and $g_t \in \mathbb{R}$ is called the diffusion coefficient. Specific choices of $f_t$ and $g_t$ completely determine the process and give rise to the transition kernel $p_{st}(x_t \vert x_s)$ for $0 \leq s < t \leq T$ (see \cite{song2021scorebased} for examples). 

\paragraph{Generative process.}

\citet{song2021scorebased} invoke a result from the theory of stochastic processes \citep{anderson1982reverse} which gives the expression for the reverse-time process or ``backward SDE'':%
\begin{equation}
    \label{eq:diffusion_backsde}
    dx_t = \big[f_t(x_t) - g_t^2 \nabla_x \log p_t(x_t) \big]\,dt + g_t\, d\overline{w}_t, 
\end{equation}%
where $\overline{w}_t$ is the standard Wiener process in reversed time. 

The backward SDE includes the known coefficients $f_t$, $g_t$, and the unknown score function $\nabla_x \log p_t(\cdot)$ of the marginal distribution $p_t(\cdot)$ at time $t$. This score function is estimated by a deep neural network $s_t(x; \theta) \approx \nabla_x \log p_t(x)$ (called ``score-network'') with parameters~$\theta$.
Once the score-network $s_t(\cdot; \theta)$ is trained, samples can be generated via numerical integration of \eqref{eq:diffusion_backsde}.

\subsection{Classifier Guidance in Diffusion Models}

Classifier guidance \citep{sohl2015deep, dhariwal2021diffusion} is a technique for controllable generation in diffusion models. Suppose that each example $x_0$ is accompanied by a discrete class label $y$. The goal is to sample from the conditional distribution $p_0(x_0 \vert y)$. The Bayes rule $p_t(x_t \vert y) \varpropto p_t(x_t) p_t(y \vert x_t) $ implies the score-function decomposition $\nabla_{x_t} \log p_t(x_t \vert y) = \nabla_{x_t} \log p_t(x_t) + \nabla_{x_t} \log p_t(y \vert x_t)$, where the first term is already approximated by a pre-trained unconditional diffusion model and the second term can be derived from a time-dependent classifier $p_t(y \vert x_t)$. Therefore, the stated goal can be achieved by first training the classifier $p_t(y \vert x_t)$ using noisy samples \(x_t\) from the intermediate steps of the process, and then plugging in the expression for the conditional score into the sampling process \eqref{eq:diffusion_backsde}.

\subsection{``Energy'' Operations}
\label{ssec:background_energy_ops}
Prior work introduced energy operations, ``product'' and ``negation'', for  energy-based \citep{hinton1999products, du2020compositional} and diffusion \citep{du2023reduce} models. Given a pair of distributions $p_1(x) \varpropto \exp\{-E_1(x)\}$, $p_2(x) \varpropto \exp\{-E_2(x)\}$ corresponding to the respective energy functions $E_1$ and $E_2$, the ``product'' and ``negation'' operations are defined as%
\begin{align}
    \label{eq:e_prod}
    & (p_1 \,\text{prod}\; p_2)(x) \varpropto \exp\Big\{\!-\!\Big(E_1(x) + E_2(x)\Big)\Big\} \varpropto p_1(x) p_2(x),
    \\
    \label{eq:e_neg}
    & (p_1 \,\text{neg}_\gamma\; p_2)(x) \varpropto \exp\Big\{\!-\!\Big(E_1(x) - \gamma E_2(x)\Big)\Big\} \varpropto \frac{p_1(x)}{\big(p_2(x)\big)^\gamma}.
\end{align}
The product distribution $(p_1 \,\text{prod}\; p_2)(x)$: (a) assigns relatively high likelihoods to points $x$ that have sufficiently high likelihoods under both base distributions at the same time; (b) assigns relatively low likelihoods to points $x$ that have close-to-zero likelihood under one (or both) $p_1$, $p_2$. 
The negation distribution $(p_1 \,\text{neg}_\gamma\; p_2)(x)$
(a) assigns relatively high likelihood to points $x$ that are likely under $p_1$ but unlikely under $p_2$; (b) assigns relatively low likelihood to points $x$ that have low likelihood under $p_1$ and high likelihood under $p_2$.
The parameter $\gamma > 0$ controls the strength of negation.
Informally, the product concentrates on points that are common in both $p_1$ and $p_2$, and the negation concentrates on points that are common in $p_1$ and uncommon in $p_2$. 
If $p_1$ and $p_2$ capture objects demonstrating two distinct concepts (e.g. $p_1$: images of circles; $p_2$ images of green shapes), it is fair to say (again, informally) that the product and the negation resemble the logical operations of concept-intersection (``circle'' \texttt{AND} ``green'') and concept-negation (``circle'' \texttt{AND NOT} ``green'') respectively.

The ``product'' and ``negation'' can be realized in a natural way in energy-based models through simple algebraic operations on energy functions. However, realizing these operations on diffusion models is not as straightforward. The reason is that sampling in diffusion models requires the coordination of multiple steps of the denoising process. The simple addition of the time-dependent score functions does not result in a score function that represents the diffused product distribution\footnote{formally, $\nabla_{x_t} \log \left(\int p_{0t}(x_t \vert x_0) p_1(x_0)p_2(x_0)\, dx_0 \right) \neq \nabla{x_t} \log \left(\int p_{0t}(x_t \vert x_0) p_1(x_0)\, dx_0\right) + \nabla{x_t} \log \left(\int p_{0t}(x_t \vert x_0) p_2(x_0)\, dx_0 \right)$; we refer the reader to \citep{du2023reduce} for more details on the issue.}. \citet{du2023reduce} develop a method that corrects the sum-of-score-networks sampling via additional MCMC iterations nested under each step of the generation process.

\section{Related Work}
\label{sec:related}

\paragraph{Generative model composition.}
\citet{hinton2002training} developed a contrastive divergence minimization procedure for training products of tractable energy-based models. Learning mixtures of Generative Adversarial Networks has been addressed in \citep{hoang2018mgan}, where the mixture components are learned simultaneously, and in \citep{tolstikhin2017adagan}, where the components are learned one by one in an adaptive boosting fashion.
\citet{grover2018boosted} developed algorithms for additive and multiplicative boosting of generative models.
Following up on energy-based model operations \citep{hinton1999products, hinton2002training}, \citet{du2020compositional} studied the composition of deep energy-based models. \citet{du2023reduce} developed algorithms for sampling from energy-based compositions (products, negations) of diffusion models, related to the focus of our work. The algorithm in \citep{du2023reduce} introduces additional MCMC sampling steps at each diffusion generation step to correct the originally biased sampling process (based on an algebraic combination of individual score functions) toward the target composition.

Our work proposes a new way to compose pre-trained diffusion models and introduces an unbiased sampling process based on classifier guidance to sample from the compositions. This avoids the need for corrective MCMC sampling required in prior work. Our work further applies to GFlowNets, and is, to the best of our knowledge, the first to address the composition of pre-trained GFlowNets.

This work focuses on the composition of pre-trained models. Assuming that each pre-trained model represents the distribution of examples demonstrating certain concepts (e.g. molecular properties), the composition of models is equivalent to concept composition (e.g. property ``A'' and property ``B'' satisfied simultaneously). The inverse problem is known as ``unsupervised concept discovery'', where the goal is to automatically discover composable concepts from data. Unsupervised concept discovery and concept composition methods have been proposed for energy-based models \citep{du2021unsupervised} and for text-to-image diffusion models \citep{liu2023unsupervised}.

\paragraph{Controllable generation.}
Generative model composition is a form of post-training control of the generation process, an established area of research in generative modeling. A simple approach to control is conditional generation, which can be achieved by training a conditional generative model $p_\theta(x \vert c)$ on pairs $(x, c)$ of objects $x$ and conditioning information $c$. Types of conditioning information can include class labels \citep{dhariwal2021diffusion} or more structured data such as text prompts \citep{ramesh2021zero, rombach2022high, saharia2022photorealistic}, semantic maps, and other images for image-to-image translation \citep{rombach2022high}. This approach assumes that the generation control operations are specified at training time and the training data is annotated with conditioning attributes. Classifier guidance \citep{sohl2015deep} provides a way to generate samples from conditional distributions that need not be specified at training time. The guidance is realized by a classifier that is trained on examples $x_t$ (both clean and noisy) accompanied by conditioning labels $c$. \citet{dhariwal2021diffusion} apply classifier guidance on top of unconditional or conditional diffusion models to improve the fidelity of generated images. \citet{ho2021classifierfree} develop classifier-free guidance where the conditional and unconditional score functions are trained simultaneously and combined at inference time to guide the generation. In ControlNet \citep{Zhang2023control-net}, an additional network is trained to enable a pre-trained diffusion model to incorporate additional, previously unavailable, conditioning information. \citet{meng2022sdedit} and \citet{couairon2023diffedit} develop semantic image editing methods based on applying noise to the original image and then running the reverse denoising process to generate an edited image, possibly conditioned on a segmentation mask \citep{couairon2023diffedit}. 

Similar to conditional diffusion models, conditional GFlowNets have been used to condition generation on reward exponents \citep{bengio2021flow} or combinations of multiple predefined reward functions \citep{jain2022multiobjective}.

Note that the methods developed in this work can be combined with conditional generative models, for example, conditional diffusion models (or GFlowNets) $p(x \vert c_1), \ldots, p(x \vert c_m)$ can act as base generative models to be composed.

\paragraph{Compositional generalization.} The notion of compositionality has a broad spectrum of interpretations across a variety of disciplines including linguistics, cognitive science, and philosophy. \citet{hupkes2020compositionality} collect a list of aspects of compositionality from linguistical and philosophical theories and designs practical tests for neural language models covering all aspects. 
\citet{conwell2022testing} empirically examine the relational understanding of DALL-E 2 \citep{ramesh2022hierarchical}, a text-guided image generation model, and point out limitations in the model's ability to capture relations such as ``in'', ``on'', ``hanging over'', etc.
In this work, we focus on a narrow but well-defined type of composition where we seek to algebraically combine (compose) probability densities in a controllable fashion, such that we can emphasize or de-emphasize regions in the data space where specific base distributions have high density. Our methods are developed for the setting where we are given access to GFlowNets or diffusion models which can generate samples from the probability distributions we wish to compose.

\paragraph{Connections between GFlowNets and diffusion models.} 
We develop composition operations and methods for sampling from composite distributions for both GFlowNets and diffusion models. The fact that similar methods apply to both is rooted in deep connections between the two modeling frameworks. GFlowNets were initially developed for generating discrete (structured) data \citep{bengio2021flow} and diffusion models were initially developed for continuous data \citep{sohl2015deep, ho2020denoising}. \citet{lahlou2023theory} develop an extension of GFlowNets for DAGs with continuous state-action spaces. \citet{zhang2022unifying} point out unifying connections between GFlowNets and other generative model families, including diffusion models. Diffusion models in a fixed-time discretization can be interpreted as continuous GFlowNets of a certain structure. \citet{zhang2022unifying} notice that the discrete DAG flow-matching condition, central to mathematical foundations of GFlowNets \citep{bengio2023gflownet}, is analogous to the Fokker-Planck equation (Kolmogorov forward equation), underlying mathematical analysis of continuous-time diffusion models \citep{song2021scorebased}. In this work, we articulate another aspect of the relation between GFlowNets and diffusion models: in Section \ref{sec:method_gflow} we derive the expressions for mixture GFlowNet policies and classifier-guided GFlowNet policies analogous to those derived for diffusion models in prior work \citep{peluchetti2022nondenoising, lipman2023flow, sohl2015deep, dhariwal2021diffusion}.

\section{Compositional Sculpting of Generative Models}
\label{sec:method}

Consider a scenario where we can access a number of pre-trained generative models. Each of these ``base models'' gives rise to a generative distribution $p_i(x)$ over a common domain $\mathcal{X}$. We may wish to compose these distributions such that we can, say, draw samples that are likely to arise from $p_1(x)$ and $p_2(x)$, or that are likely to arise from $p_1(x)$ but not from $p_2(x)$. In other words, we wish to specify a composition whose generative distribution we can shape to emphasize and de-emphasize specific base models.

\subsection{Binary Composition Operations}
\label{sec:method_binary}

For the moment, let us focus on controllably composing two base models. One option is to specify the composition as a weighted combination $\widetilde p(x) = \sum_{i=1}^2 \omega_i p_i(x)$ with positive weights $\omega_1, \omega_2$ which sum to one. These weights allow us to set the prevalence of each base model in the composition. However, beyond that our control over the composition is limited. We cannot emphasize regions where, say, $p_1$ and $p_2$ both have high density, or de-emphasize regions where $p_2$ has high density. 

A much more flexible method for shaping a prior distribution $\widetilde p(x)$ to our desires is conditioning. Following Bayesian inference methodology, we know that when we condition $x$ on some observation $y$, the resulting posterior takes the form $\widetilde p(x | y) \propto \widetilde p(y | x) \widetilde p(x)$. Points $x$ that match observation $y$ according to $\widetilde p(y | x)$ will have increased density, whereas the density of points that do not match it decreases. Intuitively, the term $\widetilde p(y | x)$ has shaped the prior $\widetilde p(x)$ according to $y$.

If we define the observation $y_1 \in \{1,2\}$ as the event that $x$ was generated by a specific base model, we can shape the prior based on the densities of the base models. We start by defining a uniform prior over $y_k$, and by defining the conditional density $p(x | y_1 = i)$ to represent the fact that $x$ was generated from $p_i(x)$. This gives us the following model:
\begin{equation}
    \label{eq:binary_model_one_obs}
    \widetilde p(x \vert y_1 \!=\! 1) = p_1(x), \quad \widetilde p(x \vert y_1 \!=\!2) = p_2(x), \quad \widetilde p(y_1\!=\!1) = \widetilde p(y_1\!=\!2) = \frac{1}{2}, \quad \widetilde p(x) = \sum_{i=1}^2 \widetilde p(x \vert y_1\!=\!i) \widetilde p(y_1 \!=\!i),
\end{equation}
Notice that under this model, the prior $\widetilde p(x)$ is simply a uniform mixture of the base models. The posterior probability $\widetilde p(y_1 \!=\! 1 \vert x)$ implied by this model tells us how likely it is that $x$ was generated by $p_1(x)$ rather than $p_2(x)$:
\begin{equation}
    \widetilde p(y_1\!=\! 1 \vert x) = 1 - \widetilde p(y_1 \!=\! 2 \vert x) = \frac{p_1(x)}{p_1(x) + p_2(x)},
\end{equation} 
Note that $\widetilde p(y_1 \!=\! 1 \vert x)$ is the output of the optimal classifier trained to tell apart distributions $p_1(x)$ and $p_2(x)$.

The goal stated at the beginning of this section was to realize compositions which would generate samples likely to arise from both $p_1(x)$ and $p_2(x)$ or likely to arise from $p_1(x)$ but not $p_2(x)$. To this end we introduce a second observation $y_2 \in \{1,2\}$ such that $y_1$ and $y_2$ are independent and identically distributed given $x$. The resulting full model and inferred posterior are:
\begin{equation}
    \label{eq:p_tilde_x_y12}
    \widetilde p(x, y_1, y_2) = \widetilde p(x) \widetilde p(y_1 \vert x) \widetilde p(y_2 \vert x), \quad \widetilde p(x) = \frac{1}{2} p_1(x) + \frac{1}{2}p_2(x), \quad \widetilde p(y_k \!=\!i \vert x) = \frac{p_i(x)}{p_1(x) + p_2(x)}, \; k,i \in \{1, 2\},
\end{equation}
\begin{equation}
    \widetilde p(x \vert y_1 \!=\!i, y_2\!=\!j) \varpropto \widetilde p(x) \widetilde p(y_1 \!=\!i \vert x) \widetilde p(y_2 \!=\! j \vert x) \varpropto \frac{p_i(x)p_j(x)}{p_1(x) + p_2(x)}.
\end{equation}
The posterior $\widetilde p(x \vert y_1 \!=\!i, y_2\!=\!j)$ shows clearly how conditioning on the observations $y_1,y_2$ has shaped the prior mixture into a new expression which accentuates regions in the posterior where the observed base models $i,j$ have high density.

Conditioning on observations $y_1\!=\!1$ (``$x$ is likely to have been drawn from $p_1$ rather than $p_2$'') and $y_2\!=\!2$ (``$x$ is likely to have been drawn from $p_2$ rather than $p_1$''), or equivalently $y_1\!=\!2, y_2\!=\!1$, results in the posterior distribution
\begin{equation}
    \label{eq:dist_hm_v3}
    (p_1 \otimes p_2)(x) := p(x \vert y_1 \!=\!1, y_2\!=\!2) \varpropto \frac{p_1(x)p_2(x)}{p_1(x) + p_2(x)}.
\end{equation}%
We will refer to this posterior as the ``harmonic mean of $p_1$ and $p_2$'', and denote it as a binary operation $p_1 \otimes p_2$. Its value is high only at points that have high likelihood under both $p_1(x)$ and $p_2(x)$ at the same time (Figure~\ref{fig:demo_hm}). Thus, the harmonic mean is an alternative to the product operation for EBMs. The harmonic mean is commutative ($p_1 \otimes p_2 = p_2 \otimes p_1$) and is undefined when $p_1$ and $p_2$ have disjoint supports, since then the RHS of \eqref{eq:dist_hm_v3} is zero everywhere.

Conditioning on observations $y_1\!=\!1$ (``$x$ is likely to have been drawn from $p_1$ rather than $p_2$'') and $y_2\!=\!1$ (same) results in the posterior distribution
\begin{equation}
    \label{eq:dist_contrast_v3}
    (p_1 \contrast p_2)(x) := \widetilde p(x \vert y_1 \!=\!1, y_2\!=\!1) \varpropto \frac{\big( p_1(x) \big)^2}{p_1(x) + p_2(x)}.
\end{equation}%
We refer to this operation, providing an alternative to the negation operation in EBMs, as the ``contrast of $p_1$ and $p_2$'', and will denote it as a binary operator $(p_1 \contrast p_2)(x)$. The ratio in equation~\eqref{eq:dist_contrast_v3} is strictly increasing as a function of $p_1(x)$ and strictly decreasing as a function of $p_2(x)$, so that the ratio is high when $p_1(x)$ is high and $p_2(x)$ is low (Figure~\ref{fig:demo_contrast_12}). The contrast is not commutative ($p_1 \contrast p_2 \neq p_2 \contrast p_1$, unless $p_1 = p_2$). We will denote the reverse contrast as $p_1 \rcontrast p_2 = p_2 \contrast p_1$.

Note that the original distributions $p_1$ and $p_2$ can be expressed as mixtures of the harmonic mean and the contrast distributions:
\begin{gather*}
    p_1 = Z_\otimes (p_1 \otimes p_2) + Z_\contrast (p_1 \contrast p_2),
    \quad
    p_2 = Z_\otimes (p_1 \otimes p_2) + Z_\contrast (p_2 \contrast p_1),
    \quad Z_{\otimes} = \sum_x \frac{p_1(x)p_2(x)}{p_1(x) + p_2(x)} = 1 - Z_\contrast.
\end{gather*}

\paragraph{Controlling the individual contributions of $p_1$ and $p_2$ to the composition.}
We modify model \eqref{eq:p_tilde_x_y12} and introduce an interpolation parameter $\alpha$ in order to have more control over the extent of individual contributions of $p_1$ and $p_2$ to the composition:
\begin{subequations}
\label{eq:p_tilde_alpha}
\begin{gather}
    \widetilde p(x, y_1, y_2; \alpha) = \widetilde p(x) \widetilde p(y_1 \vert x) \widetilde p(y_2 \vert x; \alpha),
    ~~
    \widetilde p(x) = \frac{1}{2} p_1(x) + \frac{1}{2} p_2(x),
    \\
    \widetilde p(y_1 \! = \! i \vert x) = \frac{p_i(x)}{p_1(x) + p_2(x)},
    ~~
    \widetilde p(y_2 \! = \! i \vert x; \alpha) \! = \!\frac{\big(\alpha p_1(x)\big)^{[i=1]} \cdot \big((1 \! - \! \alpha) p_2(x)\big)^{[i=2]}}{\alpha p_1(x) + (1 - \alpha) p_2(x)},
\end{gather}
\end{subequations}
where $\alpha \in (0, 1)$ and $[\cdot]$ denotes the indicator function. Conditional distributions in this model give \textbf{harmonic interpolation}\footnote{the harmonic interpolation approaches $p_1$ when $\alpha \to 0$ and $p_2$ when $\alpha \to 1$} and \textbf{parameterized contrast}:%
\begin{equation}
    \label{eq:alpha_ops}
    (p_1 \otimes_{(1 - \alpha)} p_2)(x) \varpropto \frac{p_1(x) p_2(x)}{\alpha p_1(x) + (1 - \alpha) p_2(x)},
    \quad 
    (p_1 \contrast_{(1 - \alpha)} \ p_2)(x) \varpropto \frac{\big(p_1(x)\big)^2}{\alpha p_1(x) + (1 - \alpha)p_2(x)}.
\end{equation}

\begin{figure}[t]
    \begin{subfigure}[t]{0.24\linewidth}
        \centering
        \includegraphics[width=\textwidth]{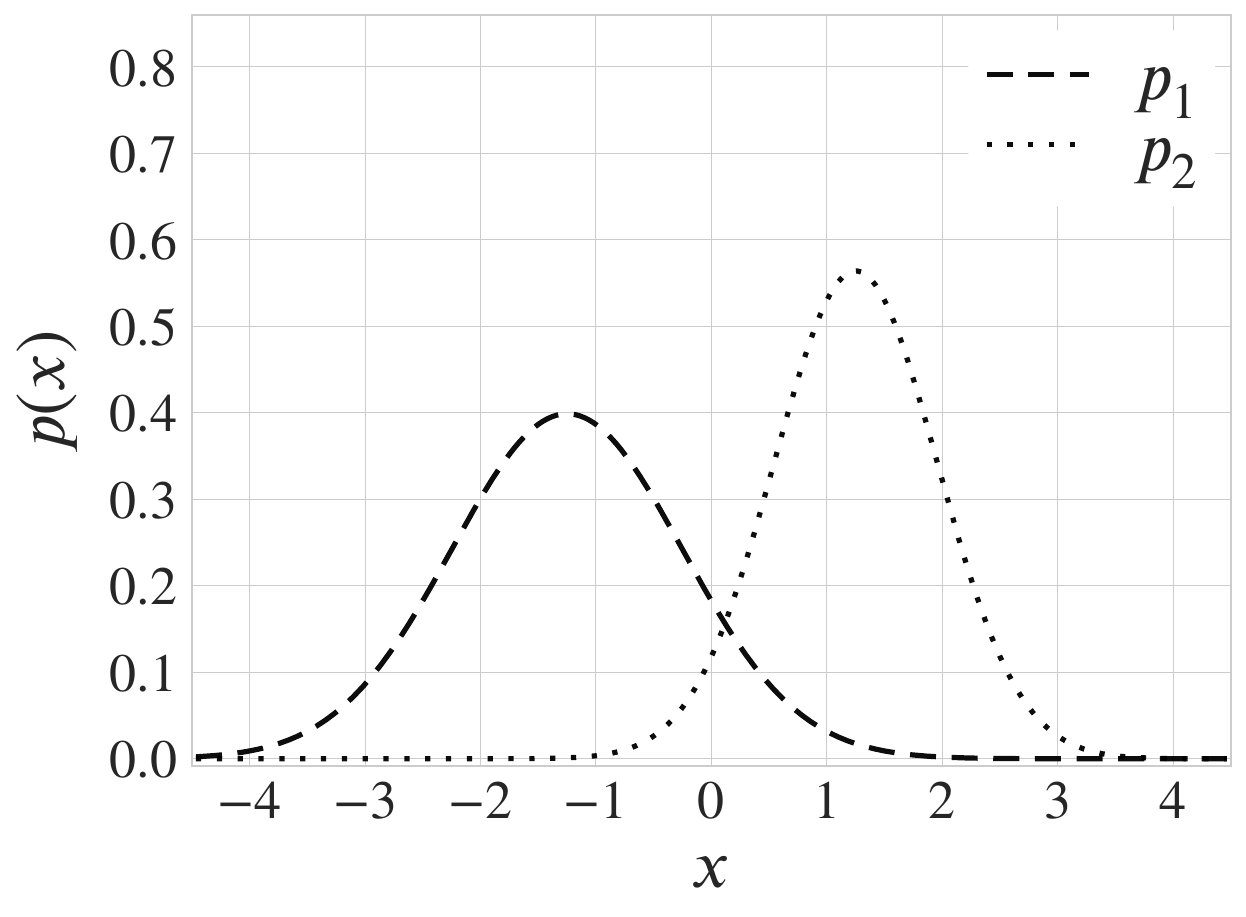}
        \caption{Base distributions $p_1$, $p_2$}
        \label{fig:vs_base_main}
    \end{subfigure}\hfill%
    \begin{subfigure}[t]{0.24\linewidth}
        \centering
        \includegraphics[width=\textwidth]{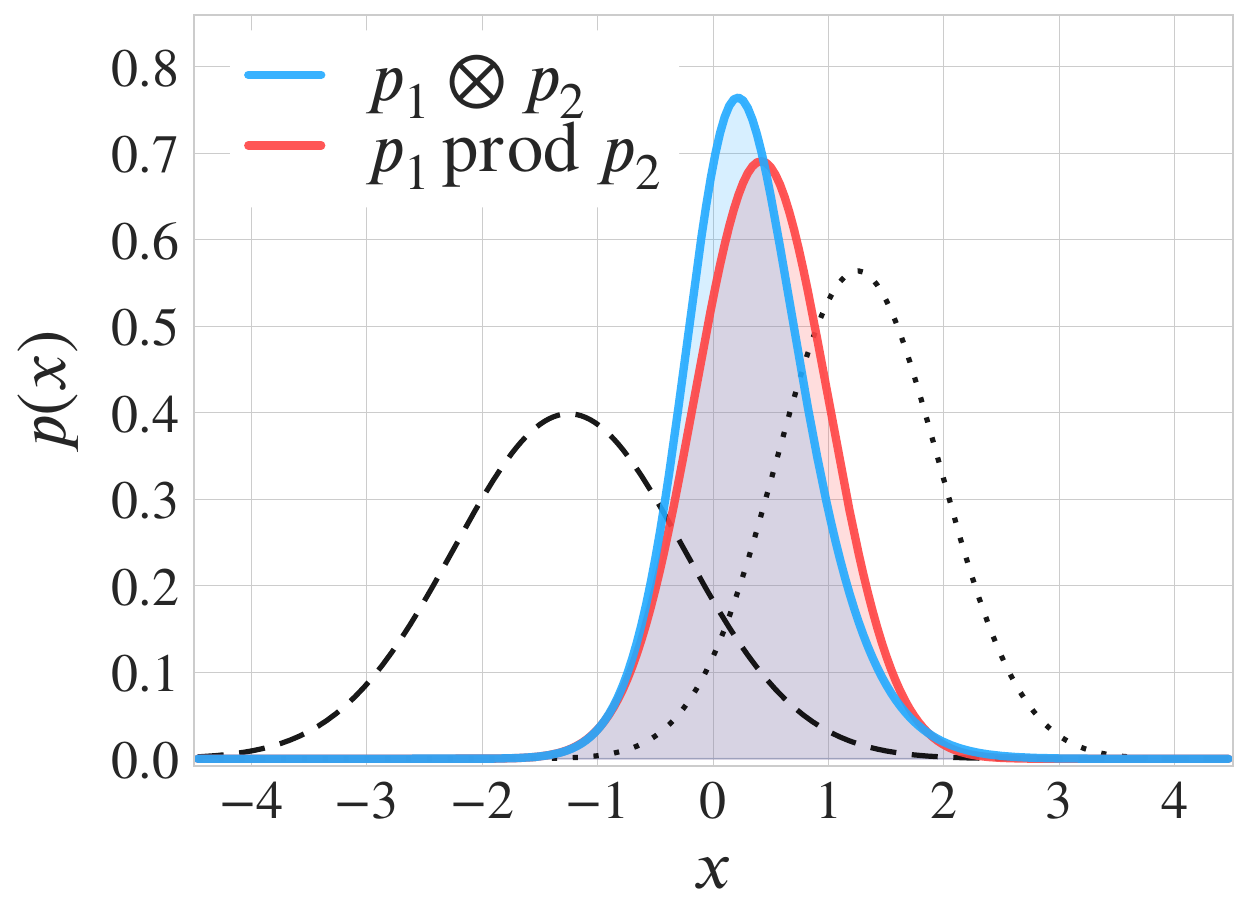}
        \caption{{\color{es-blue}HM} (ours), {\color{red}Product}}
        \label{fig:vs_intersect_main}
    \end{subfigure}\hfill%
    \begin{subfigure}[t]{0.24\linewidth}
        \centering
        \includegraphics[width=\textwidth]{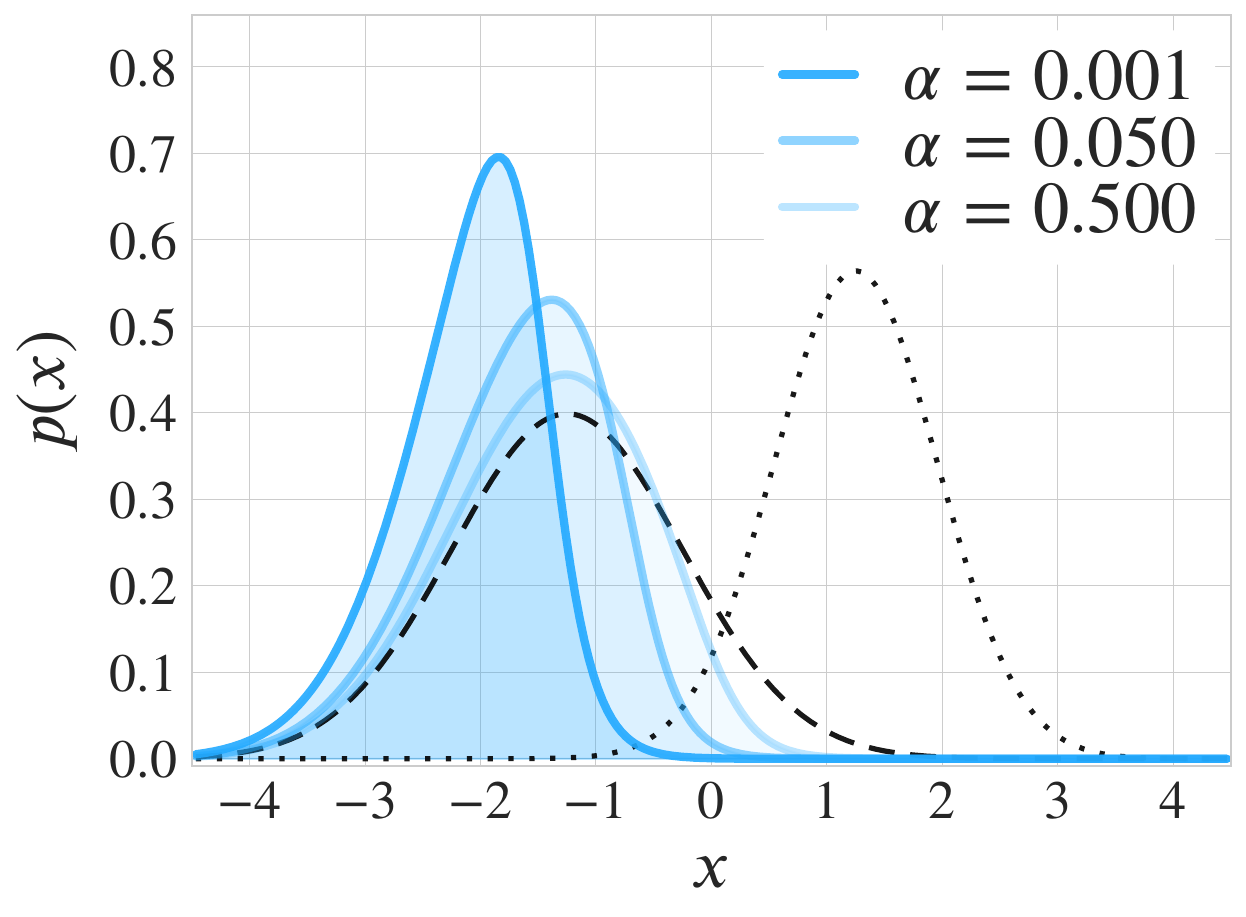}
        \caption{{\color{es-blue}Contrasts} $p_1 \contrast_{(1 - \alpha)}\;p_2$ (ours)}
        \label{fig:vs_contrasts_main}
    \end{subfigure}\hfill%
    \begin{subfigure}[t]{0.24\linewidth}
        \centering
        \includegraphics[width=\textwidth]{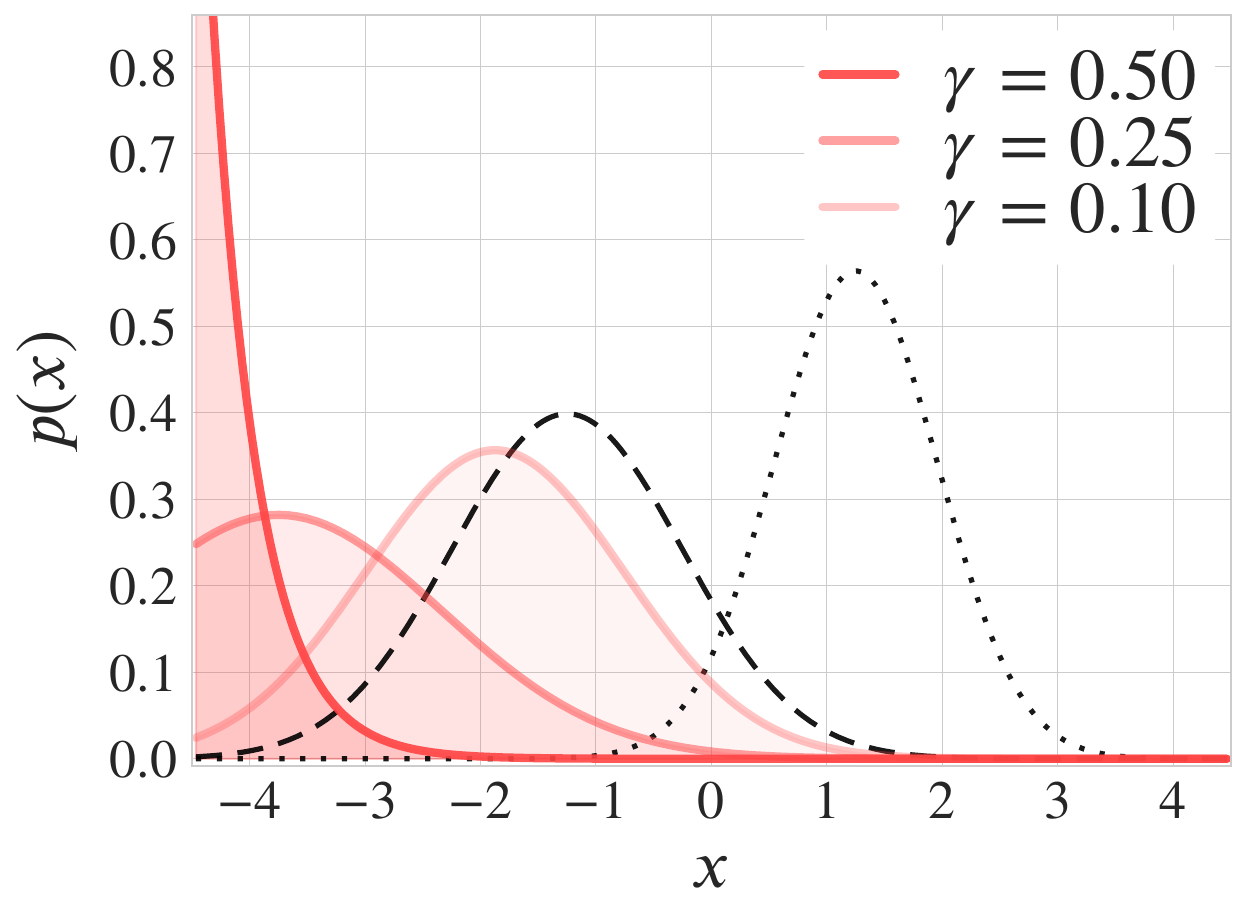}
        \caption{{\color{red}Negations} $p_1\,\text{neg}_\gamma\; p_2$}
        \label{fig:vs_negations_main}
    \end{subfigure}%
    \caption{\textbf{Compositional sculpting and energy operations applied to 1D Gaussian distributions.} (a) Densities of base 1D Gaussian distributions $p_1(x) = \mathcal{N}(x;-\nicefrac{5}{4}, 1)$, $p_2(x) = \mathcal{N}(x; \nicefrac{5}{4}, \nicefrac{1}{2})$. (b) harmonic mean $p_1 \otimes p_2$ and product $p_1\,\text{prod}\; p_2$ (c) parameterized contrasts $p_1 \contrast_{\!(1-\alpha)}\;p_2$ at different values of $\alpha$ (d) negations $p_1\,\text{neg}_\gamma\;p_2$ at different values of $\gamma$. The curves show the probability density functions of base distributions and their compositions.
    }
    \label{fig:vs_main}
    \vspace{-1em}
\end{figure}

\paragraph{Comparison with ``energy'' operations.} The harmonic mean and contrast operations we have introduced here are anologous to the product and negation operations for EBMs respectively. Although the harmonic mean and product operations are quite similar in practice, unlike the negation operation our proposed contrast operation always results in a valid probability distribution. Figure \ref{fig:vs_main} shows the results of these operations applied to two Gaussian distributions. The harmonic mean and product, shown in panel \subref{fig:vs_intersect_main}, are both concentrated on points that have high probability under both Gaussians. Figure \ref{fig:vs_contrasts_main} shows parameterized contrasts $p_1 \contrast_{(1 - \alpha)} p_2$ at different values of $\alpha$, and panel \subref{fig:vs_negations_main} shows negations $p_1\,\text{neg}_\gamma\; p_2$ at different values of $\gamma$. The effect of negation at $\gamma = 0.1$ resembles the effect of the contrast operation: the density retreats from the high likelihood region of $p_2$. However, as $\gamma$ increases to $0.5$ the distribution starts to concentrate excessively on the values $x < -3$. This is due to the instability of division $\nicefrac{p_1(x)}{(p_2(x))^\gamma}$ in regions where $p_2(x) \to 0$. Proposition \ref{prop:contrast_neg} in Appendix \ref{app:sculpting_vs_energy} shows that the negation $p_1\,\text{neg}_{\gamma}\; p_2$ in many cases results in an improper (non-normalizable) distribution.

\paragraph{Operation chaining.}
As the binary operations we have introduced result in proper distributions, we can create new $N$-ary operations by chaining binary (and $N$-ary) operations together. For instance, chaining binary harmonic means gives the harmonic mean of three distributions
\begin{equation}
    \label{eq:3dist_hm}
    ((p_1 \otimes p_2) \otimes p_3)(x) = (p_1 \otimes (p_2 \otimes p_3))(x) \varpropto \frac{p_1(x)p_2(x)p_3(x)}{p_1(x)p_2(x) + p_1(x) p_3(x) + p_2(x)p_3(x)}.
\end{equation}

\subsection{Compositional Sculpting: General Approach}
\label{sec:method_general}
The approach we used above for specifying compositions of two base models controlled by two observations can be generalized to compositions of $m$ base models $p_1(x), \dots, p_m(x)$ controlled by $n$ observations. At the end of the previous section we showed that operator chaining can already realize compositions of $m$ base models. However, our generalized method allows us to specify compositions more flexibly, and results in different compositions from operator chaining.
We propose to define an augmented probabilistic model $\widetilde p(x, y_1, \ldots, y_n)$ as a joint distribution over the original objects $x \in \mathcal{X}$ and $n$ observation variables $y_1 \in \mathcal{Y}, \ldots, y_n \in \mathcal{Y}$ where $\mathcal{Y} = \{1, \dots, n\}$. By defining appropriate conditionals $p(y_k | x)$ we will be able to controllably shape a prior $\widetilde p(x)$ into a posterior $\widetilde p(x \vert y_1, \ldots, y_n)$ based on the base models.

As in the binary case, we propose to use a uniformly-weighted mixture of the base distributions $\widetilde p(x) = \frac{1}{m} \sum_{i=1}^m p_i(x)$. The support of this mixture is the union of the supports of the base models: $\bigcup_{i=1}^m \operatorname{supp}\{p_i(x) \} = \operatorname{supp} \{\widetilde p(x)\}$. This is essential as the prior can only be shaped in places where it has non-zero density. As before we define the conditionals $p(y_k = i| x)$ to correspond to the observation that $x$ was generated by base model $i$. This resulting full model is
\begin{equation}
    \label{eq:p_tilde_x_ys}
    \widetilde p(x, y_1, \ldots, y_n) = \widetilde p(x) \prod_{k=1}^n \widetilde p(y_k \vert x),
    \qquad 
    \widetilde p(x) = \frac{1}{m}\sum_{i=1}^m p_i(x),
\end{equation}
\begin{equation}
\label{eq:p_tilde_y_marginal_and_conditional}
    \widetilde p(y_k\!=\!i) = \frac{1}{m} \quad  \forall k \in \{1, \dots, n\},
    \qquad 
    \widetilde p(y_k\!=\!i \vert x) = \frac{p_i(x)}{\sum_{j=1}^m p_j(x)} \quad \forall k \in \{1, \dots, n\}.
\end{equation}
Note that under this model the mixture can be represented as the marginal of the joint distribution $\widetilde p(x, y_k) = \widetilde p(x \vert y_k) \widetilde p(y_k)$ where $y \in \{1, \ldots, m\}$ for any one of the observations $y_k$.

The inferred posterior over $x$ for this model is
\begin{align}
\widetilde p(x \vert y_1 \!= \!i_1, \dots, y_n\! =\! i_n) &\propto \widetilde p(x) \widetilde p(y_1 \!=\! i_1, \dots, y_n \!=\! i_n \vert x) \label{eq:classifier_guidance_integral}
\\
&\varpropto
\widetilde p(x) \prod_{k=1}^n \widetilde p(y_k\! =\! i_k \vert x) \label{eq:classifier_guidance_sequential} 
\varpropto
\left( \prod_{k=1}^n p_{i_k}(x) 
\right) 
\bigg/ \left( \sum_{j = 1}^m p_j(x)
\right)^{n - 1}.
\end{align}%

The posterior $\widetilde p(x \vert y_1 \!=\! i_1, \dots, y_n \!=\! i_n)$ is a composition of distributions $\{p_i(x)\}_{i=1}^m$ that can be adjusted by choosing values for $y_1, \dots, y_n$. By adding or omitting an observation $y_k\!=\!i$ we can \emph{sculpt} the posterior to our liking, emphasizing or de-emphasizing regions of $\mathcal{X}$ where $p_i$ has high density. The observations can be introduced with multiplicities (e.g., $y_1 = 1, y_2 = 1, y_3 = 2$) to further strengthen the effect. Moreover, one can choose to introduce all observations simultaneously as in \eqref{eq:classifier_guidance_integral} or sequentially as in \eqref{eq:classifier_guidance_sequential}. As we show below (Section~\ref{sec:method_gflow} for GFlowNets; Section~\ref{sec:method_diffusion} for diffusion models), the composition \eqref{eq:classifier_guidance_integral} can be realized by a sampling policy that can be expressed as a function of the pre-trained (base) sampling policies.

\paragraph{Special instances and general formulation.}
The general approach outlined in this section is not limited to choices we made to construct the model in equation \eqref{eq:p_tilde_x_ys}, i.e. $\widetilde p(x)$ does not have to be a uniformly weighted mixture of the base distributions, $y_1, \ldots, y_n$ do not have to be independent and identically distributed given $x$, and different choices of the likelihood $\widetilde p(y\!=\!i \vert x)$ are possible. For instance, the parameterized binary operations \eqref{eq:alpha_ops} are derived from a model where the likelihoods of the observations $\widetilde p(y_1 \vert x)$, $\widetilde p(y_2 \vert x)$ differ.

\paragraph{Sampling from conditional distributions via classifier guidance.}
In Section \ref{sec:method_implementation} we introduce a method that allows us to sample from compositions of distributions $p_1, \dots, p_m$ implied by a chosen set of variables $y_1, \dots, y_n$. 
To do this, we note the similarity between \eqref{eq:classifier_guidance_integral} and classifier guidance. 
Indeed, we can sample from the posterior by applying classifier guidance to $\widetilde p(x)$. 
Classifier guidance can either be applied in a single shot as in \eqref{eq:classifier_guidance_integral}, or sequentially as in \eqref{eq:classifier_guidance_sequential}. 
Any chain of operations can be realized via sequential guidance with a new classifier trained at each stage of the chaining.
The classifier can be trained on samples generated from the pre-trained base models $p_1, \dots, p_m$. 
We show how to apply this idea to GFlowNets (Sections \ref{sec:method_gflow}, \ref{sec:classifier_gflow}) and diffusion models (Sections \ref{sec:method_diffusion}, \ref{sec:classifer_diffusion}).

\section{Compositional Sculpting of Iterative Generative Processes}
\label{sec:method_implementation}

\subsection{Composition of GFlowNets}
\label{sec:method_gflow}
We will now cover how the model above can be applied to compose GFlowNets, and how one can use classifier guidance to sample from the composition. 
Besides a sample $x$ from $p_i(x)$, a GFlowNet also generates a trajectory $\tau$ which ends in the state $x$. 
Thus, we extend the model $\widetilde p(x, y_1, \ldots, y_n)$, described above, and introduce $\tau$ as a variable with conditional distribution \( \widetilde{p}(\tau \vert y_k\! =\! i) = \prod_{t=0}^{|\tau| - 1} p_{i, F}(s_{t + 1} \vert s_t) \), where $p_{i, F}$ is the forward policy of the GFlowNet that samples from $p_i$.

Our approach for sampling from the composition is conceptually simple. Given $m$ base GFlowNets that sample from $p_1, \dots, p_m$ respectively, we start by defining the prior $\widetilde p(x)$ as the uniform mixture of these GFlowNets.
Proposition~\ref{prop:gflow_mix} shows that this mixture can be realized by a GFlowNet policy which can be constructed directly from the forward policies of the base GFlowNets. 
We then apply classifier guidance to this mixture to sample from the composition.
Proposition~\ref{prop:gflow_class} shows that classifier guidance results in a new GFlowNet policy which can be constructed directly from the GFlowNet being guided.

\begin{proposition}[GFlowNet mixture policy]%
\label{prop:gflow_mix}
Suppose distributions $p_1(x), \dots, p_m(x)$ are realized by GFlowNets with forward policies $p_{1,F}(\cdot \vert \cdot), \dots, p_{m,F}(\cdot \vert \cdot)$. Then, the mixture distribution $p_\text{M}(x) = \sum_{i=1}^m \omega_i p_i(x)$ with $\omega_1, \dots, \omega_m \geq 0$ and $\sum_{i=1}^m \omega_i = 1$ is realized by the GFlowNet forward policy%
\begin{align}
   p_{\text{M},F}(s' \vert s) = & \sum_{i=1}^m p(y = i \vert s) p_{i,F}(s' \vert s),
   \label{eq:gflow_mix}
\end{align}%
where $y$ is a random variable such that the joint distribution of a GFlowNet trajectory $\tau$ and $y$ is given by $p(\tau, y\!=\!i) = \omega_i p_i(\tau)$ for $i \in \{1, \dots, m\}$.
\end{proposition}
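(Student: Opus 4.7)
The strategy is to show that running the proposed forward policy $p_{M,F}$ from $s_0$ induces state visit marginals that match $p_M(s) := \sum_{i=1}^m \omega_i p_i(s)$ everywhere in the DAG, and in particular at terminal states $x$, which establishes the claim.

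My first step is to unpack $p(y = i \vert s)$. Using the joint $p(\tau, y=i) = \omega_i p_i(\tau)$ together with DAG acyclicity (each state appears in a trajectory at most once), Bayes' rule gives
\[
p(y = i \vert s) \;=\; \frac{\omega_i \, p_i(s)}{p_M(s)}, \qquad p_i(s) := \mathbb{P}_{\tau \sim p_i}[\,s \in \tau\,].
\]
Substituting this into \eqref{eq:gflow_mix} collapses the proposed policy to the clean ``edge flow divided by state flow'' form
\[
p_{M,F}(s' \vert s) \;=\; \frac{\sum_i \omega_i \, p_i(s) \, p_{i,F}(s' \vert s)}{p_M(s)} \;=\; \frac{p_M(s \to s')}{p_M(s)},
\]
where $p_M(s \to s') := \sum_i \omega_i p_i(s \to s')$ is the mixture's edge-visit marginal.

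Next I would prove by induction over the topological order of the DAG that the state marginal $\widehat p(s)$ induced by $p_{M,F}$ equals $p_M(s)$ at every state. The base case $\widehat p(s_0) = 1 = p_M(s_0)$ is immediate. For the inductive step, expand $\widehat p(s) = \sum_{s'' : s'' \to s} \widehat p(s'')\, p_{M,F}(s \vert s'')$; plugging in the inductive hypothesis $\widehat p(s'') = p_M(s'')$ and the derived expression for $p_{M,F}$, the $p_M(s'')$ factors cancel and the sum reduces to $\sum_{s'' \to s} p_M(s'' \to s) = p_M(s)$, by the flow-matching identity for the mixture (which inherits linearly from each base GFlowNet). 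Specializing to $s = x \in \mathcal{X}$ delivers $\widehat p(x) = p_M(x) = \sum_i \omega_i p_i(x)$, which is exactly the proposition's conclusion.

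The only real subtlety---rather than a genuine obstacle---is the well-definedness of $p_{M,F}$ when $p_M(s) = 0$; but in that case $p_i(s) = 0$ for every $i$ with $\omega_i > 0$, so such states are unreachable under every base policy and under $p_{M,F}$ alike, and can be excluded from the policy's domain. I would also highlight, as a cautionary remark, that the trajectory-level distribution produced by $p_{M,F}$ generally does \emph{not} equal the mixture $\sum_i \omega_i p_i(\tau)$ (a single Markov policy reshuffles mass across trajectories that share intermediate states), but the marginal on terminal states $x$---all that the GFlowNet framework cares about---does match, which is precisely what the proposition asserts.
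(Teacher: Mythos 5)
Your proof is correct and follows essentially the same route as the paper's: apply Bayes' rule to rewrite $p(y\!=\!i\vert s)$ as $\omega_i p_i(s)/\sum_j \omega_j p_j(s)$, then induct over the DAG on state-visit probabilities, using the linearity of the flow recurrence $p_i(s) = \sum_{s_*} p_i(s_*)\, p_{i,F}(s\vert s_*)$ to close the inductive step. Your added remarks on well-definedness at states with $p_M(s)=0$ and on the trajectory-level distribution not being the mixture of trajectory distributions are correct refinements that the paper does not state explicitly, but the core argument is the same.
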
%
The proof of Proposition \ref{prop:gflow_mix} is provided in Appendix \ref{app:proof_prop_gflow_mix}.

\begin{proposition}[GFlowNet classifier guidance]
\label{prop:gflow_class}
Consider a joint distribution $p(x, y)$ over a discrete space $\mathcal{X} \times \mathcal{Y}$ such that the marginal distribution $p(x)$ is realized by a GFlowNet with forward policy $p_F(\cdot \vert \cdot)$.
Further, assume that the joint distribution of $x$, $y$, and GFlowNet trajectories $\tau = (s_0 \to \ldots \to s_n = x)$ decomposes as $p(\tau, x, y) = p(\tau, x)p(y \vert x)$, i.e. $y$ is independent of the intermediate states $s_0, \ldots, s_{n_1}$ in $\tau$ given $x$.
Then, 
\begin{enumerate}
\item For all non-terminal nodes $s \in \mathcal{S} \setminus \mathcal{X}$ in the GFlowNet DAG $(\mathcal{S}, \mathcal{A})$, the probabilities $p(y \vert s)$ satisfy%
\begin{equation}%
    \label{eq:gflow_y_cond_s}
    p(y \vert s) = \sum_{s': (s \to s') \in \mathcal{A}} p_F(s' \vert s) p(y \vert s').
\end{equation}%
\item The conditional distribution $p(x \vert y)$ is realized by the classifier-guided policy%
\begin{equation}%
    \label{eq:gflow_class}
    p_F(s' \vert s, y) = p_F(s' \vert s) \frac{p(y \vert s')}{p(y \vert s)}.
\end{equation}%
\end{enumerate}
\end{proposition}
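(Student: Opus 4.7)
The plan is to prove the two parts in sequence, with Part 1 providing the normalization needed to make Part 2 well-defined.

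For Part 1, I would interpret $p(y \vert s)$ for a non-terminal $s$ as the probability of $y$ when the future trajectory from $s$ is rolled out under the forward policy $p_F$; this is well-defined even if multiple trajectories can reach $s$, because the forward policy is Markovian so the distribution over the remaining trajectory depends only on $s$. Then I would apply the law of total probability conditioning on the next state $s'$: writing $p(y \vert s) = \sum_{s' : (s \to s') \in \mathcal{A}} p_F(s' \vert s)\, p(y \vert s', s)$, and invoking the Markov property plus the assumed independence $y \perp (s_0, \ldots, s_{n-1}) \vert x$, I would conclude $p(y \vert s', s) = p(y \vert s')$, giving equation~\eqref{eq:gflow_y_cond_s}. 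A quick sanity check is that this identity forces $\sum_{s'} p_F(s' \vert s, y) = 1$, so the guided policy in \eqref{eq:gflow_class} is a valid transition distribution.

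For Part 2, the core observation is that the trajectory probabilities under the guided policy telescope. For any trajectory $\tau = (s_0 \to s_1 \to \cdots \to s_n = x)$ I would compute
\begin{equation*}
p_F(\tau \vert y) = \prod_{t=0}^{n-1} p_F(s_{t+1} \vert s_t)\,\frac{p(y \vert s_{t+1})}{p(y \vert s_t)} = p_F(\tau)\,\frac{p(y \vert x)}{p(y \vert s_0)},
\end{equation*}
using $s_n = x$. Marginalizing over all trajectories terminating at $x$ gives
\begin{equation*}
p_F(x \vert y) = \sum_{\tau : s_n = x} p_F(\tau \vert y) = \frac{p(y \vert x)}{p(y \vert s_0)} \sum_{\tau : s_n = x} p_F(\tau) = \frac{p(y \vert x)\,p(x)}{p(y \vert s_0)}.
\end{equation*}
Finally, since every GFlowNet trajectory starts deterministically at $s_0$, we have $p(y \vert s_0) = p(y)$ (this is just Part 1 unrolled all the way from the root, or equivalently the fact that conditioning on a measure-one event is trivial). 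Bayes' rule then yields $p_F(x \vert y) = p(y \vert x) p(x) / p(y) = p(x \vert y)$.

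The main obstacle, such as it is, is conceptual rather than technical: one has to be careful about what $p(y \vert s)$ means for intermediate states $s$, since the GFlowNet DAG may admit multiple paths into $s$. The resolution is that under $p_F$ the remaining trajectory from $s$ only depends on $s$, so the quantity is unambiguous and the one-step recursion in Part 1 holds cleanly; after that, Part 2 is essentially a telescoping product combined with Bayes' rule.
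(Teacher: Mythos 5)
Your proof is correct, but it takes a genuinely different route from the paper on both claims. For Claim 1, you argue forward in time: you identify $p(y\vert s)$ with the law of $y$ under a roll-out of $p_F$ from $s$ (which is unambiguous by the Markov property even when $s$ has several parents), and then apply the law of total probability over the next state. The paper instead argues backward in time: it invokes the existence of the unique backward policy $p_B$ and the detailed balance condition $p(s)p_F(s'\vert s)=p(s')p_B(s\vert s')$ from the GFlowNet foundations, decomposes $p(s,y)=\sum_{s'}p_B(s\vert s')p(s',y)$, and converts the backward kernel to the forward one. Your version is more elementary and self-contained; the paper's buys consistency with the machinery it already cites. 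For Claim 2, you telescope the ratio $p(y\vert s_{t+1})/p(y\vert s_t)$ along a whole trajectory to get $p_F(\tau\vert y)=p_F(\tau)\,p(y\vert x)/p(y\vert s_0)$, sum over trajectories ending at $x$, and finish with $p(y\vert s_0)=p(y)$ and Bayes' rule; the paper instead defines $p_y(s)$ via the one-step recurrence for the guided policy and proves $p_y(s)=p(s\vert y)$ by induction over the DAG. The two are essentially the global and local forms of the same cancellation (your telescoping is what the paper's inductive step performs one edge at a time), but yours yields the stronger intermediate statement about the conditional law of entire trajectories, while the paper's induction establishes $p_y(s)=p(s\vert y)$ for every intermediate state, mirroring the structure of its proof of the mixture result. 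Both arguments implicitly require $p(y\vert s)>0$ along reachable states for the guided policy to be defined, and both correctly lean on Claim 1 for normalization, as you note.
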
%
Note that \eqref{eq:gflow_y_cond_s} ensures that $p_F(s' \vert s, y)$ is a valid policy, i.e. $\sum_{s': (s \to s') \in \mathcal{A}} p_F(s' \vert s, y) = 1$.
The proof of Proposition \ref{prop:gflow_class} is provided in Appendix \ref{app:proof_prop_gflow_class}.

Proposition \ref{prop:gflow_mix} is an analogous to results on mixtures of diffusion models~(\citealt[][Theorem 1]{peluchetti2022nondenoising}, \citealt[][Theorem 1]{lipman2023flow}).
Proposition \ref{prop:gflow_class} is analogous to classifier guidance for diffusion models \citep{sohl2015deep, dhariwal2021diffusion}.
To the best of our knowledge, our work is the first to derive both results for GFlowNets.

Both equations \eqref{eq:gflow_mix} and \eqref{eq:gflow_class} involve the inferential distribution $p(y \vert s)$.
Practical implementations of both mixture and conditional forward policies, therefore, require training a classifier on trajectories sampled from the given GFlowNets.

Theorem~\ref{thm:composed_flows} summarizes our approach. 

\begin{theorem}%
\label{thm:composed_flows}
Suppose distributions $p_1(x), \dots, p_m(x)$ are realized by GFlowNets with forward policies $p_{1,F}(\cdot \vert \cdot)$, $\dots$, $p_{m,F}(\cdot \vert \cdot)$ respectively. Let $y_1, \dots, y_n$ be random variables defined by \eqref{eq:p_tilde_x_ys}. Then, the conditional $\widetilde p(x \vert y_1, \dots, y_n)$ is realized by the forward policy%
\begin{equation}
    \label{eq:composed_flows}
    p_F(s' \vert s, y_1, \dots, y_n) = \frac{\widetilde p(y_1, \dots, y_n \vert s')}{\widetilde p(y_1, \dots, y_n \vert s)} \sum_{i=1}^m p_{i,F}(s' \vert s) \widetilde p(y\!= \!i \vert s)
\end{equation}%
\end{theorem}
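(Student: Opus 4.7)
The proof is essentially a two-step composition of Propositions~\ref{prop:gflow_mix} and~\ref{prop:gflow_class}, and my plan is to present it exactly that way.

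\textbf{Step 1 (build the mixture GFlowNet).} First I would invoke Proposition~\ref{prop:gflow_mix} with uniform weights $\omega_i = 1/m$. This gives a GFlowNet forward policy
\begin{equation*}
    \widetilde p_F(s' \vert s) = \sum_{i=1}^m \widetilde p(y = i \vert s)\, p_{i,F}(s' \vert s)
\end{equation*}
that realizes the prior $\widetilde p(x) = \tfrac{1}{m}\sum_i p_i(x)$ of \eqref{eq:p_tilde_x_ys}. Here $y$ is the mixture indicator from Proposition~\ref{prop:gflow_mix}, so $\widetilde p(y=i)=1/m$ and, integrating over terminal descendants of $s$, $\widetilde p(y=i\vert s) = \sum_x \widetilde p(y=i\vert x)\widetilde p(x\vert s)$. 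A small sanity check is that this $y$ has exactly the same joint law with $x$ (and therefore with every intermediate $s$) as each observation $y_k$ in model \eqref{eq:p_tilde_x_ys}: both satisfy the marginal $1/m$ and the conditional $p_i(x)/\sum_j p_j(x)$. Hence writing $\widetilde p(y=i\vert s)$ in the final formula \eqref{eq:composed_flows} is unambiguous.

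\textbf{Step 2 (apply classifier guidance).} Next I would treat the tuple $Y := (y_1,\ldots,y_n)$ as the guidance variable and apply Proposition~\ref{prop:gflow_class} to the mixture GFlowNet from Step~1. The key hypothesis to verify is that $Y$ is conditionally independent of the intermediate states of a trajectory $\tau=(s_0\to\cdots\to s_{|\tau|}=x)$ given $x$. This follows directly from the extended joint $\widetilde p(\tau,x,y_1,\ldots,y_n) = \widetilde p(\tau,x)\prod_{k=1}^n\widetilde p(y_k\vert x)$, since each $\widetilde p(y_k\vert x)$ depends only on the terminal $x$. Proposition~\ref{prop:gflow_class} then yields
\begin{equation*}
    p_F(s'\vert s, y_1,\ldots,y_n) = \widetilde p_F(s'\vert s)\, \frac{\widetilde p(y_1,\ldots,y_n\vert s')}{\widetilde p(y_1,\ldots,y_n\vert s)},
\end{equation*}
and substituting the mixture expression from Step~1 gives exactly~\eqref{eq:composed_flows}.

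\textbf{Main obstacle.} Very little computation is actually needed; the content is almost entirely bookkeeping. The one point that requires care is reconciling notation: Proposition~\ref{prop:gflow_mix} introduces its own mixture-indicator variable $y$, whereas the theorem writes $\widetilde p(y=i\vert s)$ in the same formula that uses the observations $y_1,\ldots,y_n$. I would either (a) explicitly identify this $y$ with any single $y_k$ via their identical conditionals (a one-line argument using \eqref{eq:p_tilde_y_marginal_and_conditional}), or (b) carry the indicator as a separate symbol and remark at the end that it matches the theorem's notation. Option (a) is cleaner. Beyond this, the only subtlety worth highlighting is that Proposition~\ref{prop:gflow_class} requires $Y$ to be independent of internal trajectory states given $x$, which must be checked against the extension of the model from $(x, y_1,\ldots,y_n)$ to $(\tau, x, y_1,\ldots,y_n)$ introduced at the start of Section~\ref{sec:method_gflow}; this extension was set up precisely so that this independence holds, so the check is immediate.
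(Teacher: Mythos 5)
Your proposal is correct and follows essentially the same route as the paper's own proof: invoke Proposition~\ref{prop:gflow_mix} with uniform weights to realize the prior $\widetilde p(x)$ as a mixture policy, then apply Proposition~\ref{prop:gflow_class} with the tuple $(y_1,\dots,y_n)$ as the guidance variable. Your explicit verification of the conditional-independence hypothesis and the reconciliation of the mixture indicator $y$ with the observations $y_k$ are points the paper's proof leaves implicit, so your write-up is, if anything, slightly more careful.
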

Note that the result of conditioning on observations $y_1, \dots, y_n$ is just another GFlowNet policy.
Therefore, to condition on more observations and build up the composition further, we can simply apply classifier guidance again to the policy constructed in Theorem~\ref{thm:composed_flows}.

\subsection{Classifier Training (GFlowNets)}
\label{sec:classifier_gflow}
The evaluation of policy \eqref{eq:composed_flows} requires knowledge of the probabilities $\widetilde p(y_1, \dots, y_n \vert s)$. 
The probabilities $\widetilde p(y \vert s)$ required for constructing the mixture can be derived from $\widetilde p(y_1, \dots, y_n \vert s)$.
These probabilities can be estimated by a classifier fitted to trajectories sampled from the base GFlowNets $p_1, \dots, p_m$.
Below we specify the sampling scheme and the objective for this classifier.

Let $\widetilde Q_\phi(y_1, \dots, y_n \vert s)$ be a classifier with parameters $\phi$ that we wish to train to approximate the ground-truth conditional: $\widetilde Q_\phi(y_1, \dots, y_n \vert s) \approx \widetilde p(y_1, \dots, y_n \vert s)$. Note that $\widetilde Q_\phi$ represents the joint distribution of $y_1, \dots, y_n$ given a state $s$. Under the model \eqref{eq:p_tilde_x_ys} the variables $y_1, \dots, y_n$ are dependent given a state $s \in \mathcal{S} \setminus \mathcal{X}$, but, are independent given a terminal state $x \in \mathcal{X}$. This observation motivates separate treatment of terminal and non-terminal states.

\paragraph{Learning the terminal state classifier.} For a terminal state \(x\), the variables $y_1, \dots, y_n$ are independent and identically distributed. Hence we can use the factorization $\widetilde Q_\phi(y_1, \dots, y_n \vert x) = \prod_{k=1}^n \widetilde Q_\phi(y_k \vert x)$. 
Moreover, all distributions on the \textit{r.h.s.} must be the same. In other words, for the terminal classifier it is, therefore, enough to learn just $\widetilde Q_\phi(y_1 \vert x)$. This marginal classifier can be learned by minimizing the cross-entropy loss%
\begin{equation}
    \label{eq:cls_loss_term}
    \mathcal{L}_\text{T}(\phi) = \E\limits_{(\widehat x, \widehat y_1) \sim  \widetilde p(x, y_1)} \left[
        - \log \widetilde Q_\phi(y_1 \! = \! \widehat y_1 \vert x \! =\! \widehat x)
    \right].
\end{equation}%
Sampling from $\widetilde p(x, y_1)$ can be performed according to the factorization $\widetilde p(y_1) \widetilde p(x \vert y_1)$. First, $\widehat y_1$ is sampled from $\widetilde p(y_1)$, which is uniform under our choice of $\widetilde p(x)$. Then, $\widehat x \vert (y_1 \! = \! \widehat y_1)$ is generated from the base GFlowNet $p_{\widehat y_1}$. For our choice of $\widetilde p(x)$, we can derive from \eqref{eq:p_tilde_x_ys} that $\widetilde p(x \vert y \! = \! \widehat y_1) = p_{\widehat y_1}(x)$.

\paragraph{Learning the non-terminal state classifier.}
Given a non-terminal state $s \in \mathcal{S} \setminus \mathcal{X}$, we need to model $y_1, \dots, y_n$ jointly. In order to train the classifier one needs to sample tuples $(\widehat s, \widehat y_1, \dots, \widehat y_n)$. Non-terminal states $s$ can be generated as intermediate states in trajectories $\tau = (s_0 \to s_1 \to \ldots \to x)$. Given a sampled trajectory $\widehat \tau$ and a set of labels $\widehat y_1, \dots, \widehat y_n$ we denote the total cross-entropy loss of all non-terminal states in $\widehat \tau$ by%
\begin{equation}
    \label{eq:traj_ce}
    \ell(\widehat \tau, \widehat y_1, \dots, \widehat y_n; \phi) = \sum_{t=0}^{|\tau| - 1} \left[ 
        -\log \widetilde Q_\phi(y_1 \! =\!  \widehat y_1, \dots, y_n \!=\!  \widehat y_n \vert s \!=\!  \widehat s_t)
    \right].
\end{equation}%
The pairs $(\widehat \tau, \widehat y_1)$ can be generated via a sampling scheme similar to the one used for the terminal state classifier loss above: 1) $\widehat y_1 \sim \widetilde p(y_1)$ and 2) $\widehat \tau \sim p_{\widehat y_1}(\tau)$. Sampling $\widehat y_2, \dots, \widehat y_n$ given $\widehat \tau$ (and $\widehat x$, the terminal state of $\widehat \tau$\,) requires access to the values $\widetilde{p}(y_k \!=\! \widehat y_k \vert \widehat{x})$, but these are not directly available.
However, if the terminal classifier is learned as described above, the estimates $w_i(\widehat x; \phi) = \widetilde Q_\phi(y_1 \!=\! i \vert x \!=\! \widehat x)$ can be used instead.

We described a training scheme where the loss and the sampling procedure for the non-terminal state classifier rely on the estimates produced by the terminal state classifier. In principle, one could use a two-phase procedure by first learning the terminal state classifier, and then learning the non-terminal state classifier using the estimates provided by the fixed terminal state classifier. However, it is possible to train both classifiers in one run, provided that we address potential training instabilities due to the feedback loop between the non-terminal and terminal classifiers. We employ the ``target network'' technique developed in the context of deep Q-learning \cite{mnih2015human}. We introduce a ``target network''  parameter vector $\overline{\phi}$ which is used to produce the estimates $w_i(\widehat x; \overline{\phi})$ for the non-terminal state loss. We update $\overline{\phi}$ as the exponential moving average of the recent iterates of $\phi$.

After putting all components together the training loss for the non-terminal state classifier is%
\begin{equation}
    \label{eq:cls_loss_nonterm}
     \mathcal{L}_N(\phi, \overline{\phi})  =  \E\limits_{(\widehat \tau, \widehat y_1) \sim \widetilde p(\tau, y_1)} \left[
        \sum_{\widehat y_2 =1}^m \dots \sum_{\widehat y_n =1}^m \left ( \prod_{k=2}^n w_{\widehat y_k}(\widehat x;  \overline{\phi}) \right ) \ell(\widehat \tau,  \widehat y_1, \dots, \widehat y_n;  \phi)
    \right].
\end{equation}%
We refer the reader to Appendix \ref{app:classifier_loss} for a more detailed derivation of the loss \eqref{eq:cls_loss_nonterm}.

Note that equation \eqref{eq:cls_loss_nonterm} involves summation over $\widehat{y}_2, \ldots \widehat{y}_n$ with $m^{n-1}$ terms in the sum. If values of $n$ and $m$ are small, the sum can be evaluated directly. In general, one could trade off estimation accuracy for improved speed by replacing the summation with Monte Carlo estimation. In this case, the values $\widehat{y}_k$ are sampled from the categorical distributions $Q_{\overline{\phi}}(y|x)$. Note that labels can be sampled in parallel since $y_i$ are independent given $x$.

Algorithm \ref{alg:cls_training_1_step} shows the complete classifier training procedure. 

\begin{algorithm}[!h]
	\caption{Compositional Sculpting: classifier training}
	\label{alg:cls_training_1_step}
	\begin{algorithmic}[1]
		\STATE{Initialize $\phi$ and set $\overline{\phi} = \phi$}
		\FOR{$\text{step} = 1, \ldots, \text{num\_steps}$}
		\FOR{$i = 1, \ldots, m$} 
		\STATE{Sample $\widehat \tau_i \sim p_i(\tau)$}
		\ENDFOR

		\STATE{$ 
			\begin{aligned}
				&\mathcal{L}_T(\phi) = 
				-\sum_{i = 1}^m \log \widetilde Q_\phi(y_1 \! = \! i \vert x \! = \! \widehat x_{i})
			\end{aligned}
			$}
		\COMMENT{Terminal state loss \eqref{eq:cls_loss_term}}
		
		\STATE{$
			\begin{aligned}
				& w_{i}(\widehat x_{j}; \overline{\phi}) \! = \! \widetilde Q_{\overline {\phi}}\,(y_k \! = \! i \vert x \! = \! \widehat x_j), ~\, i, j \in \{1, \ldots m\}
			\end{aligned}
			$} 
		\COMMENT{Probability estimates}

		\STATE{$
			\begin{aligned}
				&\mathcal{L}_N(\phi, \overline{\phi}) =
				\sum_{\widehat y_1 = 1}^m \ldots \sum_{\widehat y_n =1}^m  \left ( \prod_{k=2}^n w_{\widehat y_k}(\widehat x_{\widehat y_1};  \overline{\phi}) \right ) \, \ell(\widehat \tau_{\widehat y_1}, \widehat y_1, \ldots \widehat y_n; \phi)
			\end{aligned}
			$}
		\COMMENT{Non-terminal state loss \eqref{eq:traj_ce}-\eqref{eq:cls_loss_nonterm}}
		
		\STATE{$
			\begin{aligned}
				& \mathcal{L}(\phi, \overline{\phi}) = \mathcal{L}_T(\phi) + \gamma(\text{step}) \cdot \mathcal{L}_ N(\phi, \overline{\phi})
			\end{aligned}
			$} 
		\STATE{Update $\phi$ using $\nabla_\phi \mathcal{L}(\phi, \overline{\phi})$; update $\overline{\phi} = \beta \overline{\phi} + (1 - \beta) \phi$}
		\ENDFOR
	\end{algorithmic}
\end{algorithm}

\subsection{Composition of Diffusion Models}
\label{sec:method_diffusion}
In this section, we show how the method introduced above can be applied to diffusion models. First, we adapt the model we introduced in \eqref{eq:p_tilde_x_ys}-\eqref{eq:classifier_guidance_sequential} to diffusion models. A diffusion model trained to sample from $p_i(x)$ generates a trajectory $\tau = \{ x_t \}_{t=0}^T$ over a range of time steps which starts with a randomly sampled state $x_T$ and ends in $x_0$, where $x_0$ has distribution $p_{i, t=0}(x) = p_i(x)$. Thus, we must adapt our model to reflect this. We introduce a set of mutually dependent variables $x_t$ for $t \in (0,T]$ with as conditional distribution the transition kernel of the diffusion model $p_i(x_t | x_0)$.

Given $m$ base diffusion models that sample from $p_1, \dots, p_m$ respectively, we define the prior $\widetilde p(x)$ as a mixture of these diffusion models. Proposition~\ref{prop:diff_mixture_sde} shows that this mixture is a diffusion model that can be constructed directly from the base diffusion models. We then apply classifier guidance to this mixture to sample from the composition. We present an informal version of the proposition below. The required assumptions and the
proof are provided in Appendix \ref{app:proof_diff_mixture_sde}.

\begin{proposition}[Diffusion mixture SDE]
\label{prop:diff_mixture_sde}
Suppose distributions $p_1(x), \dots, p_m(x)$ are realized by diffusion models with forward SDEs $dx_{i, t} = f_{i,t}(x_{i, t})\,dt + g_{i,t}\, dw_{i, t}$ and score functions $s_{i, t}(\cdot)$, respectively. Then, the mixture distribution $p_\text{M}(x) = \sum_{i=1}^m \omega_i p_i(x)$ with $\omega_1 \dots \omega_m \geq 0$ and $\sum_{i=1}^m \omega_i~=~1$ is realized by a diffusion model with forward SDE%
\begin{equation}%
\label{eq:diff_mixture_SDE_f}%
   d x_t = \underbrace{\left[\sum_{i=1}^m p(y\!=\!i \vert x_t) f_{i, t}(x_t) \right]}_{f_{M, t}(x_t)} dt + \underbrace{\sqrt{\sum_{i=1}^m p(y\!=\!i \vert x_t) g_{i, t}^2}}_{g_{M, t}(x_t)} \, d{w_t},
\end{equation}
and backward SDE
\begin{equation}%
\label{eq:diff_mixture_SDE_b}%
   d x_t = \left[\sum_{i=1}^m p(y\!=\! i \vert x_t) \Big(f_{i, t}(x_t) - g_{i, t}^2 s_{i, t}(x_t) \Big)\right] dt + \sqrt{\sum_{i=1}^m p(y\!=\!i \vert x_t) g_{i, t}^2} \,d\overline{w}_{t},
\end{equation}
with%
\begin{gather}%
\label{eq:diff_lambda}%
    p(y \!=\! i \vert x_t) = \frac{\omega_i p_{i, t}(x_t)}{\sum_{j=1}^{m} \omega_j p_{j, t}(x_t)}.
\end{gather}%
\end{proposition}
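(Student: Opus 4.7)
The plan is to argue via the Fokker--Planck (forward Kolmogorov) equation, in parallel with the marginal-flow constructions of \citep{peluchetti2022nondenoising, lipman2023flow}. Introduce a latent index $y$ with prior $p(y=i)=\omega_i$ and let $p_{i,t}(x):=p_t(x\vert y=i)$, so the marginal density at time $t$ is $p_{M,t}(x)=\sum_i\omega_i p_{i,t}(x)$, which agrees with $p_M$ at $t=0$. The task then reduces to exhibiting a single SDE whose marginal distribution at every $t$ coincides with $p_{M,t}$.

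For the forward SDE \eqref{eq:diff_mixture_SDE_f}, I would start from the Fokker--Planck equation of each base model,
\[
\partial_t p_{i,t}(x) = -\nabla\!\cdot\!\bigl(f_{i,t}(x)\,p_{i,t}(x)\bigr) + \tfrac{1}{2}g_{i,t}^2\,\Delta p_{i,t}(x),
\]
form the $\omega_i$-weighted sum, and regroup. Since $g_{i,t}^2$ does not depend on $x$, we get $\sum_i\omega_i g_{i,t}^2 \Delta p_{i,t}=\Delta\bigl(\sum_i\omega_i g_{i,t}^2 p_{i,t}\bigr)$, and the two Bayes' rule identities
\[
\sum_i\omega_i f_{i,t}(x)\,p_{i,t}(x)=f_{M,t}(x)\,p_{M,t}(x), \qquad \sum_i\omega_i g_{i,t}^2\,p_{i,t}(x)=g_{M,t}^2(x)\,p_{M,t}(x),
\]
with $p(y=i\vert x_t)$ as in \eqref{eq:diff_lambda}, recast the sum as $\partial_t p_{M,t}=-\nabla\!\cdot\!(f_{M,t}p_{M,t})+\tfrac{1}{2}\Delta(g_{M,t}^2 p_{M,t})$, i.e.\ exactly the Fokker--Planck equation of the SDE in \eqref{eq:diff_mixture_SDE_f}.

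For the backward SDE \eqref{eq:diff_mixture_SDE_b}, I would apply Anderson's time-reversal formula to the forward SDE just obtained. Because $g_{M,t}$ is \emph{state-dependent} (even though each $g_{i,t}$ is not), the reverse-time drift is $f_{M,t}-\frac{1}{p_{M,t}}\nabla\!\cdot\!(g_{M,t}^2\,p_{M,t})$, rather than the simpler $f_{M,t}-g_{M,t}^2\nabla\log p_{M,t}$. Substituting $g_{M,t}^2 p_{M,t}=\sum_i\omega_i g_{i,t}^2 p_{i,t}$ and pulling $g_{i,t}^2$ out of the divergence (as it is constant in $x$) gives
\[
\tfrac{1}{p_{M,t}}\nabla\!\cdot\!\bigl(g_{M,t}^2 p_{M,t}\bigr) = \sum_i \tfrac{\omega_i p_{i,t}}{p_{M,t}}\,g_{i,t}^2\,s_{i,t}(x_t) = \sum_i p(y=i\vert x_t)\,g_{i,t}^2\,s_{i,t}(x_t),
\]
which, combined with $f_{M,t}=\sum_i p(y=i\vert x_t)f_{i,t}$, yields precisely the drift in \eqref{eq:diff_mixture_SDE_b}.

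The main obstacle is the regularity required to justify Anderson's time-reversal formula in the state-dependent-diffusion regime and to interchange differentiation with the finite sum in the Fokker--Planck step. This calls for standard assumptions (smoothness and suitable tail decay of each $p_{i,t}$, mild growth/Lipschitz conditions on the base drifts $f_{i,t}$, and positivity of $p_{M,t}$ on the relevant support so that the posterior $p(y=i\vert x_t)$ is well defined and smooth); I expect these to be the hypotheses collected in Appendix \ref{app:proof_diff_mixture_sde}. Once that framework is in place, the algebraic simplifications above are essentially the whole proof.
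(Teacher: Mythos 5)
Your proposal is correct and follows essentially the same route as the paper's proof in Appendix~\ref{app:proof_diff_mixture_sde}: the forward SDE is obtained by summing the base Fokker--Planck equations with weights $\omega_i$ and regrouping via Bayes' rule, and the backward SDE by applying Anderson's reversal with the extra term arising from the now state-dependent $g_{M,t}$. Your one-step evaluation of $\tfrac{1}{p_{M,t}}\nabla\!\cdot\!\bigl(g_{M,t}^2 p_{M,t}\bigr)$ is a slightly more compact packaging of the paper's computation of $\nabla_{x_t}(g_{M,t}^2)$ followed by cancellation of the $g_{M,t}^2\nabla\log p_{M,t}$ term, and the regularity hypotheses you anticipate are exactly the paper's Assumptions 1--3.
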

If the base diffusion models have a common forward SDE $dx_{i, t} = f_t(x_{i, t})\,dt + g_t\, dw_{i, t}$, equations \eqref{eq:diff_mixture_SDE_f}-\eqref{eq:diff_mixture_SDE_b} simplify to
\begin{equation}
\label{eq:diff_mixture_SDE}%
   d x_t = f_{t}(x_t)dt + g_{t} dw_{t},
   \quad
   d x_t = \left [ f_{t}(x_t) - g_{t}^2 \left(\sum_{i=1}^m p(y \!=\! i \vert x_t) s_{i, t}(x_t) \right)\right ] dt + g_{t} d\overline{w}_{t}.
\end{equation}

Theorem~\ref{thm:composed_diff_sde} summarizes the overall approach. 

\begin{theorem}%
\label{thm:composed_diff_sde}
Suppose distributions $p_1(x), \dots, p_m(x)$ are realized by diffusion models with forward SDEs $dx_{i, t} = f_{i,t}(x_{i, t})\,dt + g_{i,t}\, dw_{i, t}$ and score functions $s_{i, t}(\cdot)$, respectively. Let $y_1, \dots y_n$ be random variables defined by \eqref{eq:p_tilde_x_ys}. Then, the conditional $\widetilde p(x \vert y_1, \dots, y_n)$ is realized by a classifier-guided diffusion with backward SDE%
\begin{equation}%
\label{eq:diff_composed_SDE}%
   d x_t = \left[\sum_{i=1}^m \widetilde p(y\!=\! i \vert x_t) \Big(f_{i, t}(x_t) - g_{i, t}^2 \Big( s_{i, t}(x_t) + \nabla_{x_t} \log \widetilde p(y_1, \ldots, y_n \vert x_t) \Big) \Big)\right] dt + \sqrt{\sum_{i=1}^m \widetilde p(y\!=\!i \vert x_t) g_{i, t}^2} \,d\overline{w}_{t}.
\end{equation}%
\end{theorem}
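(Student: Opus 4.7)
The plan is to combine Proposition~\ref{prop:diff_mixture_sde} (which converts a uniform mixture of base diffusion models into a single mixture SDE) with the standard classifier-guidance argument, exactly mirroring how Theorem~\ref{thm:composed_flows} combined Propositions \ref{prop:gflow_mix} and \ref{prop:gflow_class} in the GFlowNet case. The target \(\widetilde p(x \mid y_1,\dots,y_n)\) is obtained from the prior \(\widetilde p(x)\) in \eqref{eq:p_tilde_x_ys} by Bayesian conditioning on the observations, and the prior is precisely a uniform mixture of the base distributions. So the strategy is: first realize the prior mixture as a diffusion, then guide it by the classifier \(\widetilde p(y_1,\dots,y_n\mid x_t)\).

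First I would instantiate Proposition~\ref{prop:diff_mixture_sde} with weights \(\omega_i = 1/m\). This gives a diffusion whose time-0 marginal equals \(\widetilde p(x)\), with forward SDE \eqref{eq:diff_mixture_SDE_f} and backward SDE \eqref{eq:diff_mixture_SDE_b}. A simple check shows that the mixing coefficients \(p(y=i\mid x_t)\) appearing in \eqref{eq:diff_lambda} coincide with the observation posteriors \(\widetilde p(y_k=i\mid x_t)\) defined in \eqref{eq:p_tilde_y_marginal_and_conditional}, because the conditioning constant \(1/m\) cancels out of the ratio. I would then note that the augmented model \eqref{eq:p_tilde_x_ys} places \(y_1,\dots,y_n\) at the clean level \(t=0\) only, so the forward noising process acts only on \(x_t\) and leaves the observations untouched. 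Consequently the conditional time-\(t\) marginal factorizes via Bayes as
\begin{equation*}
\widetilde p_t(x_t \mid y_1,\dots,y_n) \;\varpropto\; \widetilde p_t(x_t)\,\widetilde p(y_1,\dots,y_n \mid x_t),
\end{equation*}
where \(\widetilde p(y_1,\dots,y_n\mid x_t)=\int \widetilde p(y_1,\dots,y_n\mid x_0)\widetilde p(x_0\mid x_t)\,dx_0\) is the time-dependent classifier.

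Next I would invoke classifier guidance. Taking gradients of the log of the above factorization yields
\begin{equation*}
\nabla_{x_t}\log \widetilde p_t(x_t\mid y_1,\dots,y_n) = \nabla_{x_t}\log \widetilde p_t(x_t) + \nabla_{x_t}\log \widetilde p(y_1,\dots,y_n\mid x_t),
\end{equation*}
so the backward SDE for the conditional process is obtained from \eqref{eq:diff_mixture_SDE_b} by replacing the mixture score with the conditional score. This amounts to adding the extra drift term \(-g_{M,t}^2(x_t)\,\nabla_{x_t}\log \widetilde p(y_1,\dots,y_n\mid x_t)\), where \(g_{M,t}^2(x_t)=\sum_i \widetilde p(y=i\mid x_t)\,g_{i,t}^2\). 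Distributing this correction back inside the weighted sum over \(i\) — using that the classifier gradient does not depend on \(i\) — collapses the drift into the factored form
\begin{equation*}
\sum_{i=1}^m \widetilde p(y=i\mid x_t)\Big(f_{i,t}(x_t) - g_{i,t}^2\big(s_{i,t}(x_t) + \nabla_{x_t}\log \widetilde p(y_1,\dots,y_n\mid x_t)\big)\Big),
\end{equation*}
which matches the drift in \eqref{eq:diff_composed_SDE}, while the diffusion coefficient is inherited unchanged from the mixture SDE.

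The main obstacle I anticipate is justifying the classifier-guidance step cleanly for a mixture SDE with state-dependent diffusion coefficient \(g_{M,t}(x_t)\). Standard classifier-guidance proofs \citep{sohl2015deep,dhariwal2021diffusion} are written for constant \(g_t\); here one has to appeal to the more general Anderson-style reversal that Proposition~\ref{prop:diff_mixture_sde} already uses, and verify that swapping \(\widetilde p_t(x_t)\) for \(\widetilde p_t(x_t\mid y_1,\dots,y_n)\) in the score produces exactly one extra drift contribution \(-g_{M,t}^2(x_t)\nabla_{x_t}\log\widetilde p(y_1,\dots,y_n\mid x_t)\) with no additional divergence-of-diffusion terms beyond those already absorbed into \eqref{eq:diff_mixture_SDE_b}. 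Once this is established, the algebraic rearrangement into the \(i\)-indexed sum in \eqref{eq:diff_composed_SDE} is routine, and Theorem~\ref{thm:composed_diff_sde} follows.
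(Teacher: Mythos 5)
Your proposal is correct and follows essentially the same route as the paper: realize the prior $\widetilde p(x)$ via Proposition~\ref{prop:diff_mixture_sde} with uniform weights, apply classifier guidance to the mixture's backward SDE in the style of \citet{song2021scorebased}, and redistribute the single extra drift term $-g_{M,t}^2(x_t)\nabla_{x_t}\log\widetilde p(y_1,\dots,y_n\mid x_t)$ across the $i$-indexed sum using $g_{M,t}^2(x_t)=\sum_i\widetilde p(y\!=\!i\mid x_t)g_{i,t}^2$. The obstacle you flag — the state-dependent diffusion coefficient — is handled in the paper exactly as you anticipate, by carrying the $\nabla_{x_t}(g_{M,t}^2(x_t))$ term from the Anderson reversal through the cancellation already established in the proof of Proposition~\ref{prop:diff_mixture_sde} (equation~\eqref{eq:nabla_g_m_t_sq}).
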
%
The proof of Theorem \ref{thm:composed_diff_sde} is provided in Appendix \ref{app:proof_composed_diff_sde}.

\subsection{Classifier Training (Diffusion Models)}
\label{sec:classifer_diffusion}
We approximate the inferential distributions in equations \eqref{eq:diff_mixture_SDE} and \eqref{eq:diff_composed_SDE}  with a time-conditioned classifier $\widetilde Q_\phi(y_1, \dots, y_n \vert x_t)$ with parameters $\phi$. Contrary to GFlowNets, which employed a terminal and non-terminal state classifier, here we only need a single time-dependent classifier. The classifier is trained with different objectives on terminal and non-terminal states. The variables $y_1, \dots, y_n$ are dependent given a state $x_t$ for $t \in [0,T)$, but are independent given the terminal state $x_T$. Thus, when training on terminal states we can exploit this independence. Furthermore, we generally found it beneficial to initially train only on terminal states. The loss for the non-terminal states depends on classifications of the terminal state of the associated trajectories, thus by minimizing the classification error of terminal states first, we reduce noise in the loss calculated for the non-terminal states later.

For a terminal state $x_0$, the classifier $\widetilde Q_\phi(y_1, \dots, y_n \vert x_t)$ can be factorized as $\prod_{k=1}^n \widetilde Q_\phi(y_k \vert x_0)$. Hence we can train $\widetilde Q$ by minimizing the cross-entropy loss
\begin{equation}
    \label{eq:diff_cls_loss_term}
    \mathcal{L}_\text{T}(\phi) = \E\limits_{(\widehat x_0, \widehat y_1) \sim  \widetilde p(x, y_1)} \left[
        - \log \widetilde Q_\phi(y_1\!=\!\widehat y_1 \vert x_0 \!=\! \widehat x_0)
    \right].
\end{equation}
Samples $\widetilde p(x_0, y_1)$ can be generated according to the factorization $\widetilde p(y_1) \widetilde p(x_0 \vert y_1)$. First, $\widehat y_1$ is sampled from $\widetilde p(y_1)$, which is uniform under our choice of $\widetilde p(x_0)$. Then, $\widehat x_0 \vert (y_1 \! = \! \widehat y_1)$ is generated from the reverse SDE of base diffusion model~$p_{\widehat y_1}(x)$. Note that equation~\eqref{eq:p_tilde_y_marginal_and_conditional} implies that all observations have the same conditional distribution given $x$. Thus, $\widetilde Q_\phi(y_1 \vert x_0)$ is also a classifier for observations $y_2, \dots, y_n$.

For a non-terminal state $x_t$ with $t \in (0,T]$, we must train $\widetilde Q$ to predict $y_1, \dots, y_n$ jointly. For a non-terminal state $\widehat x_t$ and observations $\widehat y_1, \dots, \widehat y_n$, the cross-entropy loss is
\begin{equation}
    \label{eq:traj_ce_diff}
    \ell(\widehat x_t, \widehat y_1, \dots, \widehat y_n; \phi) =  
        -\log \widetilde Q_\phi(y_1 \! =\!  \widehat y_1, \dots, y_n \!=\!  \widehat y_n \vert x_t \!=\!  \widehat x_t).
\end{equation}
Tuples $(\widehat x_t, \widehat y_1, \dots, \widehat y_n)$ are obtained as follows: 1) $\widehat y_1 \sim \widetilde p(y_1)$; 2) A trajectory $\tau = \{x_t\}^T_{t=0}$ is sampled from the reverse SDE of diffusion model $y_1$. At this point, we would ideally sample $\widehat y_2, \dots, \widehat y_n$ given $\widehat x_0$ but this requires access to $\widetilde{p}(y_k \!=\! \widehat y_k \vert \widehat x_0)$. Instead, we approximate this with $w_i(\widehat x; \phi) = \widetilde Q_\phi(y_1 \!=\! i \vert x_0 \!=\! \widehat x_0)$ and marginalize over $\widehat y_2, \dots, \widehat y_n$ to obtain the cross-entropy loss
\begin{equation}
    \label{eq:diff_cls_loss_nonterm}
    \mathcal{L}_N(\phi, \overline{\phi}) =
    \E\limits_{(\widehat \tau, \widehat y_1) \sim \widetilde p(\tau, y_1)} \left[
        \sum_{\widehat x_t \in \widehat \tau \setminus \{\widehat x_0\}} \sum_{\widehat y_2 =1}^m \dots \sum_{\widehat y_n =1}^m \left ( \prod_{k=2}^n w_{\widehat y_k}(\widehat x_0;  \overline{\phi}) \right ) \ell(\widehat x_t,  \widehat y_1, \dots, \widehat y_n;  \phi)
    \right].
\end{equation}

\section{Experiments}
\begin{figure}
    \newcommand{\figh}{2.2cm}
    \begin{subfigure}{0.19\textwidth} 
        \includegraphics[width=\linewidth]{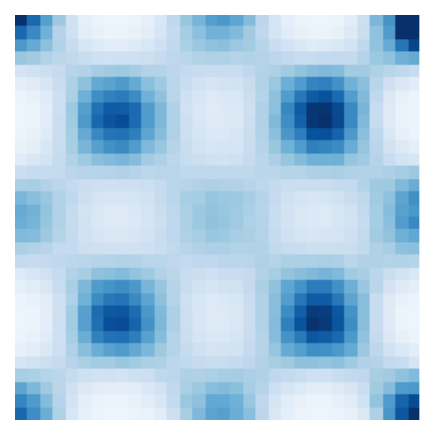}%
        \caption*{$p_1$}%
    \end{subfigure}\hfill%
    \begin{subfigure}{0.19\textwidth} 
        \includegraphics[width=\linewidth]{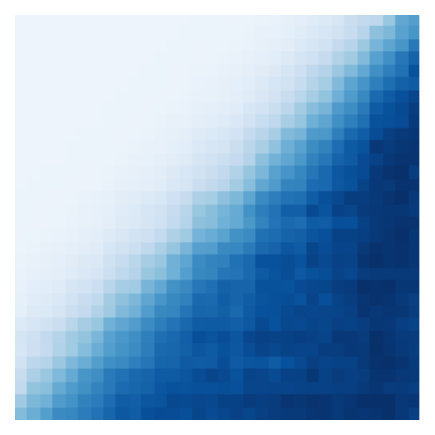}
        \caption*{$p_2$}
    \end{subfigure}\hfill%
    \begin{subfigure}{0.19\textwidth} 
        \includegraphics[width=\linewidth]{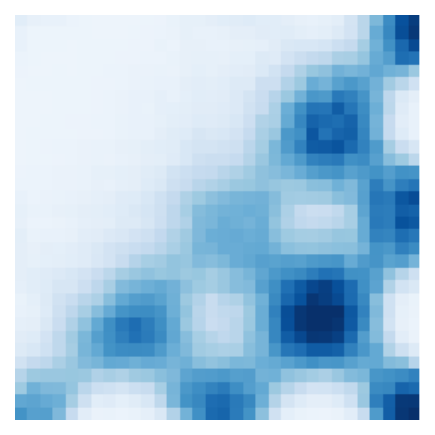}
        \caption*{$p_1 \otimes p_2$}
    \end{subfigure}\hfill%
    \begin{subfigure}{0.19\textwidth} 
        \includegraphics[width=\linewidth]{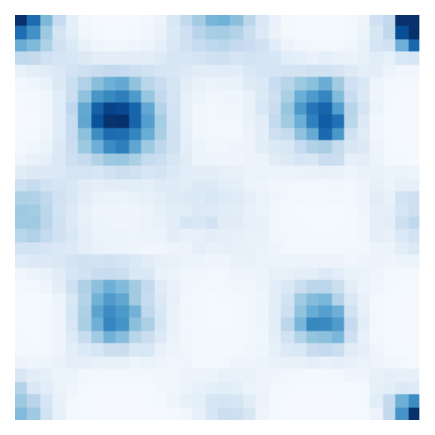}%
        \caption*{$p_1 \contrast p_2$}
    \end{subfigure}\hfill%
    \begin{subfigure}{0.19\textwidth} 
        \includegraphics[width=\linewidth]{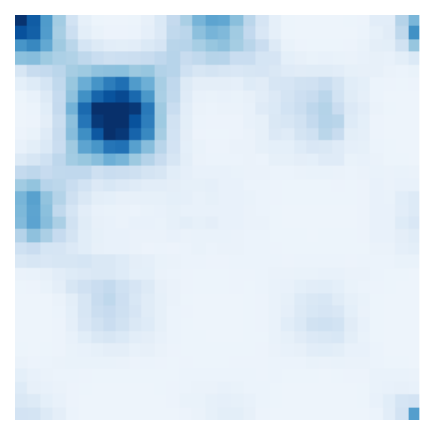}
        \caption*{$p_1 \contrast_{0.95}\; p_2$}
    \end{subfigure}\hfill%
    \begin{subfigure}{0.02\textwidth} 
        \includegraphics[height=88pt]{figures_gaussian_pdf_colorbar.pdf}
        \caption*{}
    \end{subfigure}%
    \newline%
    \renewcommand{\figh}{1.59cm}%
    \begin{subfigure}{0.14\textwidth} 
        \includegraphics[width=\linewidth]{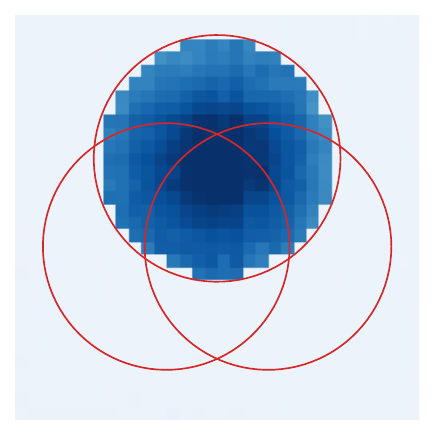}
        \caption*{$p_1 \vphantom{\widetilde p\left(\! x \middle\vert \substack{y_1=1  \\ y_2=2 \\ y_3=3}\right)}$}
    \end{subfigure}\hfill%
    \begin{subfigure}{0.14\textwidth} 
        \includegraphics[width=\linewidth]{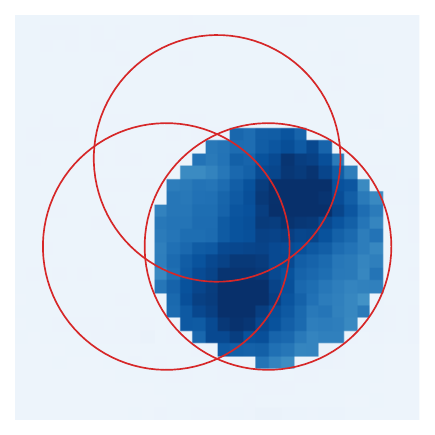}
        \caption*{$p_2 \vphantom{\widetilde p\left(\! x \middle\vert \substack{y_1=1  \\ y_2=2 \\ y_3=3}\right)}$}
    \end{subfigure}\hfill%
    \begin{subfigure}{0.14\textwidth} 
        \includegraphics[width=\linewidth]{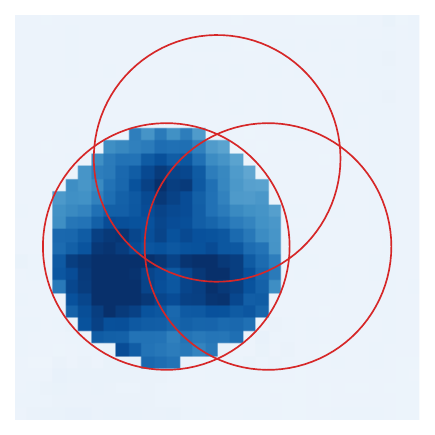}
        \caption*{$p_3 \vphantom{\widetilde p\left(\! x \middle\vert \substack{y_1=1  \\ y_2=2 \\ y_3=3}\right)}$}
    \end{subfigure}\hfill%
    \begin{subfigure}{0.14\textwidth} 
        \includegraphics[width=\linewidth]{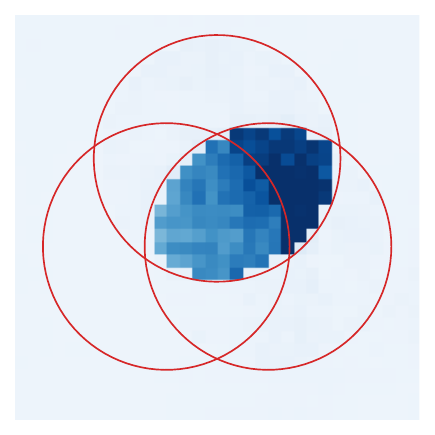}
        \caption*{$\widetilde p\left(\! x \middle\vert \substack{y_1=1  \\ y_2=2}\right) \vphantom{\widetilde p\left(\! x \middle\vert \substack{y_1=1  \\ y_2=2 \\ y_3=3}\right)}$}
    \end{subfigure}\hfill%
    \begin{subfigure}{0.14\textwidth} 
        \includegraphics[width=\linewidth]{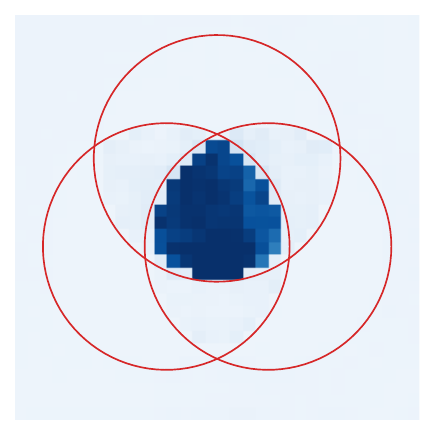}
        \caption*{$\widetilde p\left(\! x \middle\vert \substack{y_1=1  \\ y_2=2 \\ y_3=3}\right)$}
    \end{subfigure}\hfill%
    \begin{subfigure}{0.14\textwidth} 
        \includegraphics[width=\linewidth]{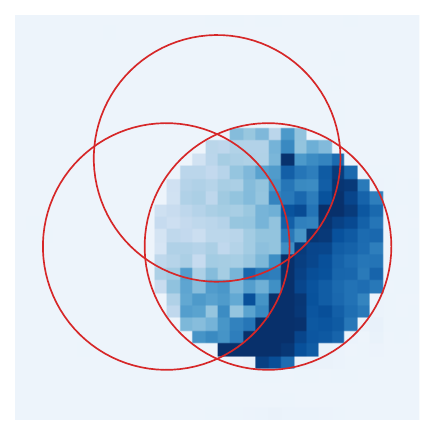}
        \caption*{$\widetilde p\left(\! x \middle\vert \substack{y_1=2  \\ y_2=2}\right) \vphantom{\widetilde p\left(\! x \middle\vert \substack{y_1=1  \\ y_2=2 \\ y_3=3}\right)}$}
    \end{subfigure}\hfill%
    \begin{subfigure}{0.14\textwidth} 
        \includegraphics[width=\linewidth]{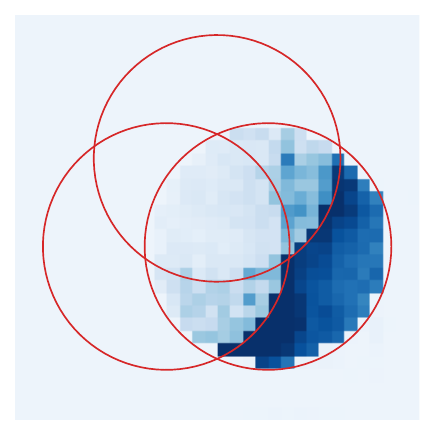}
        \caption*{$\widetilde p\left(\! x \middle\vert \substack{y_1=2  \\ y_2=2 \\ y_3=2}\right) \vphantom{\widetilde p\left(\! x \middle\vert \substack{y_1=1  \\ y_2=2 \\ y_3=3}\right)}$}
    \end{subfigure}%
    \caption{\textbf{Composed GFlowNets on $32 \times 32$ grid domain.} Color indicates cell probability, darker is higher. (Top) operations on two distributions. (Bottom) operations on three distributions.  The red circles indicate the high probability regions of $p_1$, $p_2$, $p_3$.}
    \label{fig:exp_grid}
    \vspace{-1em}
\end{figure}

\begin{figure}
    \newcommand{\figh}{3.1cm}
    \begin{subfigure}[t]{0.19\linewidth}
        \centering
        \includegraphics[height=\figh]{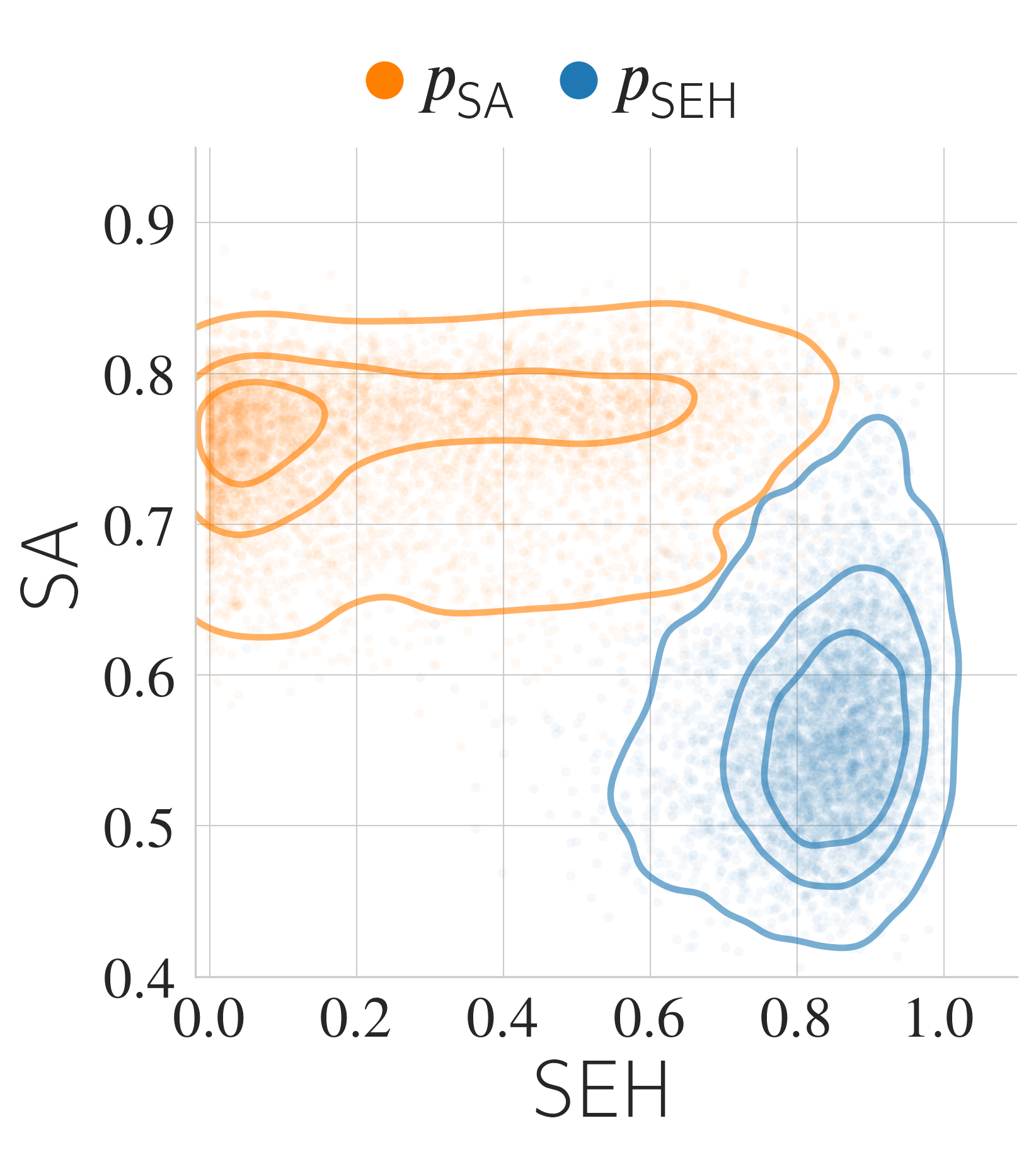}
        \caption{Base at \(\beta=32\)}
        \label{fig:exp_2dist_base_32}
    \end{subfigure}\hfill%
    \begin{subfigure}[t]{0.19\linewidth}
        \centering
        \includegraphics[height=\figh]{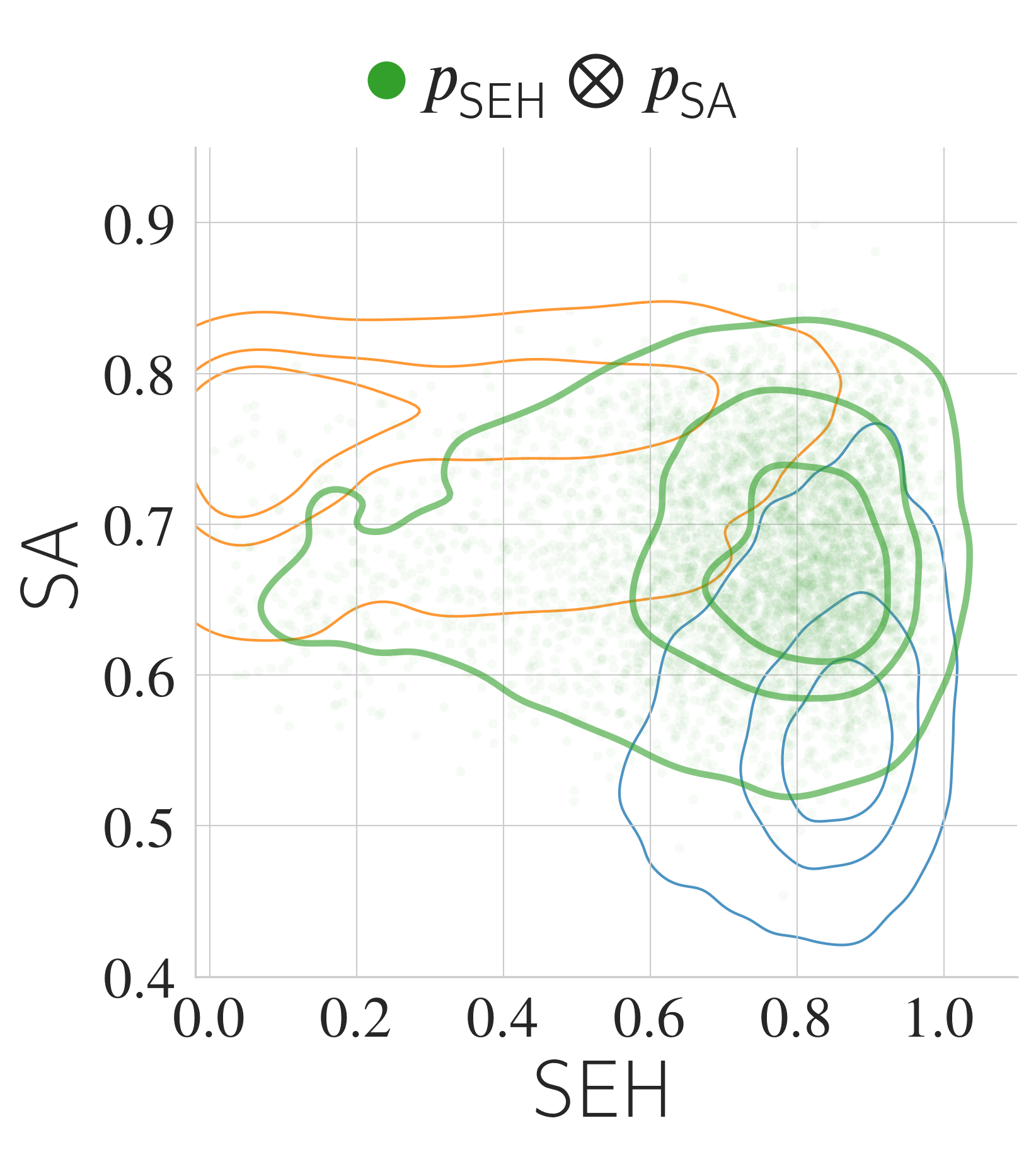}
        \caption{Harmonic mean}
        \label{fig:exp_2dist_hm_32}
    \end{subfigure}\hfill%
    \begin{subfigure}[t]{0.19\linewidth}
        \centering
        \includegraphics[height=\figh]{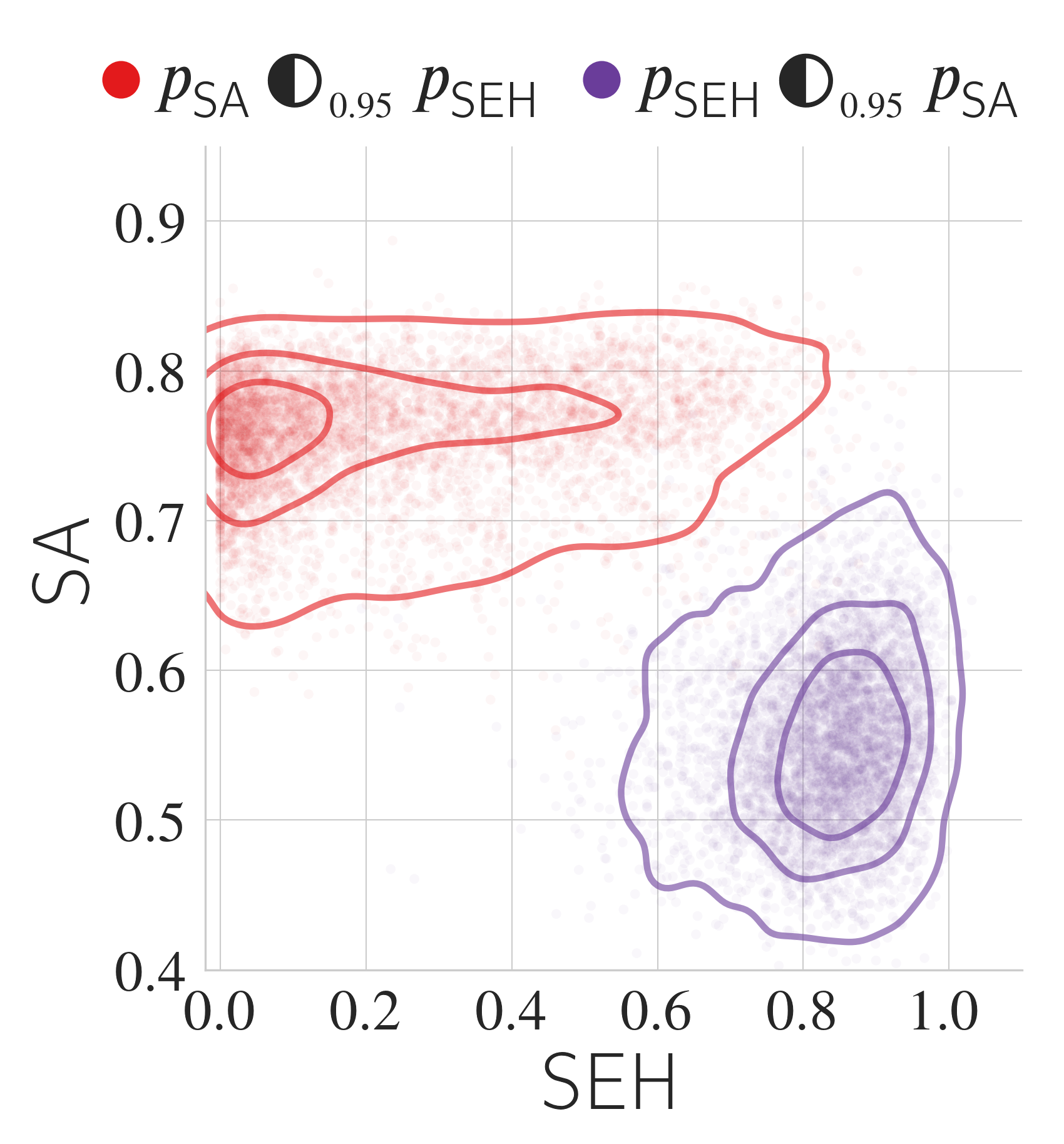}
        \caption{Contrasts}
        \label{fig:exp_2dist_contrast_32}
    \end{subfigure}%
    \hfill%
    \hfill%
    \begin{subfigure}[t]{0.19\linewidth}
        \centering
        \includegraphics[height=\figh]{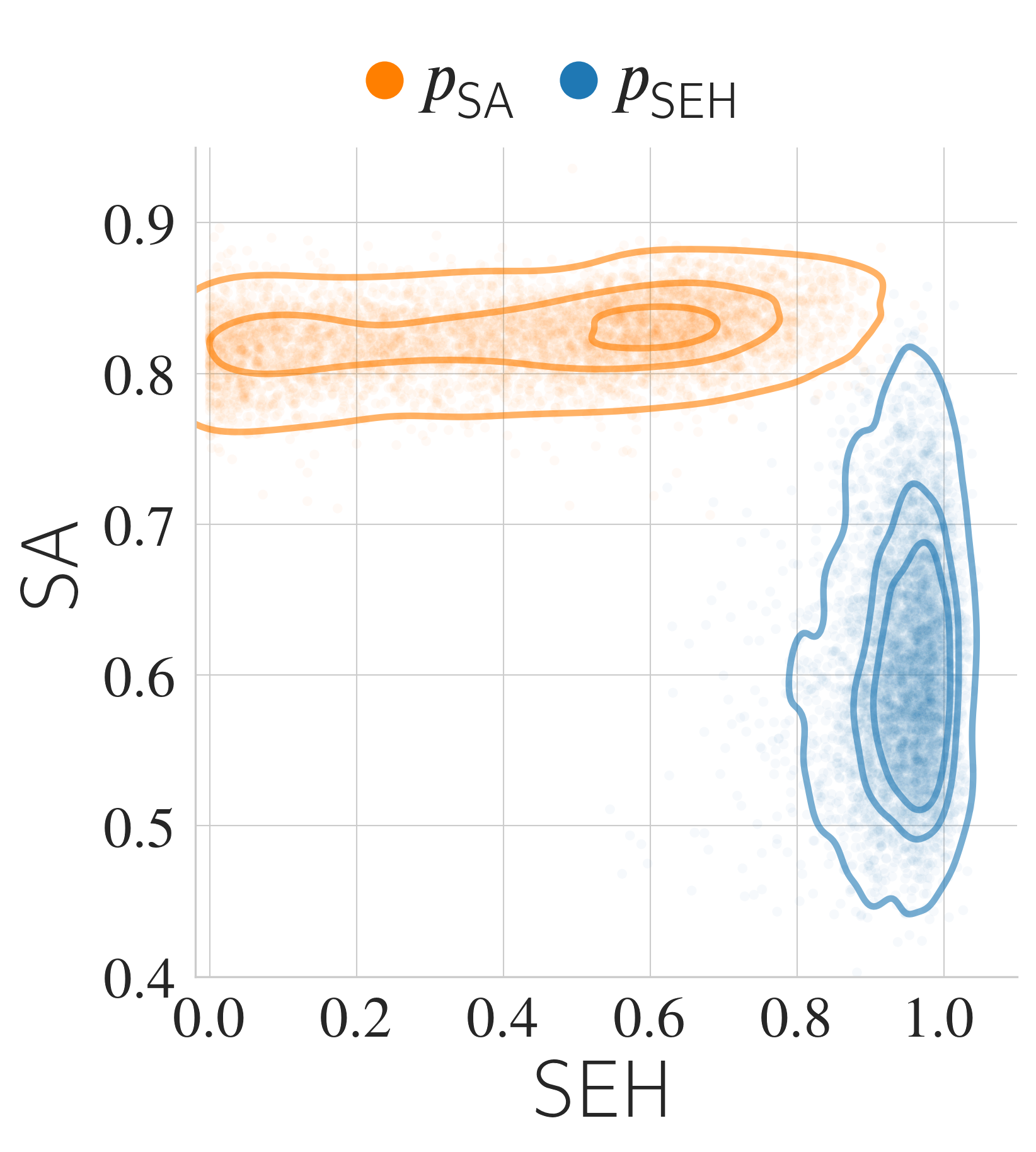}
        \caption{Base at \(\beta=96\)}
        \label{fig:exp_2dist_base_96}
    \end{subfigure}\hfill%
    \begin{subfigure}[t]{0.19\linewidth}
        \centering
        \includegraphics[height=\figh]{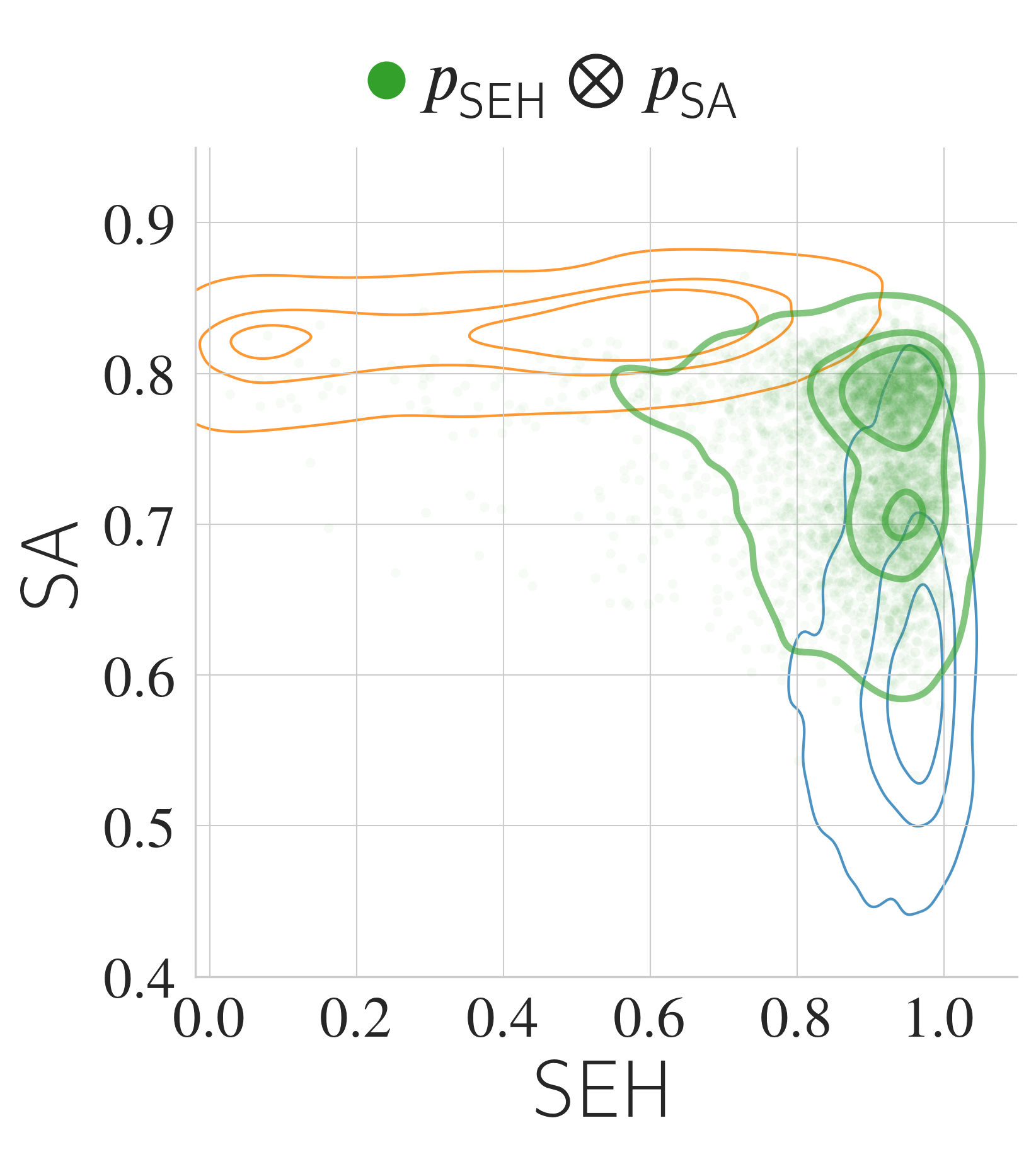}
        \caption{Harmonic mean}
        \label{fig:exp_2dist_hm_96}
    \end{subfigure}\hfill%
    \caption{\textbf{Reward distributions in the molecular generation domain.} (a) Base GFlowNets at $\beta\!=\!32$: $p_\text{SEH}$ and $p_\text{SA}$ are trained with $R_\text{SEH}(x)^{32}$ and $R_\text{SA}(x)^{32}$. (b) harmonic mean of $p_\text{SEH}$ and $p_\text{SA}$, (c) contrasts. (d) base GFlowNets at \(\beta\!=\!96\). (e) harmonic mean. The contours indicate the level sets of the kernel density estimates in the $(R_\text{SEH}, R_\text{SA})$ plane.
    }
    \label{fig:exp_2dist}
    \vspace{-1em}
\end{figure}

\subsection{2D Distributions via GFlowNet}
\label{sec:exp_2dgrid}

We validate GFlowNet compositions obtained with our framework on 2D grid domain \citep{bengio2021flow}. The goal of this experiment is to validate our approach in a controlled setting, where the ground-truth composite distributions can be evaluated directly.

In the 2D grid domain, the states are the cells of an $H \times H$ grid. The starting state is the upper-left cell $s_0 = (0, 0)$. At each state, the allowed actions are: 1) move right; 2) move down; 3) a stop action that indicates termination of the trajectory at the current position. For this experiment, we first trained GFlowNets $p_i(x) \varpropto R_i(x)$ with reward functions $R_i(x) > 0$ defined on the grid, and then trained classifiers and constructed GFlowNet compositions following \ref{thm:composed_flows}. 

Figure \ref{fig:exp_grid} (top row) shows the distributions obtained by composing two pre-trained GFlowNets (top row; left).
The harmonic mean $p_1 \otimes p_2$, covers the regions that have high probability under both $p_1$ and $p_2$ and excludes locations where either of the distributions is low.
$p_1 \contrast p_2$ resembles $p_1$ but the relative masses of the modes of $p_1$ are modulated by $p_2$: regions with high $p_2$ have lower probability under contrast.
The parameterized contrast $p_1 \contrast_{0.95}\; p_2$ with $\alpha = 0.05$ magnifies the contrasting effect: high $p_2(x)$ implies very low $(p_1 \contrast_{0.95}\; p_2)(x)$. %

The bottom row of Figure \ref{fig:exp_grid} shows the operations on 3 distributions.
The conditional $\widetilde p(x \vert y_1 \!=\! 1, y_2 \!=\! 2)$ is concentrated on the points that have high likelihood under both $p_1$ and $p_2$. 
Similarly, the value $\widetilde p(x \vert y_1 \! = \! 1, y_2 \! = \! 2, y_3 \! = \! 3)$ is high if $x$ is likely to be observed under all three distributions at the same time. 
The conditionals $\widetilde p(x \vert y_1 \! = \! 2, y_2 \! = \! 2)$ and $\widetilde p(x \vert y_1 \! = \! 2, y_2 \! = \! 2, y_3 \! = \! 2)$ highlight the points with high $p_2(x)$ but low $p_1(x)$ and $p_(x)$. 
Conditioning on three labels results in a sharper distribution compared to double-conditioning. 
Note that the operations can be thought of as generalized set-theoretic operations (set intersection and set difference). We provide quantitative results and further details in Appendix \ref{app:exp_2dgrid}. The classifier learning curves are provided in Appendix~\ref{app:cls_training_stats}.

\subsection{Molecule Generation via GFlowNet}\label{sec:molecule}

Next, we evaluate our method for GFlowNet composition on a large and highly structured data space, and asses the effect that composition operations have on resulting data distributions in a practical setting. To that end, we conducted experiments with GFlowNets trained for the molecular generation task proposed by \citet{bengio2021flow}.

\paragraph{Domain.} 

In the molecule generation task, the objects $x \in \mathcal{X}$ are molecular graphs.
The non-terminal states $s \in \mathcal{S} \setminus \mathcal{X}$ are incomplete molecular graphs.
The transitions from a given non-terminal state $s$ are of two types: 1) fragment addition $s \to s'$: new molecular graph $s'$ is obtained by attaching a new fragment to the molecular graph $s$; 2) stop action $s \to x$: if $s \neq s_0$, then the generation process can be terminated at the molecular graph corresponding to the current state (note that new terminal state $x \in \mathcal{X}$ is different from $s \in \mathcal{S} \setminus \mathcal{X}$, but both states correspond to the same molecular graph).

\paragraph{Rewards.} 
We trained GFlowNets using 3 reward functions: \textbf{SEH}, a reward computed by an MPNN \citep{gilmer2017neural} that was trained by \citet{bengio2021flow} to estimate the binding energy of a molecule to the soluble epoxide hydrolase protein; \textbf{SA}, an estimate of synthetic accessibility \citep{ertl2009estimation} computed with tools from \texttt{RDKit} library \citep{landrum000rdkit}; \textbf{QED}, a quantitative estimate of drug-likeness \citep{bickerton2012quantifying} which is also computed with \texttt{RDKit}. 
We normalized all reward functions to the range $[0, 1]$.
Higher values of SEH, SA, and QED correspond to stronger binding, higher synthetic accessibility, and higher drug-likeness respectively.
Following \citet{bengio2021flow}, we introduced the parameter $\beta$ which controls the sharpness (temperature) of the target distribution: $p(x) \varpropto R(x)^\beta$, increasing $\beta$ results in a distribution skewed towards high-reward objects. We experimented with two \(\beta\) values, $32$ and $96$ (Figure \ref{fig:exp_2dist_base_32},\ref{fig:exp_2dist_base_96}).

\paragraph{Training and evaluation.} 
After training the base GFlowNets with the reward functions described above, we trained classifiers with Algorithm \ref{alg:cls_training_1_step}.
The classifier was parameterized as a graph neural network based on a graph transformer architecture \citep{yun2019graph}. Further details of the classifier parameterization and training are provided in Appendix \ref{app:exp_mol}. Compared to the 2D grid domain (Section \ref{sec:exp_2dgrid}), we can not directly evaluate the distributions obtained by our approach. Instead, we analyzed the samples generated by the composed distributions. We sampled $5\,000$ molecules from each composed distribution obtained with our approach as well as the base GFlowNets. We evaluated the sample collections with the two following strategies. \textbf{Reward evaluation}: we analyzed the distributions of rewards across the sample collections. The goal is to see whether the composition of GFlowNets trained for different rewards leads to noticeable changes in reward distribution. \textbf{Distribution distance evaluation}: we used the samples to estimate the pairwise distances between the distributions. Specifically, for a given pair of distributions represented by two collections of samples $\mathcal{D}_A = \{x_{A, i}\}_{i=1}^n$, $\mathcal{D}_B = \{x_{B, i}\}_{i=1}^n$ we computed the earth mover's distance $d(\mathcal{D}_A, \mathcal{D}_B)$ with ground molecule distance given by $d(x, x') = (\max\{s(x, x'), 10^{-3}\})^{-1} - 1$, where $s(x, x') \in [0, 1]$ is the Tanimoto similarity over Morgan fingerprints of molecules $x$ and $x'$.

\begin{figure}[t]
    \begin{minipage}[t]{0.62\textwidth}
    \vspace{-11em}
    \captionsetup{type=table} %
    \caption{Reward distributions of composite GFlowNets.}%
    \label{tbl:exp_mol_3dist}
    \resizebox{\textwidth}{!}{%
    \begin{tabular}{lcccccccc}
        \toprule
         \multicolumn{1}{r}{SEH} & \multicolumn{4}{c}{low} &
         \multicolumn{4}{c}{high}
         \\
         \cmidrule(l{5pt}r{5pt}){2-5} \cmidrule(l{5pt}r{5pt}){6-9}
         \multicolumn{1}{r}{SA} & \multicolumn{2}{c}{low} &
         \multicolumn{2}{c}{high} &
         \multicolumn{2}{c}{low} &
         \multicolumn{2}{c}{high}
         \\
         \cmidrule(l{5pt}r{5pt}){2-3} \cmidrule(l{5pt}r{5pt}){4-5}
         \cmidrule(l{5pt}r{5pt}){6-7} \cmidrule(l{5pt}r{5pt}){8-9}
         \multicolumn{1}{r}{QED}
         & \multicolumn{1}{c}{low}
         & \multicolumn{1}{c}{high}
         & \multicolumn{1}{c}{low}
         & \multicolumn{1}{c}{high}
         & \multicolumn{1}{c}{low}
         & \multicolumn{1}{c}{high}
         & \multicolumn{1}{c}{low}
         & \multicolumn{1}{c}{high} 
         \\
         \midrule
         $p_\text{SEH}$ & 0 & 0 & 0 & 0 & $\bm{62}$ & $\bm{9}$ & $\bm{24}$ & $\bm{5}$
         \\
         $p_\text{SA}$ & 0 & 0 & $\bm{73}$ & $\bm{4}$ & 0 & 0 & $\bm{18}$ & $\bm{5}$
         \\
         $p_\text{QED}$ & 0 & $\bm{40}$ & 0 & $\bm{26}$ & 0 & $\bm{21}$ & 0 & $\bm{13}$
         \\
         \midrule
         (a) $y \! = \! \{{\scriptsize \text{SEH}, \text{SA}}\}$ & 1 & 0 & 16 & 2 & 6 & 3 & $\bm{54}$ & $\bm{18}$
         \\
         (b) $y \! = \! \{{\scriptsize \text{SEH}, \text{QED}}\}$ & 0 & 11 & 0 & 4 & 1 & $\bm{48}$ & 4 & $\bm{32}$
         \\
         (c) $y \! = \! \{{\scriptsize \text{SA}, \text{QED}}\}$ & 0 & 15 & 1 & $\bm{42}$ & 0 & 8 & 2 & $\bm{32}$
         \\
         (d) $y \! = \! \{{\scriptsize \text{SEH}, \text{SA}, \text{QED}}\}$ & 0 & 7 & 2 & 11 & 2 & 19 & 10 & $\bm{49}$
         \\
         \midrule
         (e) $y \! = \! \{{\scriptsize \text{SEH}, \text{SEH}, \text{SEH}}\}$ &
         0 & 0 & 0 & 0 & $\bm{63}$ & 9 & 24 & 4 \\
         (f) $y \! = \! \{{\scriptsize \text{SA}, \text{SA}, \text{SA}}\}$ &
         0 & 0 & $\bm{74}$ & 5 & 0 & 0 & 17 & 4 \\
         (g) $y \! = \! \{{\scriptsize \text{QED}, \text{QED}, \text{QED}}\}$ &
         0 & $\bm{40}$ & 0 & 23 & 0 & 23 & 0 & 14 \\
         \bottomrule
    \end{tabular}
    }\\[2mm] 
    In each row, the numbers show the percentage of the samples from the respective model that fall into one of $8$ bins according to rewards. The ``low'' and ``high'' categories are decided by thresholding SEH: 0.5, SA: 0.6, QED: 0.25.%
    \end{minipage}\hfill%
    \begin{minipage}[t]{0.33\textwidth}%
        \includegraphics[width=0.99\textwidth]{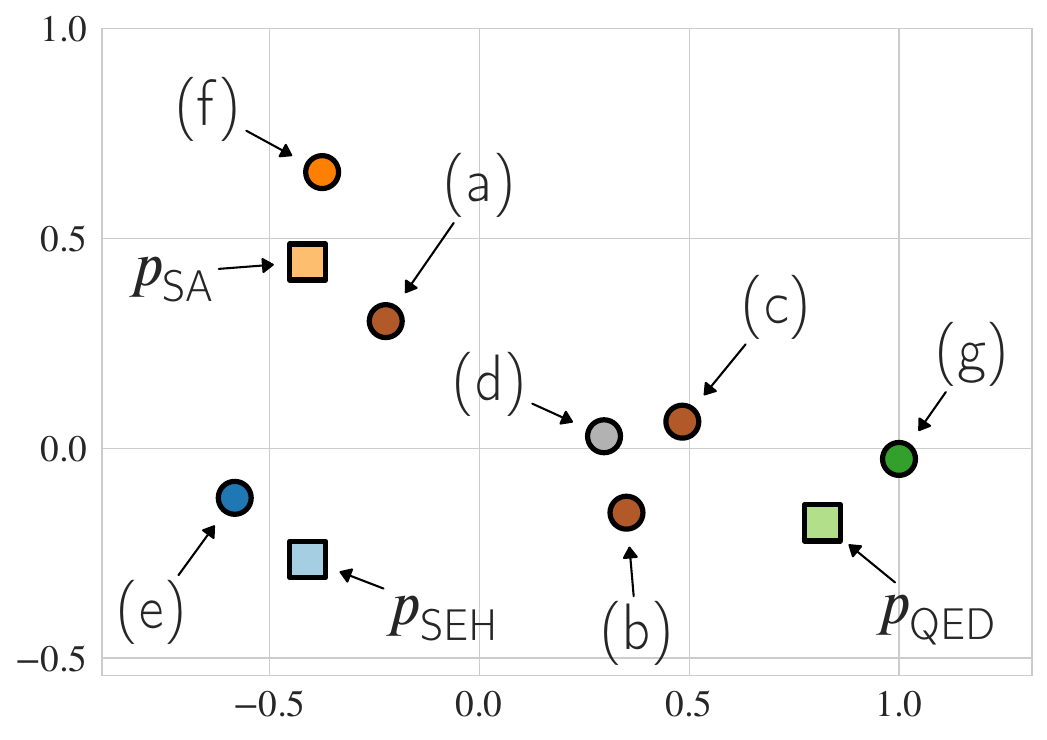}
        \caption{2D t-SNE embeddings of three base GFlowNets trained with $R_\text{SEH}(x)^\beta$, $R_\text{SA}(x)^\beta$, $R_\text{QED}(x)^\beta$, at $\beta\!=\!32$ and their compositions. The t-SNE embeddings are computed based on pairwise earth mover's distances between the distributions.
        Labels (a)-(g) match rows in Table~\ref{tbl:exp_mol_3dist}. }
        \label{fig:mol_3dist}
    \end{minipage}
    \vspace{-1em}
\end{figure}

\paragraph{Results.}  Figure \ref{fig:exp_2dist} shows reward distributions of base GFlowNets (trained with SEH and SA rewards at $\beta \in \{32, 96\})$ and their compositions. Base GFlowNet distributions are concentrated on examples that score high in their respective rewards. For each model, there is considerable variation in the reward that was not used for training. The harmonic mean operation (Figures \ref{fig:exp_2dist_hm_32}, \ref{fig:exp_2dist_hm_96}) results in distributions that are concentrated on the samples scoring high in both rewards. The contrast operation (Figure \ref{fig:exp_2dist_contrast_32}) has the opposite effect: the distributions are skewed towards the examples scoring high in only one of the original rewards. Note that the tails of the contrast distributions are retreating from the area covered by the harmonic mean.

We show reward distribution statistics of three GFlowNets (trained with SEH, SA, and QED at $\beta = 32$) and their compositions in Table~\ref{tbl:exp_mol_3dist}. Each row of the table gives a breakdown (percentages) of the samples from a given model into one of $2^3=8$ bins according to rewards. For all three base models, the majority of the samples fall into the ``high'' category according to the respective reward, while the rewards that were not used for training show variation. Conditioning on two different labels (e.g. $y\!=\!\{\text{SEH}, \text{QED}\}$) results in concentration on examples that score high in two selected rewards, but not necessarily scoring high in the reward that was not selected. The conditional $y\!=\!\{\text{SEH}, \text{QED}, \text{SA}\}$ shifts the focus to examples that have all three properties.

Figure~\ref{fig:mol_3dist} shows 2D embeddings of the distributions appearing in Table~\ref{tbl:exp_mol_3dist}. The embeddings were computed with t-SNE based on the pairwise earth mover's distances. The configuration of the embeddings gives insight into the configuration of base models and conditionals in the distribution space. We see that points corresponding to pairwise conditionals lie in between the two base models selected for conditioning. Conditional $y\!=\!\{\text{SEH},\text{SA},\text{QED}\}$ appears to be near the centroid of the triangle $(p_\text{SEH}, p_\text{SA}, p_\text{QED})$ and lies close the the pairwise conditoinals. The distributions obtained by repeated conditioning on the same label value (e.g. $y\!=\!\{\text{SEH}, \text{SEH}, \text{SEH}\}$) are spread out to the boundary and lie closer to the respective base distributions while being relatively far from pairwise conditionals. We provide a complete summary of the distribution distances in Table \ref{tab:distr_distance}. The classifier learning curves are provided in Appendix~\ref{app:cls_training_stats}. The sample diversity statistics of base GFlowNets at different values of $\beta$ are provided in Appendix \ref{app:mol_diversity}. 

\subsection{Colored MNIST Generation via Diffusion Models}\label{sec:mnist}
\begin{figure}
    \begin{subfigure}{0.16\textwidth} 
        \includegraphics[width=\linewidth]{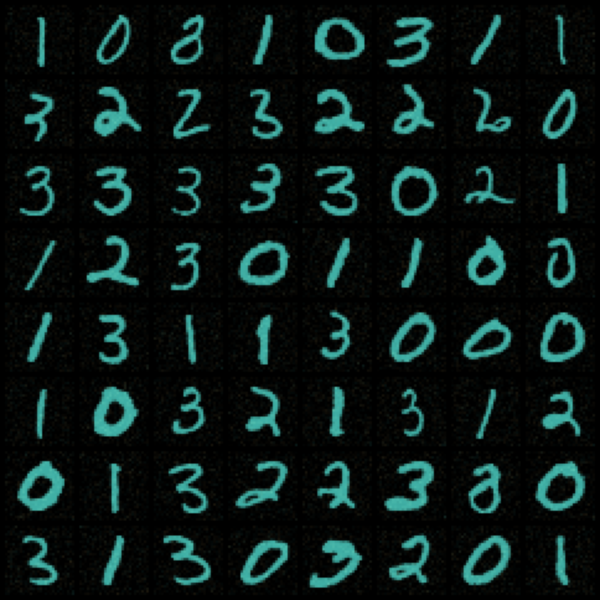}
        \caption*{$p_1 \vphantom{\widetilde p\left(\! x \middle\vert \substack{y_1=1  \\ y_2=2 \\ y_3=3}\right)}$}
    \end{subfigure}\hfill%
    \begin{subfigure}{0.16\textwidth} 
        \includegraphics[width=\linewidth]{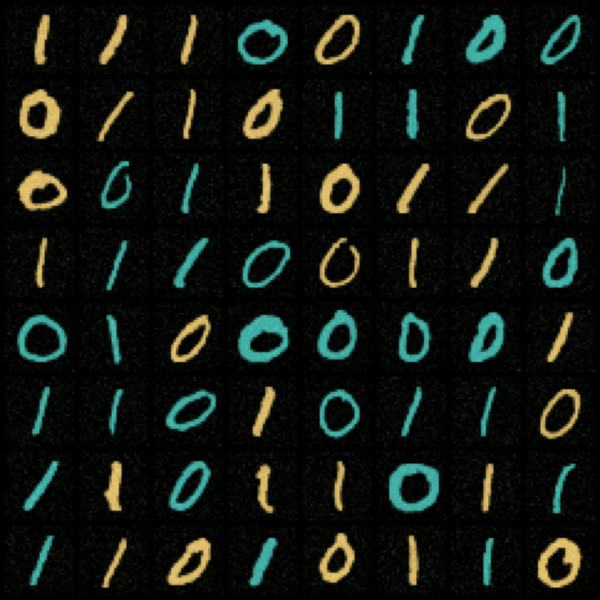}
        \caption*{$p_2 \vphantom{\widetilde p\left(\! x \middle\vert \substack{y_1=1  \\ y_2=2 \\ y_3=3}\right)}$}
    \end{subfigure}\hfill%
    \begin{subfigure}{0.16\textwidth} 
        \includegraphics[width=\linewidth]{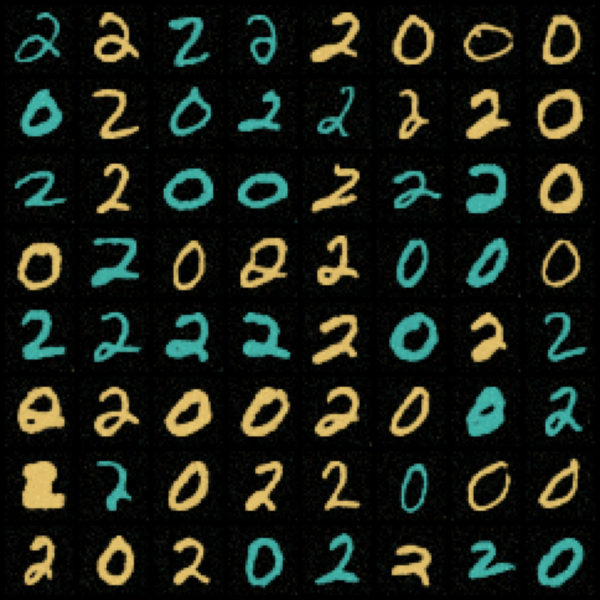}
        \caption*{$p_3 \vphantom{\widetilde p\left(\! x \middle\vert \substack{y_1=1  \\ y_2=2 \\ y_3=3}\right)}$}
    \end{subfigure}\hfill%
    \begin{subfigure}{0.16\textwidth} 
        \includegraphics[width=\linewidth]{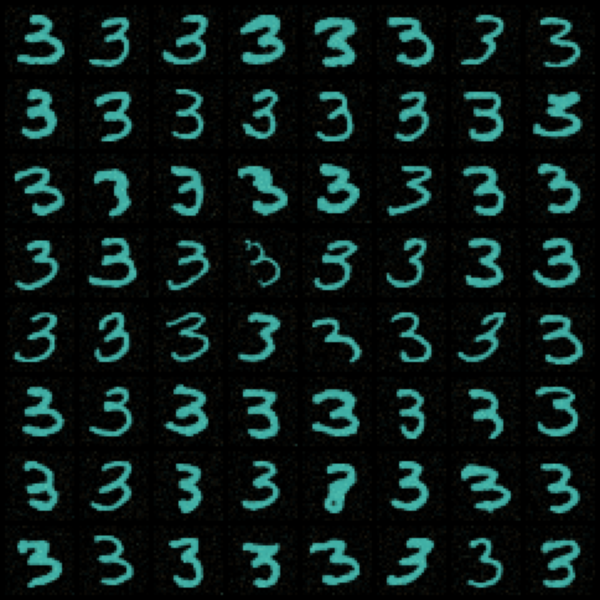}
        \caption*{$\widetilde p\left(\! x \middle\vert \substack{y_1=1  \\ y_2=1}\right) \vphantom{\widetilde p\left(\! x \middle\vert \substack{y_1=1  \\ y_2=1 \\ y_3=3}\right)}$}
    \end{subfigure}\hfill%
    \begin{subfigure}{0.16\textwidth} 
        \includegraphics[width=\linewidth]{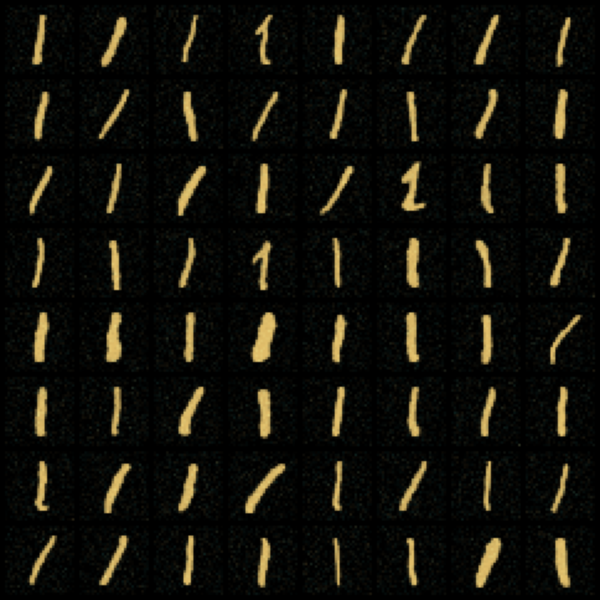}
        \caption*{$\widetilde p\left(\! x \middle\vert \substack{y_1=2  \\ y_2=2}\right) \vphantom{\widetilde p\left(\! x \middle\vert \substack{y_1=2  \\ y_2=2 \\ y_3=3}\right)}$}
    \end{subfigure}\hfill%
    \begin{subfigure}{0.16\textwidth}
        \includegraphics[width=\linewidth]{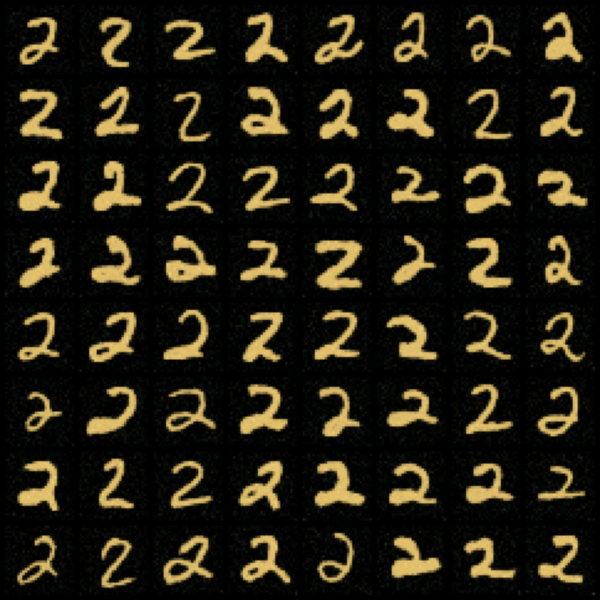}
        \caption*{$\widetilde p\left(\! x \middle\vert \substack{y_1=3  \\ y_2=3}\right) \vphantom{\widetilde p\left(\! x \middle\vert \substack{y_1=1  \\ y_2=2 \\ y_3=3}\right)}$}
    \end{subfigure}\hfill%
    \newline%
    \begin{subfigure}{0.16\textwidth} 
        \includegraphics[width=\linewidth]{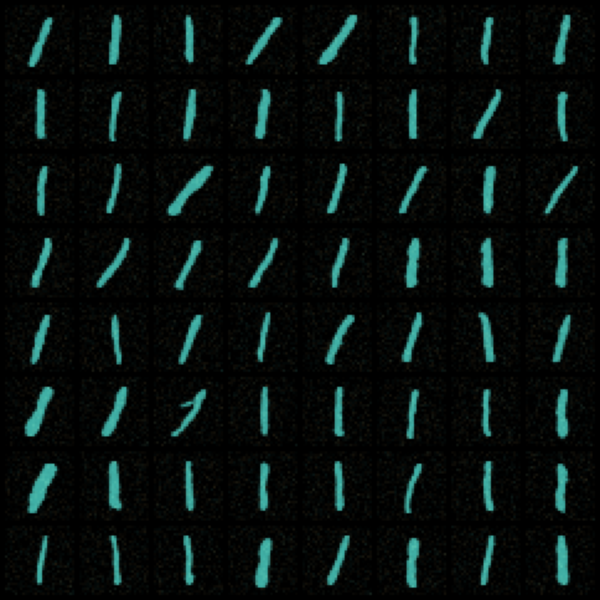}
        \caption*{$\widetilde p\left(\! x \middle\vert \substack{y_1=1  \\ y_2=2}\right) \vphantom{\widetilde p\left(\! x \middle\vert \substack{y_1=1  \\ y_2=2 \\ y_3=3}\right)}$}
    \end{subfigure}\hfill%
    \begin{subfigure}{0.16\textwidth} 
        \includegraphics[width=\linewidth]{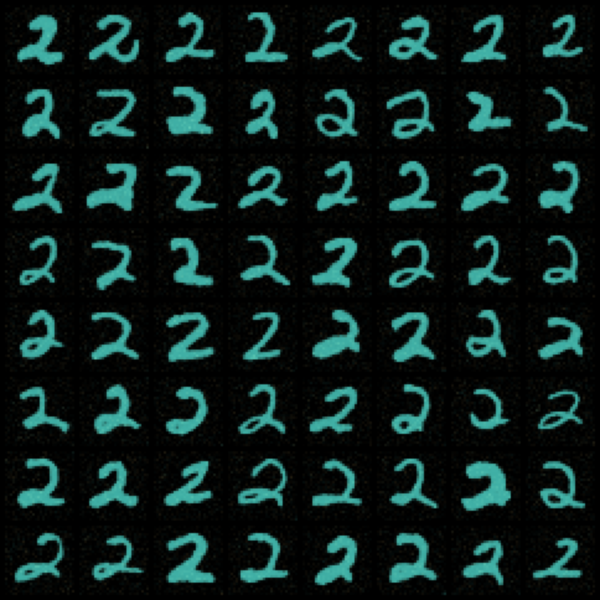}
        \caption*{$\widetilde p\left(\! x \middle\vert \substack{y_1=1  \\ y_2=3}\right) \vphantom{\widetilde p\left(\! x \middle\vert \substack{y_1=1  \\ y_2=2 \\ y_3=3}\right)}$}
    \end{subfigure}\hfill%
    \begin{subfigure}{0.16\textwidth} 
        \includegraphics[width=\linewidth]{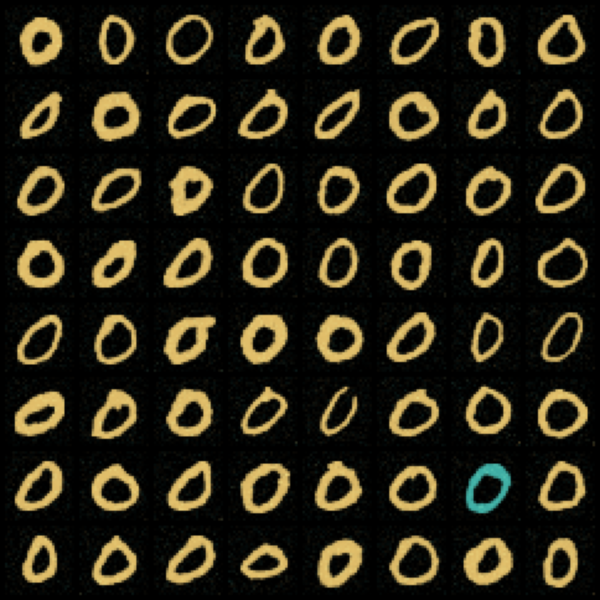}
        \caption*{$\widetilde p\left(\! x \middle\vert \substack{y_1=2  \\ y_2=3}\right) \vphantom{\widetilde p\left(\! x \middle\vert \substack{y_1=1  \\ y_2=2 \\ y_3=3}\right)}$}
    \end{subfigure}\hfill%
    \begin{subfigure}{0.16\textwidth} 
        \includegraphics[width=\linewidth]{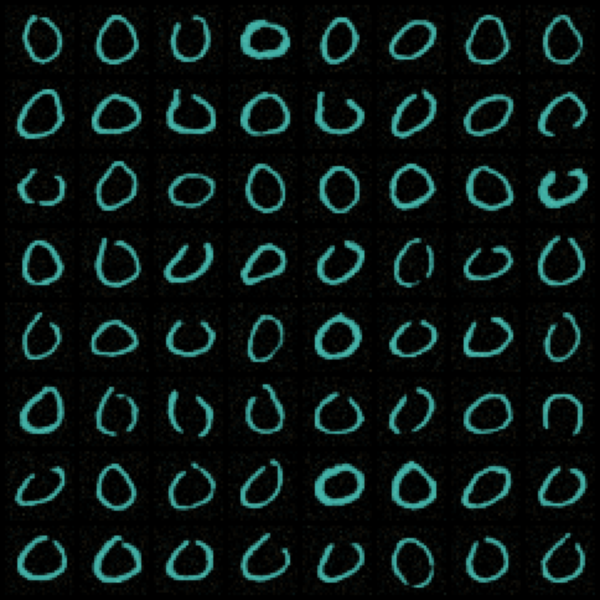}
        \caption*{$\widetilde p\left(\! x \middle\vert \substack{y_1=1  \\ y_2=2 \\ y_3=3}\right)$}
    \end{subfigure}\hfill%
    \begin{subfigure}{0.16\textwidth} 
        \includegraphics[width=\linewidth]{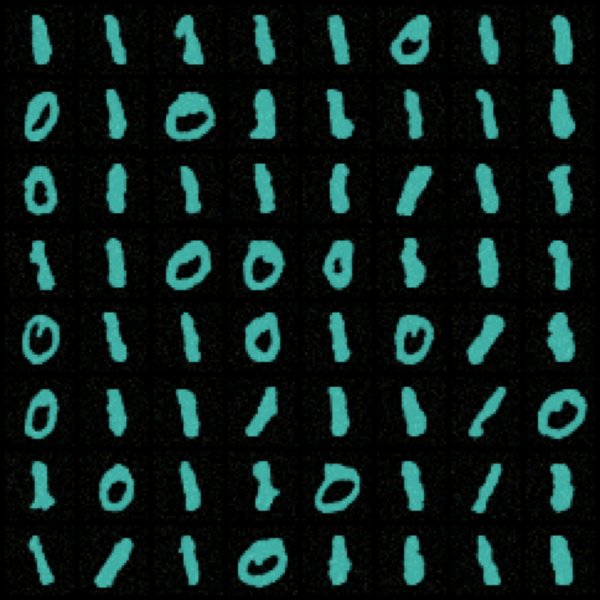}
        \caption*{$\widetilde p\left(\! x \middle\vert \substack{y_1=1  \\ y_2=2 \\ y_3=1}\right)$}
    \end{subfigure}\hfill%
    \begin{subfigure}{0.16\textwidth} 
        \includegraphics[width=\linewidth]{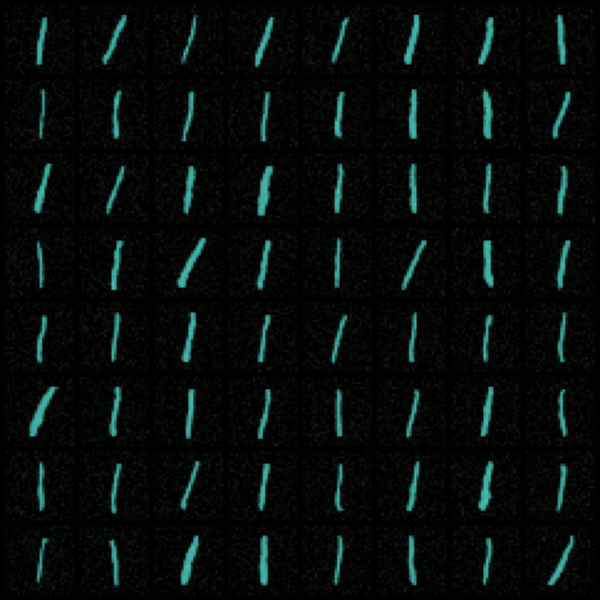}
        \caption*{$\widetilde p\left(\! x \middle\vert \substack{y_1=1  \\ y_2=2 \\ y_3=2}\right)$}
    \end{subfigure}\hfill%
    \caption{\textbf{Composed diffusion models on colored MNIST.} Samples from 3 pre-trained diffusion models and their various compositions.}
    \label{fig:exp_MN_images}
    \vskip-0.20in
\end{figure}

Finally, we empirically test our method for the composition of diffusion models on image generation task.

In this experiment, we composed three diffusion models that are pre-trained to generate colored MNIST digits~\citep{lecun1998mnist}. Model $p_1$ was trained to generate {\color{cyan}cyan} digits less than 4, $p_2$ to generate {\color{cyan}cyan} and {\color{beige}beige} digits less than 2, and $p_3$ to generate {\color{cyan}cyan} and {\color{beige}beige} even digits less than 4. In essence, each model was trained to generate digits with a specific property: $p_1$ generates {\color{cyan}cyan} digits, $p_2$ generates digits less than 2, and $p_3$ generates even digits.

We built the composition iteratively by factorizing the posterior as $\widetilde p(x \vert y_1, y_2, y_3) \propto \widetilde p(x) \widetilde p(y_1, y_2 \vert x) \widetilde p(y_3 \vert x, y_1, y_2) $. To this end, we first trained a classifier $\widetilde Q(y_1, y_2 \vert x_t)$ on trajectories sampled from the base models. This allows us to generate samples from $\widetilde p(x \vert y_1, y_2)$. We then trained an additional classifier $\widetilde Q(y_3 \vert x_t, y_1, y_2)$ on trajectories from compositions defined by $(y_1,y_2)$ to allow us to sample from $\widetilde p(x \vert y_1, y_2, y_3)$. Additional details can be found in Appendix \ref{app:exp_image}.

Figure~\ref{fig:exp_MN_images} shows samples from the pre-trained models and from selected compositions. The negating effect of \emph{not} conditioning on observations is clearly visible in the compositions using two variables. For example, $\widetilde p(x \vert y_1\!=\!1, y_2\!=\!1)$ only generates cyan 3 digits. Because there we \emph{do not} condition on $p_2$ or $p_3$, the composition excludes digits that have high probability under $p_2$ or $p_3$, i.e. those that are less than 2 or even. We can make a similar analysis of $\widetilde p(x \vert y_1\!=\!1, y_2\!=\!3)$. Cyan even digits have high density under both $p_1$ and $p_3$, but because $p_2$ is not conditioned on, the composition excludes digits less than two (i.e. cyan 0's). Finally, $\widetilde p(x \vert y_1\!=\!1, y_2\!=\!2, y_3\!=\!3)$ generates only cyan 0 digits, on which all base models have high density.

\section{Conclusion}
We introduced Compositional Sculpting, a general approach for composing iterative generative models. Compositions are defined through ``observations'', which enable us to emphasize or de-emphasize the density of the composition in regions where specific base models have high density. We highlighted two binary compositions, harmonic mean and contrast, which are analogous to the product and negation operations defined on EBMs. A crucial feature of the compositions we have introduced is that we can sample from them directly. By extending classifier guidance we are able to leverage the generative capabilities of the base models to produce samples from the composition. Through empirical experiments, we validated our approach for composing diffusion models and GFlowNets on toy domains, molecular generation, and image generation.

\section*{Acknowledgements}
TG and TJ acknowledge support from the Machine Learning for Pharmaceutical Discovery and Synthesis (MLPDS) consortium, DARPA Accelerated Molecular Discovery program, the NSF Expeditions grant (award 1918839) "Understanding the World Through Code", and from the MIT-DSTA collaboration.

SK and SDP were supported by the Technology Industries of Finland Centennial Foundation and the Jane and Aatos Erkko Foundation under project Interactive Artificial Intelligence for Driving R\&D, the Academy of Finland (flagship programme: Finnish Center for Artificial Intelligence, FCAI; grants 328400, 345604 and 341763), and the UKRI Turing AI World-Leading Researcher Fellowship, EP/W002973/1.

VG acknowledges support from the Academy of Finland (grant decision 342077) for "Human-steered next-generation machine learning for reviving drug design", the Saab-WASP initiative (grant 411025), and the Jane and Aatos Erkko Foundation (grant 7001703) for "Biodesign: Use of artificial intelligence in enzyme design for synthetic biology".

GY acknowledges support from the National Science Foundation under Cooperative Agreement PHY-2019786 (The NSF AI Institute for Artificial Intelligence and Fundamental Interactions, \url{http://iaifi.org}).

We thank Sammie Katt and Pavel Izmailov for the helpful discussions and assistance in making the figures.

We thank NeurIPS 2023 anonymous reviewers for the helpful feedback on our work.

\section*{Broader Impact}

We proposed a mathematical framework and methods for the composition of pre-trained generative models. While the primary emphasis of our work is on advancing foundational research on generative modeling methodology and principled sampling techniques, our work inherits ethical concerns associated with generative models such as creation of deepfake content and misinformation dissemination, as well as reproduction of biases present in the datasets used for model training. If not carefully managed, these models can perpetuate societal biases, exacerbating issues of fairness and equity. 

Our work contributes to research on the reuse of pre-trained models. This research direction promotes eco-friendly AI development, with the long-term goal of reducing energy consumption and carbon emissions associated with large-scale generative model training.
\newpage

\bibliographystyle{unsrtnat}
\bibliography{references}

\newpage

\appendix

\counterwithin{figure}{section}
\counterwithin{table}{section}
\counterwithin{algorithm}{section}

\section{Classifier Guidance for Parameterized Operations}
\label{app:alpha_cls_training}

This section covers the details of classifier guidance and classifier training for the parameterized operations (Section~\ref{sec:method_binary}).

While in the probabilistic model \eqref{eq:p_tilde_x_y12} all observations $y_i$ are exchangeable, in the parameterized model \eqref{eq:p_tilde_alpha} $y_1$ and $y_2$ are not symmetric. This difference requires changes in the classifier training algorithm for the parameterized operations.

We develop the method for the parameterized operations based on two observations:
\begin{itemize}
    \item $y_1$ appears in \eqref{eq:p_tilde_alpha} in the same way as in \eqref{eq:p_tilde_x_ys}-\eqref{eq:classifier_guidance_integral};
    \item the likelihood $\widetilde p(y_2 \vert x; \alpha)$ of $y_2$ given $x$ can be expressed as the function of $\widetilde p(y_1 \vert x)$ and $\alpha$:%
    \begin{subequations}%
    \label{eq:alpha_expr}
    \begin{gather}
        \begin{aligned}
        \widetilde p(y_2 \!= \!1 \vert x; \alpha) 
        &=
        \frac{\alpha p_1(x)}{\alpha p_1(x) + (1 - \alpha) p_2(x)} 
        =
        \frac{\alpha \frac{p_1(x)}{p_1(x) + p_2(x)}}{\alpha \frac{p_1(x)}{p_1(x) + p_2(x)} + (1 - \alpha)\frac{p_2(x)}{p_1(x) + p_2(x)}}
        \\
        &=
        \frac{\alpha \widetilde p(y_1 \!=\! 1 \vert x)}{\alpha \widetilde p(y_1 \!=\! 1 \vert x) + (1 - \alpha) \widetilde p(y_1 \!=\! 2 \vert x)},
        \end{aligned}
        \\
        \begin{aligned}
        \widetilde p(y_2 \!= \!2 \vert x; \alpha) 
        &=
        \frac{(1 - \alpha) p_2(x)}{\alpha p_1(x) + (1 - \alpha) p_2(x)} 
        =
        \frac{(1 - \alpha) \frac{p_2(x)}{p_1(x) + p_2(x)}}{\alpha \frac{p_1(x)}{p_1(x) + p_2(x)} + (1 - \alpha)\frac{p_2(x)}{p_1(x) + p_2(x)}}
        \\
        &=
        \frac{(1 - \alpha) \widetilde p(y_1 \!=\! 2 \vert x)}{\alpha \widetilde p(y_1 \!=\! 1 \vert x) + (1 - \alpha) \widetilde p(y_1 \!=\! 2 \vert x)}.
        \end{aligned}
    \end{gather}%
    \end{subequations}%
\end{itemize}

These two observations combined suggest the training procedure where 1) the terminal state classifier is trained to approximate $\widetilde p(y_1 \!=\!i \vert x)$ in the same way as in Section \ref{sec:classifier_gflow}; 2) the probability estimates $w_i(\widehat x, \widehat \alpha; \phi) \approx \widetilde p(y_2 \!=\!i; \widehat \alpha)$ are expressed through the learned terminal state classifier $\widetilde p(y_1 \!=\!i \vert x)$ via \eqref{eq:alpha_expr}. Below we provide details of this procedure for the case of GFlowNet composition.

\paragraph{Learning the terminal state classifier.} The marginal $y_1$ classifier $\widetilde Q_\phi(y_1 \vert x)$ is learned by minimizing the cross-entropy loss%
\begin{equation}
    \mathcal{L}_T(\phi) = \E\limits_{(\widehat x, \widehat y_1) \sim  \widetilde p(x, y_1)} \left[
        - \log \widetilde Q_\phi(y_1 \! = \! \widehat y_1 \vert x = \widehat x)
    \right].
\end{equation}%
Then, the joint classifier $\widetilde Q_\phi(y_1, y_2 \vert x; \alpha)$ is constructed as%
\begin{equation}
\widetilde Q_\phi(y_1, y_2 \vert x; \alpha) = \widetilde Q_\phi(y_1 \vert x) \widetilde Q_\phi(y_2 \vert x; \alpha),
\end{equation}%
where $\widetilde Q_\phi(y_2 \vert x; \alpha)$ can be expressed through the marginal $\widetilde Q_\phi(y_1 \vert x)$ via \eqref{eq:alpha_expr}.

\paragraph{Learning the non-terminal state classifier.} 
The non-terminal state classifier $\widetilde Q(y_1, y_2 \vert s; \alpha)$ models $y_1$ and $y_2$ jointly. Note that $\alpha$ is one of the inputs to the classifier model. Given a sampled trajectory $\widehat \tau$, labels $\widehat y_1, \widehat y_2$, and $\widehat \alpha$, the total cross-entropy loss of all non-terminal states in $\hat \tau$ is%
\begin{equation}
    \label{eq:alpha_traj_ce}%
    \ell(\widehat \tau, \widehat y_1, \widehat y_2, \widehat \alpha; \phi) = \sum_{t=0}^{|\tau| - 1} \left[ 
        -\log \widetilde Q_\phi(y_1 \! = \!  \widehat y_1, y_2 \! = \!  \widehat y_2 \vert s \! =  \! \widehat s_t; \widehat \alpha)
    \right].
\end{equation}%
The pairs $(\widehat \tau, \widehat y_1)$ can be generated via a sampling scheme similar to the one used for the terminal state classifier loss above: 1) $\widehat y_1 \sim \widetilde p(y_1)$ and 2) $\widehat \tau \sim p_{\widehat y_1}(\tau)$. An approximation of the distribution of $\widehat y_2$ given $\widehat \tau$ is constructed using \eqref{eq:alpha_expr}:%
\begin{subequations}%
\label{eq:alpha_w}
\begin{gather}%
    w_1(\widehat x, \widehat \alpha; \phi) = \frac{\widehat \alpha \widetilde Q_\phi(y_1 \! = \! 1 \vert x \!=\! \widehat x)}{\widehat \alpha \widetilde Q_\phi(y_1 \! = \! 1 \vert x \! = \! \widehat x ) + (1 - \widehat \alpha) \widetilde Q_\phi(y_1 \! = \! 2 \vert x \! = \! \widehat x)}  
    \approx
    \widetilde p(y_2 \! = \! 1 \vert x = \widehat x; \widehat \alpha),
    \\
    w_2(\hat x, \widehat \alpha; \phi) = \frac{(1 - \widehat \alpha) \widetilde Q_\phi(y_1 \! = \! 2 \vert x = \widehat x)}{\widehat \alpha \widetilde Q_\phi(y_1 \! = \! 1 \vert x = \widehat x) + (1 - \widehat \alpha) \widetilde Q_\phi(y_1 \! = \! 2 \vert x = \widehat x)}
    \approx
    \widetilde p(y_2 \! = \! 2 \vert x = \widehat x; \widehat \alpha).
\end{gather}%
\end{subequations}
Since these expressions involve outputs of the terminal state classifier which is being trained simultaneously, we again (see Section \ref{sec:classifier_gflow}) introduce the target network parameters $\overline{\phi}$ that are used to compute the probability estimates \eqref{eq:alpha_w}.

The training loss for the non-terminal state classifier is
\begin{equation}
    \label{eq:alpha_cls_loss_nonterm}
     \mathcal{L}_N(\phi, \overline{\phi})  =  \E\limits_{\widehat \alpha \sim p(\alpha)}\E\limits_{(\widehat \tau, \widehat y_1) \sim \widetilde p(\tau, y_1)} \left[
        \sum_{\widehat y_2 =1}^2  w_{\widehat y_2}(\widehat x, \widehat \alpha ;  \overline{\phi})  \ell(\widehat \tau,  \widehat y_1, \widehat y_2, \widehat \alpha;  \phi)
    \right],
\end{equation}%
where $p(\alpha)$ is sampling distribution over $\alpha \in (0, 1)$. In our experiments, we used the following sampling scheme for $\alpha$:%
\begin{equation}
    \widehat z \sim U[-B, B],
    \qquad
    \widehat \alpha = \frac{1}{1 + \exp(-\widehat z \,)}.
\end{equation}

\section{Analysis of Compositional Sculpting and Energy Operations}
\label{app:sculpting_vs_energy}

In this Section we provide additional mathematical analysis of compositional sculpting operations (harmonic mean, contrast, defined in Section~\ref{sec:method_binary}) and energy operations (product, negation, defined in Section~\ref{ssec:background_energy_ops})

Harmonic mean and product are not defined for pairs of distributions $p_1$, $p_2$ which have disjoint supports. In such cases, attempts at evaluation of the expressions for $p_1 \otimes p_2$ and $p_1\,\text{prod}\;p_2$ will lead to impossible probability distributions that have zero probability mass (density) everywhere\footnote{Informal interpretation: distributions with disjoint supports have empty ``intersections'' (think of the intersection of sets analogy)}. The result of both harmonic mean and product are correctly defined for any pair of distributions $p_1$, $p_2$ that have non-empty support intersection.

Notably, contrast is well-defined for any input distributions while negation is ill-defined for some input distributions $p_1$, $p_2$ as formally stated below (see Figure~\ref{fig:vs_main}~(d) for a concrete example).
\begin{proposition}
\label{prop:contrast_neg}
~
    \begin{enumerate}
        \item For any $\alpha \in (0, 1)$ the parameterized contrast operation $p_1 \contrast_{(1 - \alpha)} \; p_2$ \eqref{eq:alpha_ops} is well-defined: gives a proper distribution for any pair of distributions $p_1$, $p_2$.
        \item For any $\gamma \in (0, 1)$ there are infinitely many pairs of distributions $p_1$, $p_2$ such that the negation $p_1\,\text{neg}_\gamma\; p_2$ \eqref{eq:e_neg} results in an improper (non-normalizable) distribution.
    \end{enumerate}
\end{proposition}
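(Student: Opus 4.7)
For part~1, the key idea is a pointwise domination of the integrand by a fixed multiple of $p_1$. Since $\alpha > 0$ and $(1-\alpha)p_2(x) \geq 0$, wherever $p_1(x) > 0$ we have $\alpha p_1(x) + (1-\alpha) p_2(x) \geq \alpha p_1(x)$, which gives
\begin{equation*}
0 \leq \frac{(p_1(x))^2}{\alpha p_1(x) + (1-\alpha) p_2(x)} \leq \frac{p_1(x)}{\alpha},
\end{equation*}
with the convention that the ratio equals $0$ when $p_1(x) = 0$ (the numerator vanishes there, so the convention is unambiguous whenever $p_2(x) > 0$, and points where both densities vanish lie outside the support of the mixture prior $\widetilde p(x)$ from Section~\ref{sec:method_binary} and may be assigned any value). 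Integrating (or summing) the bound yields a normalization constant at most $1/\alpha < \infty$. Strict positivity of the normalization constant follows because $p_1$ is a probability measure and hence $p_1(x) > 0$ on a set of positive measure, on which the integrand is strictly positive. Thus $p_1 \contrast_{(1-\alpha)}\,p_2$ is a well-defined probability distribution.

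For part~2, the plan is to exhibit an explicit infinite family of counterexamples using centered Gaussians on $\mathbb{R}$. Take $p_1 = \mathcal{N}(0, \sigma_1^2)$ and $p_2 = \mathcal{N}(0, \sigma_2^2)$. A direct computation of the exponents gives
\begin{equation*}
\frac{p_1(x)}{(p_2(x))^\gamma} \varpropto \exp\!\left( \frac{x^2}{2}\left( \frac{\gamma}{\sigma_2^2} - \frac{1}{\sigma_1^2} \right) \right).
\end{equation*}
Whenever $\sigma_2^2 < \gamma\,\sigma_1^2$ (equivalently, $\sigma_2 < \sqrt{\gamma}\,\sigma_1$), the coefficient of $x^2$ is strictly positive and the expression grows without bound as $|x| \to \infty$, so $\int \frac{p_1(x)}{(p_2(x))^\gamma}\,dx = \infty$ and the negation is improper. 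For any fixed $\gamma \in (0,1)$, the set of admissible pairs $(\sigma_1, \sigma_2)$ is a non-empty open cone in $(0,\infty)^2$, which contains infinitely many (indeed a continuum of) pairs, producing infinitely many $(p_1, p_2)$ for which $p_1\,\text{neg}_\gamma\, p_2$ fails to normalize.

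\textbf{Main obstacle.} Neither part poses a substantial technical obstacle: part~1 is a one-line domination argument and part~2 reduces to a single explicit construction. The only subtlety is the bookkeeping at points where $p_1$ or $p_2$ vanishes in part~1; I would resolve this cleanly by appealing to the derivation of the contrast as the posterior $\widetilde p(x \vert y_1\!=\!1, y_2\!=\!1; \alpha)$ under the augmented model \eqref{eq:p_tilde_alpha}, which is only meaningfully defined on the support of the mixture prior $\widetilde p(x) = \tfrac{1}{2}(p_1(x) + p_2(x))$. On this support the denominator $\alpha p_1(x) + (1-\alpha)p_2(x)$ is strictly positive for $\alpha \in (0,1)$, so the ratio in \eqref{eq:alpha_ops} is unambiguously defined and the bound above applies directly.
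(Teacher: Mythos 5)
Your proof is correct and follows essentially the same route as the paper's: part~1 uses the identical domination bound $\frac{(p_1(x))^2}{\alpha p_1(x)+(1-\alpha)p_2(x)} \leq \frac{p_1(x)}{\alpha}$ integrated against $p_1$, and part~2 uses the same Gaussian counterexample family with the variance condition $\sigma_2^2 < \gamma\sigma_1^2$ (the paper allows equality and general means, at the cost of a small extra case analysis of the linear term, but this is cosmetic).
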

\begin{proof}

Without loss of generality, we prove the claims of the proposition assuming absolutely continuous distributions $p_1$, $p_2$ with probability density functions $p_1(\cdot)$, $p_2(\cdot)$.

\paragraph{Claim 1.} For any two distributions $p_1$, $p_2$ we have $p_1(x) \geq 0$, $p_2(x) \geq 0$, $\int_\mathcal{X} p_1(x)\,dx = \int_\mathcal{X} p_2(x)\,dx = 1 < \infty$. Then, the RHS of the expression for the parameterized contrast operation $p_1 \contrast_{(1 - \alpha)} \; p_2$ \eqref{eq:alpha_ops} satisfies%
\begin{equation}
    \frac{p_1(x)^2}{\alpha p_1(x) + (1 - \alpha) p_2(x)} = \frac{p_1(x)}{\alpha} \cdot \underbrace{\frac{p_1(x)}{p_1(x) + \frac{(1 - \alpha)}{\alpha} p_2(x)}}_{\leq 1} \leq \frac{p_1(x)}{\alpha}, \quad \forall \, x \in \operatorname{supp}(p_1) \cup \operatorname{supp}(p_2).
\end{equation}%
For points $x \notin \operatorname{supp}(p_1) \cup \operatorname{supp}(p_2)$, we set $\frac{p_1(x)^2}{\alpha p_1(x) + (1 - \alpha) p_2(x)} = 0$ since by construction the composite distributions do not have probability mass outside of the union of the supports of the original distributions. The above implies that%
\begin{equation}
    \int_\mathcal{X} \frac{p_1^2(x)}{\alpha p_1(x) + (1 - \alpha) p_2(x)} \, dx \leq \frac{1}{\alpha} \int_\mathcal{X} p_1(x) \, dx = \frac{1}{\alpha} < \infty.
\end{equation}%
Therefore, the RHS of the expression for the parameterized contrast operation $p_1 \contrast_{(1 - \alpha)} \; p_2$ \eqref{eq:alpha_ops} can be normalized, and the distribution $p_1 \contrast_{(1 - \alpha)} \; p_2$ is well-defined.

\paragraph{Claim 2.} For any $\gamma \in (0, 1)$ we provide an infinite collection of distribution pairs $p_1$ and $p_2$ such that negation $p_1\,\text{neg}_{\gamma}\; p_2$ results in a non-normalizable distribution.

For the given $\gamma \in (0, 1)$ we select four numbers $\mu_1 \in \mathbb{R}$, $\mu_2 \in \mathbb{R}$, $\sigma_1 > 0$, $\sigma_2 > 0$ such that%
\begin{equation}
    \label{eq:neg_gaussain_ex}
    \sigma_1^2 \geq \frac{1}{\gamma} \sigma_2^2,
\end{equation}%
Consider univariate normal distributions $p_1(x) = \mathcal{N}(x; \mu_1, \sigma_1^2)$, $p_2(x) = \mathcal{N}(x; \mu_2, \sigma_2^2)$ with density functions%
\begin{equation}
    p_i(x) = \mathcal{N}(x; \mu_i, \sigma_i^2) = \frac{1}{\sqrt{2 \pi \sigma_i^2}}\exp\left\{
    -\frac{(x - \mu_i)^2}{2\sigma_i^2}
    \right\}, \qquad i \in \{1, 2\}.
\end{equation}%
For such $p_1$ and $p_2$, the RHS of \eqref{eq:e_neg} is%
\begin{equation}
    \frac{p_1(x)}{(p_2(x))^\gamma} = \frac{1}{\left(\sqrt{2\pi}\right)^{1 - \gamma}} \frac{\sigma_2^\gamma}{\sigma_1} \exp\left\{
        x^2\left(
        \frac{\gamma \sigma_1^2 - \sigma_2^2}{2 \sigma_1^2 \sigma_2^2}
        \right)
        + x \left(
            \frac{\mu_1}{\sigma_1^2} - \gamma \frac{\mu_2}{\sigma_2^2}
        \right)
        + \gamma \frac{\mu_2^2}{2\sigma_2^2} - \frac{\mu_1^2}{2\sigma_1^2}
    \right\}.
\end{equation}%
Conditions \eqref{eq:neg_gaussain_ex} imply that the quadratic function under the exponent above has a non-negative coefficient for $x^2$. Therefore this function either grows unbounded as $x \to \infty$ (if the coefficients for the quadratic and linear terms are not zero), or constant (if the coefficients for quadratic and linear terms are zero). In either case, $\int_\mathbb{R} \nicefrac{p_1(x)}{(p_2(x))^\gamma} \, dx = \infty$.    
\end{proof}

\section{Proofs and Derivations}
\label{app:gflownets_method}

\subsection{Proof of Proposition \ref{prop:gflow_mix}}
\label{app:proof_prop_gflow_mix}

Our goal is to show that the policy \eqref{eq:gflow_mix} induces the mixture distribution $p_M(x) = \sum_{i=1}^m \omega_i p_i(x)$. 

\paragraph{Preliminaries.}%
In our proof below we use the notion of~``the probability of observing a state $s \in \mathcal{S}$ on a GFlowNet trajectory''. Following \citet{bengio2023gflownet}, we abuse the notation and denote this probability by%
\begin{equation}
    p_i(s) \triangleq p_i(\{\tau: s \in \tau\}) = \sum_{\tau \in \mathcal{T}_{s_0, s}} \prod_{t=0}^{|\tau| - 1} p_{i, F}(s_t \vert s_{t - 1}),
\end{equation}%
where $\mathcal{T}_{s_0, s}$ is the set of all (sub)trajectories starting at $s_0$ and ending at $s$. 
The probabilities induced by the policy \eqref{eq:gflow_mix} are denoted by $p_M(s)$. Note that $p_i(s)$ and $p_M(s)$ should not be interpreted as probability mass functions over the set of states $\mathcal{S}$. In particular $p_i(s_0) = p_M(s_0) = 1$ and sums $\sum_{s \in \mathcal{S}} p_i(s)$, $\sum_{s \in \mathcal{S}} p_M(s)$ are not equal to $1$ (unless $\mathcal{S} = \{s_0\}$). However, the functions $p_i(\cdot)$, $p_M(\cdot)$ restricted to the set of terminal states $\mathcal{X}$ give valid probability distributions over $\mathcal{X}$: $\sum_{x \in \mathcal{X}} p_i(x) = \sum_{x \in \mathcal{X}} p_M(x) = 1$.

By definition $p_i(\cdot)$ and $p_M(\cdot)$ satisfy the recurrent relationship%
\begin{equation}
    \label{eq:p_s_recurrence}
    p_i(s) = \sum_{s_*: (s_* \to s) \in \mathcal{A}} p_i(s_*) p_{i, F}(s \vert s_*),
    \qquad
    p_M(s) = \sum_{s_*: (s_* \to s) \in \mathcal{A}} p_M(s_*) p_{M, F}(s \vert s_*).
\end{equation}%

The joint distribution of $y$ and $\tau$ described in the statement of Proposition \ref{prop:gflow_mix} is $p(\tau, y\!=\!i) = w_i p_i(\tau)$. This joint distribution over $y$ and trajectories implies the following expressions for the distributions involving intermediate states $s$.%
\begin{gather}
    p(y\!=\!i) = \omega_i,
    \label{eq:p_y_marginal}
    \\
    p(\tau \vert y\!=\!i) = p_i(\tau) = \prod_{t=0}^{|\tau| - 1} p_{i, F}(s_t | s_{t - 1}),
    \\
    p(\tau) = \sum_{i=1}^m p(\tau \vert y \!=\!i) p(y \!=\! i) = \sum_{i=1}^m \omega_i p_i(\tau),
    \\
    p(s \vert y \!=\!i) = p_i(s),
    \label{eq:p_s_cond_y}
    \\
    p(s) = \sum_{i=1}^m p(s \vert y \!=\!i) p(y \!=\! i) = \sum_{i=1}^m \omega_i p_i(s).
    \label{eq:p_s_marginal}
\end{gather}%

\paragraph{Proof.}%
Using the notation introduced above, we can formally state our goal. We need to show that $p_M(x)$ induced by $p_{M, F}$ gives the mixture distribution%
\begin{equation}
    p_M(x) = \sum_{i=1}^m \omega_i p_i(x).
\end{equation}%
We prove a more general equation for all states $s \in \mathcal{S}$%
\begin{equation}
    \label{eq:mix_induction}
    p_M(s) = \sum_{i=1}^m \omega_i p_i(s) 
\end{equation}%
by induction over the DAG $(\mathcal{S}, \mathcal{A})$.

\paragraph{Base case.}%
Consider the initial state $s_0 \in \mathcal{S}$. By definition $p_i(s_0) = p_M(s_0) = 1$ which implies%
\begin{equation}
    p_M(s_0) = \sum_{i=1}^m \omega_i p_i(s_0).
\end{equation}%

\paragraph{Inductive step.}%
Consider a state $s$ such that \eqref{eq:mix_induction} holds for all predecessor states $s_*: (s_* \to s) \in \mathcal{A}$. For such a state we have%
\allowdisplaybreaks%
\begin{flalign}
    p_M(s) &= \sum_{s_*: (s_* \to s) \in \mathcal{A}} p_M(s_*) p_{M, F}(s \,\vert\, s_*)
    \quad 
    \{\text{used \eqref{eq:p_s_recurrence}}\}
    \\
    &= \sum_{s_*: (s_* \to s) \in \mathcal{A}} p_M(s_*)
    \left(
        \sum_{i=1}^m p(y = i | s_*) p_{i, F}(s | s_*)
    \right)
    \quad 
    \{\text{used definition of}~p_{M,F}\}
    \\
    &= \sum_{s_*: (s_* \to s) \in \mathcal{A}} \frac{p_M(s_*)}{p(s_*)}
    \left(
        \sum_{i=1}^m p(s_* \vert y\!=\!i) p(y\!=\!i) p_{i, F}(s | s_*)
    \right)
    \quad 
    \{\text{used Bayes' theorem}\}
    \\
    &= \sum_{s_*: (s_* \to s) \in \mathcal{A}} \frac{p_M(s_*)}{\sum_{i=1}^m \omega_i p_i(s_*)}
    \left(
        \sum_{i=1}^m \omega_i p_i(s_*) p_{i, F}(s | s_*)
    \right)
    \quad 
    \{\text{used \eqref{eq:p_y_marginal}, \eqref{eq:p_s_cond_y}, \eqref{eq:p_s_marginal}}\}
    \\
    &= \sum_{s_*: (s_* \to s) \in \mathcal{A}} 
    \left(
        \sum_{i=1}^m \omega_i p_i(s_*) p_{i, F}(s | s_*)
    \right)
    \quad
    \{
    \text{used induction hypothesis}
    \}
    \\
    & = \sum_{i=1}^m \omega_i
    \left(
        \sum_{s_*: (s_* \to s) \in \mathcal{A}}  p_i(s_*) p_{i, F}(s | s_*)
    \right)
    \quad
    \{
    \text{changed summation order}
    \}
    \\
    & = \sum_{i=1}^m \omega_i p_i(s),
    \qquad
    \{
    \text{used \eqref{eq:p_s_recurrence}}
    \}
    \label{eq:mix_induction_step1}
\end{flalign}
which proves \eqref{eq:mix_induction} for $s$.

\subsection{Proof of Proposition \ref{prop:gflow_class}}
\label{app:proof_prop_gflow_class}

\textbf{Claim 1}. Our goal is to prove the relationship \eqref{eq:gflow_y_cond_s} for all non-terminal states $s \in \mathcal{S} \setminus \mathcal{X}$. To prove this relationship, we invoke several important properties of Markovian probability flows on DAGs \citep{bengio2023gflownet}.

By Proposition 16 of \citet{bengio2023gflownet} for the given GFlowNet forward policy $p_F(\cdot \vert \cdot)$ there exists a unique backward policy $p_B(\cdot \vert \cdot)$ such that the probability of any complete trajectory $\tau = (s_0 \to \ldots \to s_{|\tau|}=x)$ in DAG $(\mathcal{S}, \mathcal{A})$ can be expressed as%
\begin{equation}
    p(\tau) = p(x) \prod_{t = 1}^{|\tau|} p_B(s_{t - 1} \vert s_{t}),
\end{equation}%
and the probability of observing a state $s \in \mathcal{S}$ on a trajectory can be expressed as%
\begin{equation}
    p(s) = \sum_{x \in \mathcal{X}} p(x) \sum_{\tau \in \mathcal{T}_{s, x}} \prod_{t=1}^{|\tau|} p_B(s_{t - 1} \vert s_t),
\end{equation}%
where $\mathcal{T}_{s, x}$ is the set of all (sub)trajectories starting at $s$ and ending at $x$. Moreover, $p_F(\cdot \vert \cdot)$ and $p_B(\cdot \vert \cdot)$ are related through the~``detailed balance condition'' \citep[][Proposition 21]{bengio2023gflownet}
\begin{equation}
    \label{eq:detailed_balance}
    p(s) p_F(s' \vert s) = p(s') p_B(s \vert s'), \qquad \forall\,(s \to s') \in \mathcal{A}.
\end{equation}

By the statement of Proposition \ref{prop:gflow_class}, in the probabilistic model $p(x, y)$, the marginal distribution $p(x)$ is realized by the GFlowNet forward policy $p_F(\cdot \vert \cdot)$ and $y$ is independent of intermediate states $s$. The joint distribution $p(s, y)$ is given by%
\begin{flalign}
    p(s, y) &= \sum_{x \in \mathcal{X}} p(x, y) \sum_{\tau \in \mathcal{T}_{s, x}} \prod_{t=1}^{|\tau|} p_B(s_{t - 1} \vert s_t)
    \\
    &=\sum_{x \in \mathcal{X}} p(x, y) \sum_{s': (s \to s') \in \mathcal{A}} p_B(s \vert s') \sum_{\tau \in \mathcal{T}_{s', x}} \prod_{t=1}^{|\tau|} p_B(s_{t - 1} \vert s_t)
    \\
    &=\sum_{s': (s \to s') \in \mathcal{A}} p_B(s \vert s') \underbrace{\sum_{x \in \mathcal{X}} p(x, y)  \sum_{\tau \in \mathcal{T}_{s', x}} \prod_{t=1}^{|\tau|} p_B(s_{t - 1} \vert s_t)}_{p(s', y)}
    \\
    &=\sum_{s': (s \to s') \in \mathcal{A}} p_B(s \vert s') p(s', y).
    \label{eq:p_s_y_backward}
\end{flalign}%
Expressing the conditional probability $p(y \vert s)$ through the joint $p(s, y)$ we obtain%
\begin{flalign}
    p(y \vert s) &= \frac{p(s, y)}{p(s)} 
    \quad
    \{
    \text{used definition of conditional probability}
    \}
    \\
    &= \frac{1}{p(s)} \sum_{s': (s \to s') \in \mathcal{A}} p_B(s \vert s') p(s', y)
    \quad
    \{
    \text{used \eqref{eq:p_s_y_backward}}
    \}
    \\
    &= \sum_{s': (s \to s') \in \mathcal{A}} p_B(s \vert s') \frac{p(s')}{p(s)} p(y \vert s')
    \quad
    \{
    \text{decomposed $p(s', y) = p(s') p(y \vert s')$}
    \}
    \\
    &= \sum_{s': (s \to s') \in \mathcal{A}} p_F(s' \vert s) p(y \vert s'),
    \quad
    \{
    \text{used \eqref{eq:detailed_balance}}
    \}
\end{flalign}%
which proves \eqref{eq:gflow_y_cond_s}.

\textbf{Claim 2}. Our goal is to show that the classifier-guided policy \eqref{eq:gflow_class} induces the conditional distribution $p(y \vert x)$.

We know (Section \ref{app:proof_prop_gflow_mix}) that the state probabilities induced by the marginal GFlowNet policy $p_F(\cdot \vert \cdot)$ satisfy the recurrence%
\begin{equation}
    \label{eq:cls_base_recurrence}
    p(s) = \sum_{s_*: (s_* \to s) \in \mathcal{A}} p(s_*)p_F(s \vert s_*).
\end{equation}%
Let $p_y(\cdot)$ denote the state probabilities induced by the classifier-guided policy \eqref{eq:gflow_class}. These probabilities by definition (Section \ref{app:proof_prop_gflow_mix}) satisfy the recurrence%
\begin{equation}%
    \label{eq:guided_recurrence}
    p_y(s) = \sum_{s_*: (s_* \to s) \in \mathcal{A}} p_y(s_*)p_F(s \vert s_*, y).
\end{equation}

We show that%
\begin{equation}
    \label{eq:cls_induction}
    p_y(s) = p(s \vert y),
\end{equation}%
by induction over DAG $(\mathcal{S}, \mathcal{A})$.

\paragraph{Base case.}%
Consider the initial state $s_0$. By definition $p_y(s_0) = 1$. At the same time $p(s_0 \vert y) = p(\{\tau: s_0 \in \tau \} \vert y) = 1$. Therefore $p_y(s_0) = p(s_0 \vert y)$.

\paragraph{Inductive step.}%
Consider a state $s$ such that \eqref{eq:cls_induction} holds for all predecessor states $s_*: (s_* \to s) \in \mathcal{A}$. For such a state we have%
\begin{align}
    p_y(s) &= \sum_{s_*: (s_* \to s) \in \mathcal{A}} p_y(s_*)p_F(s \vert s_*, y)
    \quad
    \{\text{used \eqref{eq:guided_recurrence}}\}
    \\
    &=
    \sum_{s_*: (s_* \to s) \in \mathcal{A}} p_y(s_*)p_F(s \vert s_*) \frac{p(y \vert s)}{p(y \vert s_*)}
    \quad
    \{\text{used \eqref{eq:gflow_class}}\}
    \\
    &=
    \sum_{s_*: (s_* \to s) \in \mathcal{A}} p(s_* \vert y)p_F(s \vert s_*) \frac{p(y \vert s)}{p(y \vert s_*)}
    \quad
    \{\text{used induction hypothesis}\}
    \\
    &= \sum_{s_*: (s_* \to s) \in \mathcal{A}} \frac{p(y \vert s_*)p(s_*)}{p(y)} p_F(s \vert s_*) \frac{p(y \vert s)}{p(y \vert s_*)}
    \quad
    \{\text{used Bayes' theorem}\}
    \\
    &=
    \frac{p(y \vert s)}{p(y)}
    \sum_{s_*: (s_* \to s) \in \mathcal{A}} p(s_*) p_F(s \vert s_*) 
    \quad
    \{\text{rearranged terms}\}
    \\ 
    &= \frac{p(y \vert s)}{p(y)}
    p(s)
    \quad
    \{\text{used \eqref{eq:cls_base_recurrence}}\}
    \\
    &= p(s \vert y),
    \quad
    \{\text{used Bayes' theorem}\}
\end{align}
which proves \eqref{eq:cls_induction} for state $s$.

\subsection{Proof of Theorem \ref{thm:composed_flows}}
\label{app:proof_prop_composed_flows}

By Proposition \ref{prop:gflow_mix} we have that the
policy%
\begin{equation}
    p_{M, F}(s' \vert s) = \sum_{i=1}^m p_{i, F}(s' \vert s) \widetilde p(y \! = \! i \vert s),
\end{equation}%
generates the mixture distribution $p_M(x) = \frac{1}{m} \sum_{i=1}^m p_i(x)$.

In the probabilistic model $\widetilde p(x, y_1 \ldots, y_n)$ the marginal distribution $\widetilde p(x) = p_M(x)$ is realized by the mixture policy $p_{M, F}$. Therefore, $\widetilde p(x, y_1, \ldots, y_n)$ satisfies the conditions of Proposition \ref{prop:gflow_class} which states that the conditional distribution $\widetilde p(x \vert y_1, \ldots, y_n)$ is realized by the classifier-guided policy%
\begin{equation}
    p_F(s' \vert s, y_1, \ldots, y_n) 
    = 
    p_{M, F}(s' \vert s) \frac{\widetilde p(y_1, \ldots, y_n \vert s')}{\widetilde p(y_1, \ldots, y_n \vert s)}
    = 
    \frac{\widetilde p(y_1, \dots y_n \,\vert\, s')}{\widetilde p(y_1, \dots y_n \,\vert\, s)} \sum_{i=1}^m p_{i, F}(s' \vert s) \widetilde p(y \! = \! i \vert s).
\end{equation}%

\subsection{Detailed Derivation of Classifier Training Objective}
\label{app:classifier_loss}

This section provides a more detailed step-by-step derivation of the non-terminal state classifier training objective \eqref{eq:cls_loss_nonterm}.

\paragraph{Step 1.} Our goal is to train a classifier $\widetilde Q(y_1,\dots,y_n|s)$. This classifier can be obtained as the optimal solution of%
\begin{equation}
    \label{eq:cls_loss_nonterm_expectation}
    \min\limits_\phi \mathbb{E}_{\widehat{\tau},\widehat{y}_1,\dots,\widehat {y}_n\sim \widetilde{p}(\tau,y_1,\dots,y_n)} \big[
        \ell(\widehat{\tau},\widehat {y}_1,\dots,\widehat{y}_n;\phi)
    \big],
\end{equation}%
where $\ell(\cdot)$ is defined in equation \eqref{eq:traj_ce}. An unbiased estimate of the loss (and its gradient) can be obtained by sampling $(\widehat{\tau},\widehat{y}_1,\dots,\widehat{y}_n)$ and evaluating \eqref{eq:traj_ce} directly. However sampling tuples $(\tau,y_1,\dots,y_n)$ is not straightforward. The following steps describe our proposed approach to the estimation of expectation in \eqref{eq:cls_loss_nonterm_expectation}.

\paragraph{Step 2.} The expectation in \eqref{eq:cls_loss_nonterm_expectation} can be expressed as%
\begin{equation}
    \mathbb{E}_{\widehat{\tau}, \widehat{y}_1 \sim \widetilde{p}(\tau, y_1)}\left[
        \sum\limits_{\widehat{y}_2=1}^m\dots\sum\limits_{\widehat{y}_n=1}^m \left(\prod\limits_{i=2}^n \widetilde p(y_i\!=\!\widehat{y}_i|x\!=\!\widehat {x})\right)\ell(\widehat{\tau},\widehat{y}_1,\dots,\widehat{y}_n;\phi)
    \right],
\end{equation}%
where we re-wrote the expectation over $(y_2,\dots,y_n) \vert \tau$ as the explicit sum of the form $\mathbb{E}_{q(z)}[g(z)] = \sum_{z \in \mathcal{Z}} q(z) g(z)$. The expectation over $(\tau,y_1)$ can be estimated by sampling pairs $(\widehat{\tau},\widehat{y}_1)$ as described in the paragraph after equation~\eqref{eq:traj_ce}: 1) $\widehat y_1 \sim \widetilde p(y_1)$ and 2) $\widehat \tau \sim p_{\widehat y_1}(\tau)$. The only missing part is the probabilities $\widetilde p(y_i\!=\!\widehat{y}_i|x\!=\!\widehat{x})$ which are not directly available.

\paragraph{Step 3.} Our proposal is to approximate these probabilities as $\widetilde p(y_1\!=\!j|x\!=\!\widehat{x})\approx w_j(\widehat{x};\phi)=\widetilde Q_\phi(y_1\!=\!j|x\!=\!\widehat x)$. The idea here is that the terminal state classifier $\widetilde Q_\phi(y_1|x)$, when trained to optimality, produces outputs exactly equal to the probabilities $\widetilde p(y_1|x)$, and the more the classifier is trained the better is the approximation of the probabilities.

\paragraph{Step 4.} Steps 1-3, give a procedure where the computation of the non-terminal state classification loss requires access to the terminal state classifier. As we described in the paragraph preceding equation \eqref{eq:cls_loss_nonterm}, we propose to train non-terminal and terminal classifiers simultaneously and introduce~“target network” parameters. The weights $w$ are computed by the target network $\widetilde Q_{\overline{\phi}}$.

Combining all the steps above, we arrive at objective \eqref{eq:cls_loss_nonterm} which we use to estimate the expectation in \eqref{eq:cls_loss_nonterm_expectation}.

\subsection{Assumptions and Proof of Proposition \ref{prop:diff_mixture_sde}}
\label{app:proof_diff_mixture_sde}

This subsection provides a formal statement of the assumptions and a more detailed formulation of Proposition \ref{prop:diff_mixture_sde}. 

The assumptions, the formulation of the result, and the proof below closely follow those of Theorem 1 of \citet{peluchetti2022nondenoising}. Theorem 1 in \citep{peluchetti2022nondenoising} generalizes the result of \citet{brigo2008general} (Corollary 1.3), which derives the SDE for mixtures of 1D diffusion processes.

We found an error in the statement and the proof of Theorem 1 (Appendix A.2 of \citep{peluchetti2022nondenoising}). The error makes the result of \citep{peluchetti2022nondenoising} for $D$-dimensional diffusion processes disagree with the result of \citep{brigo2008general} for $1$-dimensional diffusion processes. 

Here we provide a corrected version of Theorem 1 of \citep{peluchetti2022nondenoising} in a modified notation and a simplified setting (mixture of finite rather than infinite number of diffusion processes). Most of the content is directly adapted from \cite{peluchetti2022nondenoising}.

Notation:
\begin{itemize}
    \item for a vector-valued $f: \mathbb{R}^D \to \mathbb{R}^D$, the divergence of $f$ is denoted as $\nabla \cdot (f(x)) = \sum_{d=1}^D \frac{\partial}{\partial x_d} f_d(x)$,
    \item for a scalar-values $a: \mathbb{R}^D \to \mathbb{R}$, the divergence of the gradient of $a$ (the Laplace operator) is denoted by $\Delta (a(x)) = \nabla \cdot (\nabla a(x)) = \sum_{d=1}^D \frac{\partial^2}{\partial x_d^2} a(x)$.
\end{itemize}

\textbf{Assumption 1} (SDE solution). A given $D$-dimensional $\text{SDE}(f, g)$:%
\begin{equation}
    dx_t = f_t(x_t) dt + g_t dw_t,
\end{equation}%
with associated initial distribution $p_0(x)$ and integration interval $[0, T]$ admits a unique strong solution on $[0, T]$.

\textbf{Assumption 2} (SDE density). A given $D$-dimensional $\text{SDE}(f, g)$ with associated initial distribution $p_0(x)$ and integration interval $[0, T]$ admits a marginal density on $(0, T)$ with respect to the
$D$-dimensional Lebesgue measure that uniquely satisfies the Fokker-Plank (Kolmogorov-forward)
partial differential equation (PDE):%
\begin{equation}
    \frac{\partial p_t(x)}{\partial t} = -\nabla \cdot (f_t(x) p_t(x)) + \frac{1}{2} \Delta (g_t^2 p_t(x)).
\end{equation}

\textbf{Assumption 3} (positivity). For a given stochastic process, all finite-dimensional densities, conditional or not, are strictly positive.

\begin{theorem}[Diffusion mixture representation]
\label{thm:diff_mixture_formal}
Consider the family of $D$-dimensional SDEs on $t \in [0, T]$ indexed by $i \in \{1, \ldots, m\}$,%
\begin{gather}
    dx_{i, t} = f_{i, t}(x_{i, t})dt + g_{i, t} dw_{i, t},
    \qquad
    x_{i, 0} \sim p_{i, 0},
\end{gather}%
where the initial distributions $p_{i, 0}$ and the Wiener processes $w_{i, t}$ are all independent. Let $p_{i, t}$, $t \in (0, T)$ denote the marginal density of $x_{i, t}$. For mixing weights $\{\omega_i\}_{i=1}^m$, $\omega_i \geq 0$, $\sum_{i=1}^m \omega_i = 1$, define the mixture marginal density $p_{M, t}$ for $t \in (0, T)$ and the mixture initial distribution $p_{M, 0}$ by
\begin{equation}
    p_{M, t}(x) = \sum_{i=1}^m \omega_i p_{i, t}(x)
    \quad
    p_{M, 0}(x) = \sum_{i=1}^m \omega_i p_{M, 0}(x).
\end{equation}
Consider the $D$-dimensional SDE on $t \in [0, T]$ defined by
\begin{gather}
    \label{eq:mix_f_g}
    f_{M, t}(x) = \frac{\sum_{i=1}^m \omega_i p_{i, t}(x) f_{i, t} (x)}{p_{M, t}(x)},
    \qquad
    g_{M, t}(x) = \sqrt{
        \frac{\sum_{i=1}^m \omega_i p_{i, t}(x) g_{i, t}^2}{p_{M, t}(x)}
    },
    \\
    dx_t = f_{M, t}(x_t)dt + g_{M, t}(x_t) dw_t,
    \qquad
    x_{M, 0} \sim p_{M, 0}.
\end{gather}
It is assumed that all diffusion processes $x_{i, t}$ and the diffusion process $x_{M, t}$ satisfy the regularity Assumptions 1, 2, and 3. Then the marginal distribution of the diffusion $x_{M, t}$ is $p_{M, t}$.
\end{theorem}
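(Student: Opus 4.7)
The plan is to prove the theorem by showing that the mixture density $p_{M,t}$ satisfies the Fokker-Planck (FP) equation associated with the candidate SDE $(f_{M,t}, g_{M,t})$, and then invoking the uniqueness part of Assumption 2 (together with the matched initial condition $p_{M,0}$) to conclude that $p_{M,t}$ must in fact be the marginal density of the process $x_{M,t}$. This is the same high-level strategy used by \citet{brigo2008general} in the one-dimensional case and by \citet{peluchetti2022nondenoising} in the $D$-dimensional case, so the mathematical content lies in verifying that the algebraic definitions of $f_{M,t}$ and $g_{M,t}$ in \eqref{eq:mix_f_g} are exactly what is required to linearly combine the component FP equations into a single FP equation.

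Concretely, I would first write down, for each component $i$, the FP equation guaranteed by Assumption 2: $\partial_t p_{i,t}(x) = -\nabla \cdot \big(f_{i,t}(x) p_{i,t}(x)\big) + \tfrac{1}{2} \Delta\big(g_{i,t}^2 p_{i,t}(x)\big)$. Next I would multiply by $\omega_i$ and sum over $i$. Because differentiation in $t$ and the spatial operators $\nabla \cdot$ and $\Delta$ are linear, the left-hand side becomes $\partial_t p_{M,t}(x)$, while the right-hand side becomes
\begin{equation*}
-\nabla \cdot \Big(\textstyle\sum_{i=1}^m \omega_i f_{i,t}(x)\, p_{i,t}(x)\Big) + \tfrac{1}{2} \Delta\Big(\textstyle\sum_{i=1}^m \omega_i g_{i,t}^2\, p_{i,t}(x)\Big).
\end{equation*}
At this point the definitions in \eqref{eq:mix_f_g} are precisely what is needed: $\sum_i \omega_i f_{i,t}(x) p_{i,t}(x) = f_{M,t}(x) p_{M,t}(x)$ and $\sum_i \omega_i g_{i,t}^2 p_{i,t}(x) = g_{M,t}^2(x) p_{M,t}(x)$. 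Substituting yields the FP equation $\partial_t p_{M,t} = -\nabla \cdot (f_{M,t} p_{M,t}) + \tfrac{1}{2} \Delta(g_{M,t}^2 p_{M,t})$ associated with the mixture SDE.

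The step I expect to be the main subtlety is the diffusion term: the mixture diffusion coefficient $g_{M,t}(x)$ is genuinely state-dependent even when each $g_{i,t}$ is not, so one has to use the form of the FP equation valid for SDEs with spatially varying diffusion, namely with $\tfrac{1}{2}\Delta(g^2(x) p(x))$ rather than $\tfrac{1}{2} g^2 \Delta p$. This is exactly the point where \citet{peluchetti2022nondenoising} reportedly slipped; care is needed in applying Assumption 2 to the mixture SDE and in recognizing that the scalar-diffusion structure (isotropic covariance $g^2 I$) is preserved by the definition $g_{M,t}^2(x) = \sum_i \omega_i g_{i,t}^2 p_{i,t}(x) / p_{M,t}(x)$.

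Finally, because $p_{M,0}$ matches the initial distribution of $x_{M,t}$ by construction, and because Assumption 2 guarantees that the marginal density of $x_{M,t}$ is the unique solution of its FP equation with that initial condition, the two must coincide on $(0,T)$. Positivity (Assumption 3) ensures that all denominators appearing in the definitions of $f_{M,t}$ and $g_{M,t}$ are strictly positive, so the mixture drift and diffusion coefficients are well-defined wherever they are used. This completes the argument.
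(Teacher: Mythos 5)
Your proposal is correct and follows essentially the same route as the paper: sum the component Fokker--Planck equations weighted by $\omega_i$, recognize that the definitions of $f_{M,t}$ and $g_{M,t}$ turn the result into the Fokker--Planck equation of the mixture SDE (with the state-dependent diffusion term $\tfrac{1}{2}\Delta(g_{M,t}^2 p_{M,t})$, which is indeed the delicate point), and conclude by the uniqueness in Assumption 2. No gaps.
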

\begin{proof}
For $0 < t < T$ we have that
\begin{flalign}
    &\frac{\partial p_{M, t}(x)}{\partial t} = \frac{\partial}{\partial t} \left(
        \sum_{i=1}^m \omega_i p_{i, t}(x)
    \right)
    \\
    &\quad = \sum_{i=1}^m \omega_i \frac{\partial p_{i, t}(x)}{\partial t} 
    \\
    &\quad = \sum_{i=1}^m \omega_i \left(
        - \nabla \cdot (f_{i, t}(x) p_{i, t}(x)) 
        + \frac{1}{2} \Delta(g_{i, t}^2 p_{i, t}(x))
    \right)
    \\
    &\quad = \sum_{i=1}^m \omega_i \left(
        - \nabla \cdot \left(\frac{p_{i, t}(x) f_{i, t}(x) }{p_{M, t}(x)} p_{M, t}(x)\right) 
        + \frac{1}{2} \Delta\left(\frac{p_{i, t}(x) g_{i, t}^2 }{p_{M, t}(x)} p_{M, t}(x)\right)
    \right)
    \\
    &\quad = -\nabla \cdot \left( 
        \sum_{i=1}^m \frac{\omega_i p_{i, t}(x) f_{i, t}(x)}{p_{M, t}(x)} p_{M, t}(x)
    \right)    
    + \frac{1}{2} \Delta\left(
        \sum_{i=1}^m \frac{\omega_i p_{i, t}(x) g_{i, t}^2}{p_{M, t}(x)} p_{M, t}(x)
    \right)
    \\
    &\quad = -\nabla \cdot (f_{M, t}(x) p_{M, t}(x)) + \frac{1}{2} \Delta (g_{M, t}^2 p_{M, t}(x)).
\end{flalign}
The second is an exchange of the order of summation and differentiation, the third line is the application of the Fokker-Planck PDEs for processes $x_{i, t}$, the fourth line is a rewriting in terms of $p_{M, t}$, the fifth line is another exchange of the order of summation and differentiation. The result follows by noticing that $p_{M, t}(x)$ satisfies the Fokker-Planck equation of $\text{SDE}(f_M, g_M)$.
\end{proof}

\paragraph{Proof of Proposition \ref{prop:diff_mixture_sde}.} Below, we show that the result of Proposition \ref{prop:diff_mixture_sde} follows from Theorem \ref{thm:diff_mixture_formal}.

First, we rewrite $f_{M, t}(x_t)$ and $g_{M, t}(x_t)$ in \eqref{eq:mix_f_g} in terms of the classifier probabilities \eqref{eq:diff_lambda}:%
\begin{gather}
    f_{M, t}(x_t) = \frac{\sum_{i=1}^m \omega_i p_{i, t}(x_t) f_{i, t} (x_t)}{p_{M, t}(x_t)}
    =\sum_{i=1}^m p(y\!=\!i \vert x_t) f_{i, t}(x_t),
    \\
    g_{M, t}(x_t) = \sqrt{
        \frac{\sum_{i=1}^m \omega_i p_{i, t}(x_t) g_{i, t}^2}{p_{M, t}(x_t)}
    }
    = \sqrt{\sum_{i=1}^m p(y\!=\!i \vert x_t) g_{i, t}^2}.
\end{gather}%
With these expressions, we apply the result of Theorem \ref{thm:diff_mixture_formal} to the base forward processes $dx_{i, t} = f_{i,t}(x_{i, t})\,dt + g_{i,t}\, dw_{i, t}$  and obtain the mixture forward process in equation \eqref{eq:diff_mixture_SDE_f}.

From the forward process, we derive the backward process following \citet{song2021scorebased}. Using the result of \citet{anderson1982reverse}, the backward process for \eqref{eq:diff_mixture_SDE_f} is given by%
\begin{equation}
    \label{eq:diff_mixture_SDE_b_raw}
    dx_t = \left[f_{M, t}(x_t) - \nabla_{x_t}(g_{M, t}^2(x_t)) - g_{M, t}^2(x_t) \nabla_{x_t}\log p_{M, t}(x_t) \right] dt + g_{M, t}(x_t) \,d\overline{w}_t.
\end{equation}
Note that the term $\nabla_{x_t}(g_{M, t}^2(x_t))$ is due to the fact that the diffusion coefficient $g_{M, t}(x_t)$ in \eqref{eq:diff_mixture_SDE_f} is a function of $x$ (cf., equation (16), Appendix A in \citep[][]{song2021scorebased}). This term can be transformed as follows%
\begin{flalign}
    & \nabla_{x_t} (g_{M, t}^2(x_t)) = \nabla_{x_t} \left(\sum_{i=1}^m p(y\!=\!i \vert x_t) g_{i, t}^2 \right)
    \\
    & \quad = \sum_{i=1}^m g_{i, t}^2 \nabla_{x_t} p(y\!=\!i \vert x_t)
    \\
    & \quad = \sum_{i=1}^m p(y\!=\!i \vert x_t) g_{i, t}^2  \nabla_{x_t} \log p(y\!=\!i \vert x_t)
    \\
    & \quad = \sum_{i=1}^m p(y\!=\!i \vert x_t) g_{i, t}^2  \nabla_{x_t} \Big( \log\omega_i + \log p_{i, t}(x_t)  - \log p_{M, t}(x_t) \Big)
    \\
    & \quad = \sum_{i=1}^m p(y\!=\!i \vert x_t) g_{i, t}^2  \Big(\nabla_{x_t}  \log p_{i, t}(x_t)  - \nabla_{x_t} \log p_{M, t}(x_t) \Big)
    \\
    & \quad = \left(\sum_{i=1}^m p(y\!=\!i \vert x_t) g_{i, t}^2  s_{i, t}(x_t)\right) - g_{M, t}^2(x_t) \nabla_{x_t} \log p_{M, t}(x_t).
    \label{eq:nabla_g_m_t_sq}
\end{flalign}%
Substituting the last expression in \eqref{eq:diff_mixture_SDE_b_raw}, we notice that the term $g_{M, t}^2(x_t) \nabla_{x_t} \log p_{M, t}(x_t)$ cancels out, and, after simple algebraic manipulations, we arrive at \eqref{eq:diff_mixture_SDE_b}.

\subsection{Proof of Theorem \ref{thm:composed_diff_sde}}
\label{app:proof_composed_diff_sde}

We proove Theorem \ref{thm:composed_diff_sde} in the assumptions of Section \ref{app:proof_diff_mixture_sde}. In Section \ref{app:proof_diff_mixture_sde} we established that the mixture diffusion process has the forward SDE%
\begin{equation}%
    \label{eq:diff_mixture_SDE_f_pre_cls}
    dx_t = f_{M, t}(x_t) dt + g_{M, t}(x_t) \,dw_t.
\end{equation}%
and the backward SDE%
\begin{equation}%
    \label{eq:diff_mixture_SDE_b_pre_cls}
    dx_t = \left[f_{M, t}(x_t) - \nabla_{x_t}(g_{M, t}^2(x_t)) - g_{M, t}^2(x_t) \nabla_{x_t}\log p_{M, t}(x_t) \right] dt + g_{M, t}(x_t) \,d\overline{w}_t.
\end{equation}%
We apply classifier guidance with classifier $\widetilde p(y_1, \ldots, y_n \vert x_t)$ to the mixture diffusion process, following \citet{song2021scorebased} (see equations (48)-(49) in \citep{song2021scorebased}). The backward SDE of the classifier-guided mixture diffusion is%
\begin{equation}%
    \label{eq:diff_mixture_SDE_b_with_cls}
    dx_t = \left[f_{M, t}(x_t) - \nabla_{x_t}(g_{M, t}^2(x_t)) - g_{M, t}^2(x_t) \Big( \nabla_{x_t}\log p_{M, t}(x_t) + \nabla_{x_t} \log \widetilde p(y_1, \ldots, y_n \vert x_t) \Big) \right] dt + g_{M, t}(x_t) \,d\overline{w}_t.
\end{equation}%
Finally, we arrive at \eqref{eq:diff_composed_SDE} by substituting \eqref{eq:nabla_g_m_t_sq} in the above, canceling out the term $g_{M, t}^2(x_t) \nabla_{x_t} \log p_{M, t}(x_t)$, and applying simple algebraic manipulations.

\section{Implementation Details}
\label{app:implementation_details}

\subsection{Classifier Guidance in GFlowNets}

Classifier guidance in GFlowNets \eqref{eq:gflow_class} is realized through modification of the base forward policy via the multiplication by the ratio of the classifier outputs $\nicefrac{p(y \vert s')}{p(y \vert s)}$. The ground truth (theoretically optimal) non-terminal state classifier $p(y \vert s)$ by Proposition \ref{prop:gflow_class} satisfies \eqref{eq:gflow_y_cond_s} which ensures that the guided policy \eqref{eq:gflow_class} is valid, i.e. for any state $s \in \mathcal{S}$
\begin{equation}
    \sum_{s': (s \to s') \in \mathcal{A}} p_F(s' \vert s, y) = 
    \sum_{s': (s \to s') \in \mathcal{A}} p_F(s' \vert s) \frac{p(y \vert s')}{p(y \vert s)} = \frac{1}{p(y \vert s)} \underbrace{\sum_{s': (s \to s') \in \mathcal{A}} p_F(s' \vert s) p(y \vert s')}_{= p(y \vert s)~\text{by Proposition \ref{prop:gflow_class}}} = 1.
\end{equation}

In practice, the ground truth values of $p(y \vert s)$ are unavailable. Instead, an approximation $Q_\phi(y \vert s) \approx p(y \vert s)$ is learned. Equation \eqref{eq:gflow_y_cond_s} might not hold for the learned classifier $Q_\phi$, but we still wish to use $\widetilde Q_\phi$ for classifier guidance in practice. 
In order to ensure that the classifier-guided policy is valid in practice even when the approximation $Q_\phi(y \vert s)$ of the classifier $p(y \vert s)$ is used, we implement guidance as described below. 

First, we express the guided policy \eqref{eq:gflow_class} in terms of log-probabilities:%
\begin{equation}
    \log p_F(s' \vert s, y) = \log p_F(s' \vert s) + \log p(y \vert s') - \log p(y \vert s).
\end{equation}%
Parameterizing distributions through log-probabilities is common practice in probabilistic modeling: GFlowNet forward policies \cite{bengio2021flow, malkin2022trajectory} and probabilistic classifiers are typically parameterized by deep neural networks that output logits (unnormalized log-probabilities). 

Second, in the log-probability parameterization the guided policy \eqref{eq:gflow_class} can be equivalently expressed as%
\begin{flalign}
    p_F(s' \vert s, y) &= \big[\operatorname{softmax}\big(\log p_F(\cdot \vert s) + \log p(y \vert \cdot) - \log p(y \vert s)\big)\big]_{s'}
    \\
    &= 
    \frac{\exp\big(\log p_F(s' \vert s) + \log p(y \vert s') - \log p(y \vert s) \big)}{\sum_{s'': (s \to s'') \in \mathcal{A}} \exp\big(\log p_F(s'' \vert s) + \log p(y \vert s'') - \log p(y \vert s) \big)}.
    \label{eq:gflow_class_softmax}
\end{flalign}%
In theory, the softmax operation can be replaced with simple exponentiation, i.e. the numerator in \eqref{eq:gflow_class_softmax} is sufficient on its own since Proposition \ref{prop:gflow_class} ensures that the sum in the denominator equals to $1$. However, using the softmax is beneficial in practice when we substitute learned classifier $Q_\phi(y \vert s)$ instead of the ground truth classifier $p(y \vert s)$. Indeed when $Q_\phi(y \vert s)$ does not satisfy \eqref{eq:gflow_y_cond_s}, the softmax operation ensures that the guided policy%
\begin{flalign}
    \label{eq:gflow_class_q_softmax}
    p_F(s' \vert s, y) &= \big[\operatorname{softmax}\big(\log p_F(\cdot \vert s) + \log Q_\phi(y \vert \cdot) - \log Q_\phi(y \vert s)\big)\big]_{s'} 
\end{flalign}%
is valid (i.e. probabilities sum up to $1$ over $s'$). The fact the softmax expression is valid in theory ensures that policy \eqref{eq:gflow_class_q_softmax} guided by $Q_\phi(y \vert s)$ approaches the ground truth policy (guided by $p(y \vert s)$) as $Q_\phi (y \vert s)$ approaches $p(y \vert s)$ throughout training.

\section{Experiment details}
\label{app:experiment_details}
\subsection{2D Distributions with GFlowNets}
\label{app:exp_2dgrid}
The base GFlowNet forward policies $p_{i, F}(s' \vert s; \theta)$ were parameterized as MLPs with 2 hidden layers and 256 units in each hidden layer. The cell coordinates of a state $s$ on the 2D $32 \times 32$ grid were one-hot encoded. The dimensionality of the input was $2 \cdot 32$. The outputs of the forward policy network were the logits of the softmax distribution over 3 action choices: 1) move down; 2) move right; 3) stop.

We trained base GFlowNets with the trajectory balance loss \citep{malkin2022trajectory}. We fixed the uniform backward policy in the trajectory balance objective. We used Adam optimizer \citep{kingma2014adam} with learning rate $0.001$, and pre-train the base models for $20\,000$ steps with batch size $16$ ($16$ trajectories per batch). The log of the total flow $\log Z_\theta$ was optimized with Adam with a learning rate $0.1$. In order to promote exploration in trajectory sampling for the trajectory balance objective, we used the sampling policy which takes random actions (uniformly) with probability $0.05$ but, otherwise, follows the current learned forward policy.

The classifier was parameterized as MLP with 2 hidden layers and 256 units in each hidden layer. The inputs to the classifier were one-hot encoded cell coordinates, terminal state flag ($\{0, 1\}$), and $\log(\nicefrac{\alpha}{(1 - \alpha)})$ (in the case of parameterized operations). The classifier outputs were the logits of the joint label distribution $\widetilde Q_\phi(y_1, \ldots, y_n \vert s)$ for non-terminal states $s$ and the logits of the marginal label distribution $\widetilde Q_\phi(y_1 \vert x)$ for terminal states $x$.

We trained the classifier with the loss described in Section \ref{sec:classifier_gflow}. We used Adam with learning rate $0.001$. We performed $15\,000$ training steps with batch size $64$ ($64$ trajectories sampled from each of the base models per training step). We updated the target network parameters $\overline{\phi}$ as the exponential moving average (EMA) of $\phi$ with the smoothing factor $0.995$. We linearly increased the weight $\gamma(\text{step})$ of the non-terminal state loss from $0$ to $1$ throughout the first $3\,000$ steps and kept constant $\gamma = 1$ afterward. For the $\alpha$-parameterized version of the classifier (Section \ref{app:alpha_cls_training}), we used the following sampling scheme for $\alpha$: $\widehat z \sim U[-3.5, 3.5]$, $\widehat \alpha = \frac{1}{1 + \exp(-\widehat z \,)}$.

\paragraph{Quantitative evaluation.} For each of the composite distributions shown in Figure \ref{fig:exp_grid} we evaluated the L1-distance $L_1(p_\text{method}, p_\text{GT}) = \sum_{x \in \mathcal{X}} |p_\text{method}(x) - p_\text{GT}(x)|$ between the distribution $p_\text{method}$ induced by the classifier-guided policy and the ground-truth composition distribution $p_\text{GT}$ computed from the known base model probabilities $p_i$. The evaluation results are presented below.

Figure \ref{fig:exp_grid} \textbf{top row}. $p_1 \otimes p_2$: $L_1 = 0.071$; $p_1 \contrast p_2$: $L_1 = 0.086$; $p_1 \contrast_{0.95}\;p_2$: $L_1 = 0.167$.

Figure \ref{fig:exp_grid} \textbf{bottom row}. $\widetilde p(x \vert y_1 \! = \! 1, y_2 = \! 2)$: $L_1 = 0.076$; $\widetilde p(x \vert y_1 \! = \! 1, y_2 \! = \! 2, y_3 \! = \! 3)$: $L_1 = 0.087$; $\widetilde p(x \vert y_1 \! = \! 2, y_2 \! = \! 2)$: $L_1 = 0.112$; $\widetilde p(x \vert y_1 \! = \! 2, y_2 \! = \! 2, y_3 \! = \! 2)$: $L_1 = 0.122$.

Figure \ref{fig:cls_loss_2d_grid} shows the distance between the composition and the ground truth as a function of the number of training steps for the classifier as well as the terminal and non-terminal classifier learning curves.

\subsection{Molecule Generation}
\label{app:exp_mol}

\paragraph{Reward normalization.} We used the following normalization rules for SEH, SA, and QED rewards in the molecule domain.
\begin{itemize}
    \item $\text{SEH} = \nicefrac{\text{SEH}_\text{raw}}{8}$;
    \item $\text{SA} = \frac{10 - \text{SA}_\text{raw}}{9}$;
    \item $\text{QED} = \text{QED}_\text{raw}$.
\end{itemize}

\paragraph{Training details and hyperparameters.} The base GFlowNet policies were parameterized as graph neural networks with Graph Transformer architecture \citep{yun2019graph}. We used 6 transformer layers with an embedding of size 128. The input to the Graph Transformer was the graph of fragments with node attributes describing fragments and edge attributes describing attachment points of the edges.

The base GFlowNets were trained with trajectory balance loss. We used Adam optimizer. For the policy network $p_F(s' \vert s; \theta)$, we set the initial learning rate $0.0005$ and exponentially decayed with the factor $2^{\nicefrac{\text{step}}{20\,0000}}$. For the log of the total flow $\log Z_\theta$ we set the initial learning rate $0.0005$ and exponentially decayed with the factor $2^{\nicefrac{\text{step}}{50\,000}}$. We trained the base GFlowNets for $15\,000$ steps with batch size $64$ ($64$ trajectories per batch).  In order to promote exploration in trajectory sampling for the trajectory balance objective, we used the sampling policy which takes random actions (uniformly) with probability $0.1$ but, otherwise, follows the current learned forward policy.

The classifier was parameterized as a graph neural network with Graph Transformer architecture. We used $4$ transformer layers  with embedding size $128$. The inputs to the classifier were the fragment graph, terminal state flag ($\{0, 1\}$), and $\log(\nicefrac{\alpha}{(1 - \alpha)})$ (in case of parameterized operations). The classifier outputs were the logits of the joint label distribution $\widetilde Q_\phi(y_1, \ldots, y_n \vert s)$ for non-terminal states $s$ and the logits of the marginal label distribution $\widetilde Q_\phi(y_1 \vert x)$ for terminal states~$x$.

We trained the classifier with the loss described in Section \ref{sec:classifier_gflow}. We used Adam with learning rate $0.001$. We performed $15\,000$ training steps with batch size $8$ ($8$ trajectories sampled from each of the base models per training step). We updated the target network parameters $\overline{\phi}$ as the exponential moving average (EMA) of $\phi$ with the smoothing factor $0.995$. We linearly increased the weight $\gamma(\text{step})$ of the non-terminal state loss from $0$ to $1$ throughout the first $4\,000$ steps and kept constant $\gamma = 1$ afterward.  For the $\alpha$-parameterized version of the classifier (Section \ref{app:alpha_cls_training}), we used the following sampling scheme for $\alpha$: $\widehat z \sim U[-5.5, 5.5]$, $\widehat \alpha = \frac{1}{1 + \exp(-\widehat z \,)}$.

\subsection{Colored MNIST Generation via Diffusion Models}
\label{app:exp_image}
The colored MNIST experiment in Section~\ref{sec:mnist} follows the method for composing diffusion models introduced in Section~\ref{sec:method_diffusion}. The three base diffusion models were trained on colored MNIST digits generated from the original MNIST dataset. These colored digits were created by mapping MNIST images from their grayscale representation to either the red or green channel, leaving the other channels set to 0. For Figure~\ref{fig:exp_MN_images} we post-processed the red and green images generated by the base models and compositions into beige and cyan respectively, which are more accessible colors for colorblind people.

\paragraph{Models, training details, and hyperparameters.} The base diffusion models were defined as VE SDEs~\citep{song2021scorebased}. Their score models were U-Net~\citep{ronneberger2015u} networks consisting of 4 convolutional layers with 64, 128, 256, and 256 channels and 4 matching transposed convolutional layers. Time was encoded using 256-dimensional Gaussian random features \citep{tancik2020fourier}. The score model was trained using Adam optimizer~\citep{kingma2014adam} with a learning rate decreasing exponentially from $10^{-2}$ to $10^{-4}$. We performed 200 training steps with batch size 32.

The first classifier $\widetilde Q(y_1, y_2 | x_t)$ was a convolutional network consisting of 2 convolutional layers with 64 and 96 channels and three hidden layers with 512, 256 and 256 units. This classifier is time-dependent and used 128-dimensional Gaussian random features to embed the time. The output was a 3x3 matrix encoding the predicted log-probabilities. The classifier was trained on trajectories sampled from the reverse SDE of the base diffusion models using the AdaDelta optimizer \citep{zeiler2012adadelta} with a learning rate of $1.0$. We performed 700 training steps with batch size 128. For the first 100 training steps the classifier was only trained on terminal samples.

The second conditional classifier $\widetilde Q(y_3| y_1, y_2, x_t)$ was a similar convolutional network with 2 convolutional layers with 64 channels and two hidden layers with 256 units. This classifier is conditioned both on time and on $(y_1, y_2)$. The time variable was embedded used 128-dimensional Gaussian random features. The $(y_1, y_2)$ variables were encoded using a 1-hot encoding scheme. The output of the classifier was the three predicted log-probabilities for $y_3$. Contrary to the first classifier, this one was not trained on the base diffusion models but rather on samples from the posterior $\widetilde p(x | y_1, y_2)$. It's loss function was:
\begin{equation}
     \mathcal{L}_c(\phi)  =  \E_{(\widehat x_0, \widehat x_t, \widehat y_2, \widehat y_1, t) \sim \widetilde p(x_0, x_t | y_1, y_2) \widetilde p(y_1) \widetilde p(y_2) p(t)} \left[ \sum_{\widehat y_3 =1}^m - w_{\widehat y_3}(\widehat x_0) \log \widetilde Q_\phi(\widehat y_3 | \widehat y_1, \widehat y_2, \widehat x_t) \right]
\end{equation}
where $w_{\widehat y_3}(\widehat x_0)$ is estimated using the first classifier.
The classifier was trained using the AdaDelta optimizer \citep{zeiler2012adadelta} with a learning rate of $0.1$. We performed 200 training steps with batch size 128.

\paragraph{Sampling.} Sampling from both the individual base models and the composition was done using the Predictor-Corrector sampler \citep{song2021scorebased}. We performed sampling over 500 time steps to generate the samples shown in Figure~\ref{fig:exp_MN_images}. The samples used to train the classifier were generated using the same method. 

When sampling from the composition we found that using scaling for the classifier guidance was generally necessary to achieve high-quality results. Without scaling, the norm of the gradient over the first and second classifier was too small relative to the gradient predicted by the score function, and hence did not sufficiently steer the mixture towards samples from the posterior. Experimentally, we found that scaling factor 10 for the first classifier and scaling factor 75 for the second produced high quality results.

\newpage
\section{Additional Results}
\label{app:extra_results}

\subsection{Binary Operations for MNIST Digit Generation via Diffusion Models}
\label{app:exp_binary_MNIST}
Here we present a variant of the colored digit generation experiment from Section \ref{sec:mnist} using 2 diffusion models. This allows us to better illustrate the harmonic mean and contrast operations on this image domain. In a similar fashion to the experiment in Section \ref{sec:mnist}, we trained two diffusion models to generate colored MNIST digits. $p_1$ was trained to generate red and green 0 digits and $p_2$ was trained to generate green 0 and 1 digits. As before, we used post-processing to map green to cyan and red to beige. 

\paragraph{Implementation details.}
The diffusion models used in this experiment and their training procedure were exactly the same as in Section \ref{app:exp_image}. The sampling method used to obtain samples from the base models and their compositions was also the same. We found that scaling the classifier guidance was generally required for high-quality results, and used a scaling factor of 20 in this experiment.

The classifier was a convolutional network with 2 convolutional layers consisting of 32 and 64 channels and two hidden layers with 512 and 128 units. The classifier's time input was embedded using 128-dimensional Gaussian random features. The output was a 2x2 matrix encoding the predicted log-probabilities $\widetilde Q(y_1, y_2 \,\vert\, x_t)$. The classifier was trained on trajectories, sampled from the reverse SDE of the base diffusion models, using the AdaDelta optimizer \citep{zeiler2012adadelta} with a learning rate of $0.1$ and a decay rate of $0.97$. We performed 200 training steps with batch size 128. For the first 100 training steps, the classifier was only trained on terminal samples.

\paragraph{Results.}
Figure~\ref{fig:M1M2_image_experiments} shows samples obtained from the two trained diffusion models $p_1$, $p_2$ and from the harmonic mean and contrast compositions of these models. We observe that the harmonic mean generates only cyan zero digits, because this is the only type of digit on which both $p_1$ and $p_2$ have high density. The contrast $p_1 \contrast p_2$ generates beige zero digits from $p_1$. However, unlike $p_1$, it does not generate cyan zero digits, as $p_2$ has high density there. The story is similar for $p_1 \rcontrast p_2$, which generates cyan one digits from $p_2$, but not zero digits due to $p_1$ having high density over those.

\begin{figure*}[!t]
\begin{subfigure}{0.19\textwidth}
    \includegraphics[width=\textwidth]{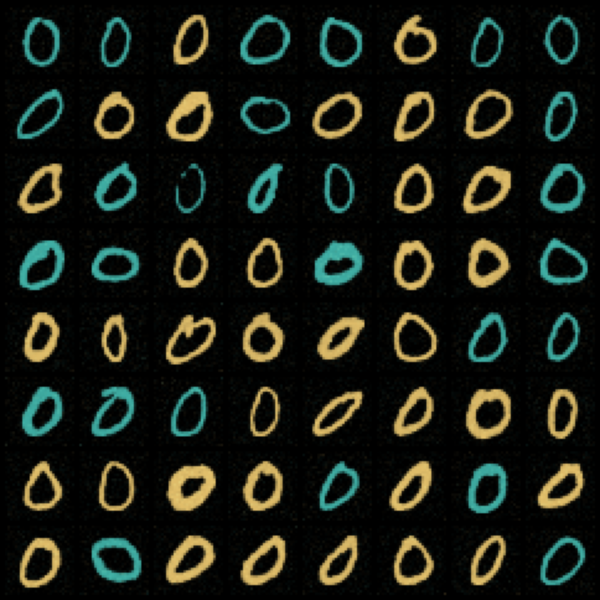}
    \caption{\(p_1\)}
\end{subfigure}\hfill%
\begin{subfigure}{0.19\textwidth}
    \includegraphics[width=\textwidth]{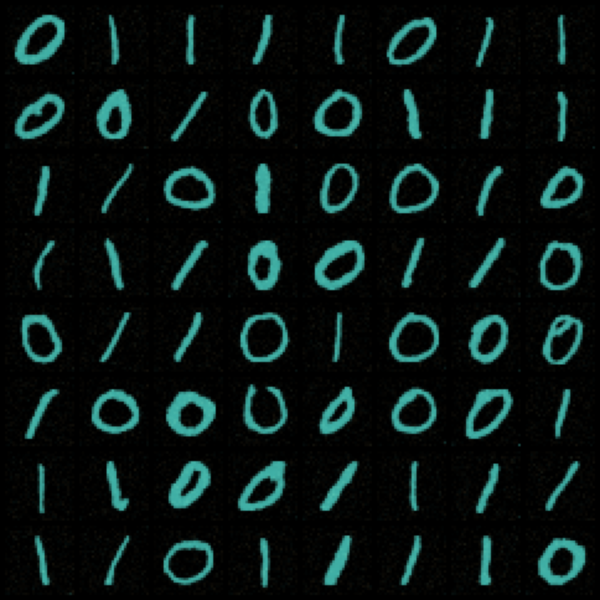}
    \caption{\(p_2\)}
\end{subfigure}\hfill%
\begin{subfigure}{0.19\textwidth}
    \includegraphics[width=\textwidth]{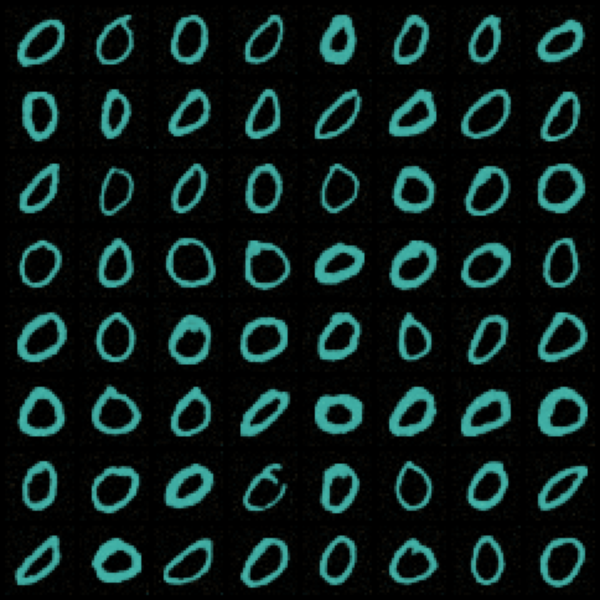}
    \caption{\(p_1 \otimes p_2\)}
\end{subfigure}\hfill%
\begin{subfigure}{0.19\textwidth}
    \includegraphics[width=\textwidth]{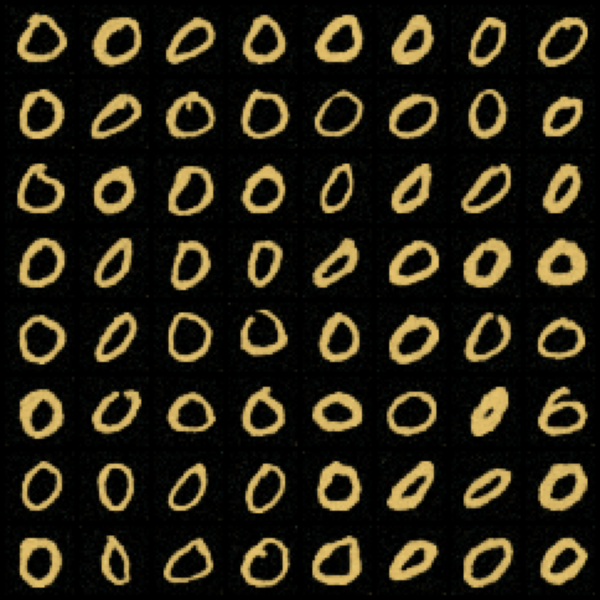}
    \caption{\(p_1 \contrast p_2\)}
\end{subfigure}\hfill%
\begin{subfigure}{0.19\textwidth}
    \includegraphics[width=\textwidth]{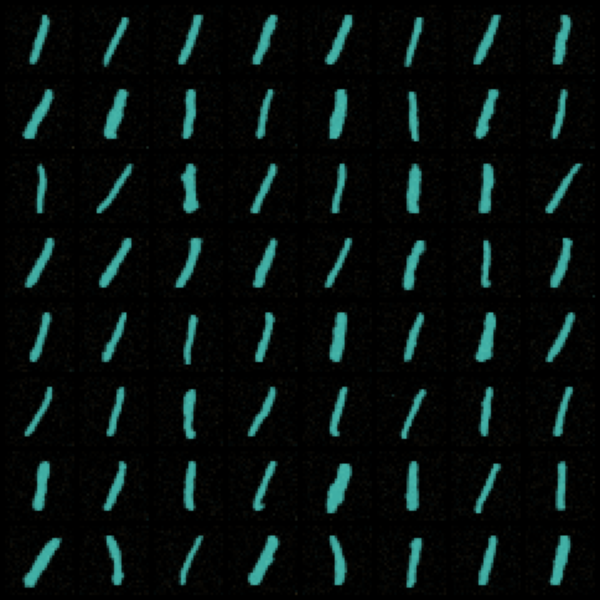}
    \caption{\(p_1 \rcontrast p_2\)}
\end{subfigure}
\caption{\textbf{Diffusion model composition on colored MNIST.} (a,b) samples from base diffusion models. (c-e) samples from the resulting harmonic mean and contrast compositions.}
\label{fig:M1M2_image_experiments}
\vspace{-1em}
\end{figure*}

\subsection{MNIST Subset Generation via Diffusion Models}
We report in this section on additional results for composing diffusion models on the standard MNIST dataset. We trained two diffusion models: \(p_{\{0,\dots,5\}}\) was trained to generate MNIST digits 0 through 5, and \(p_{\{4,\dots,9\}}\) to generate MNIST digits 4 through 9.  The training procedure and models used in this experiment were the same as in Section~\ref{app:exp_binary_MNIST}.

\begin{figure*}
    \begin{subfigure}{0.19\textwidth}
        \includegraphics[width=\textwidth]{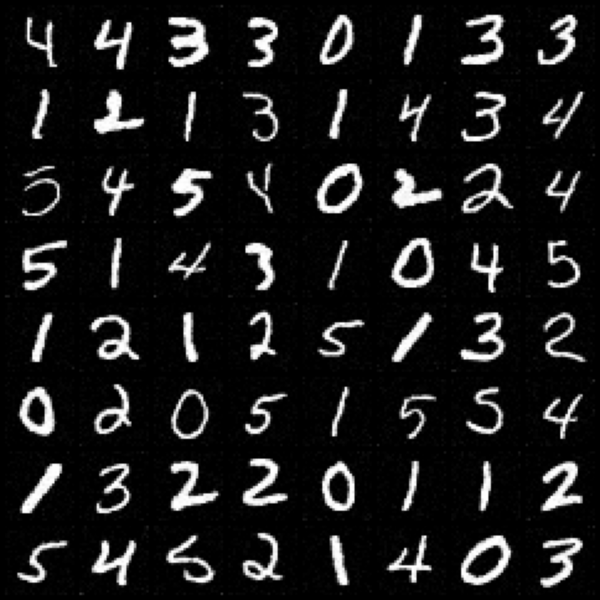}
        \caption{\(p_{\{0,\dots,5\}}\)}
    \end{subfigure}\hfill%
    \begin{subfigure}{0.19\textwidth}
        \includegraphics[width=\textwidth]{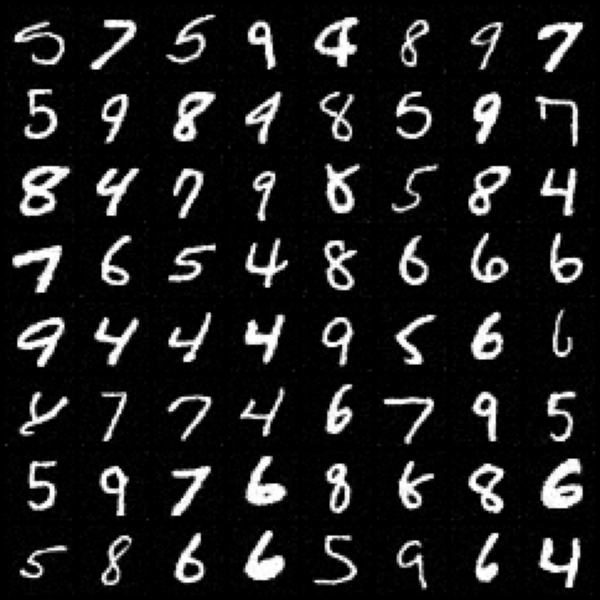}
        \caption{\(p_{\{4,\dots,9\}}\)}
    \end{subfigure}\hfill%
    \begin{subfigure}{0.19\textwidth}
        \includegraphics[width=\textwidth]{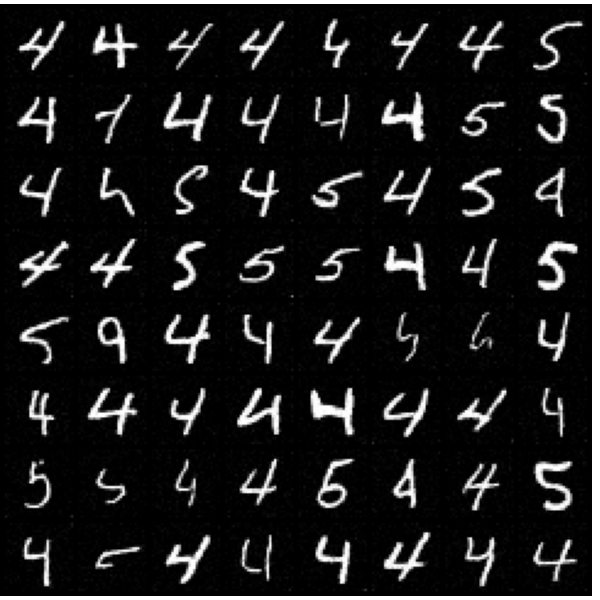}
        \caption{\(p_{\{0,\dots,5\}} \otimes p_{\{4,\dots,9\}}\)}
    \end{subfigure}\hfill%
    \begin{subfigure}{0.19\textwidth}
        \includegraphics[width=\textwidth]{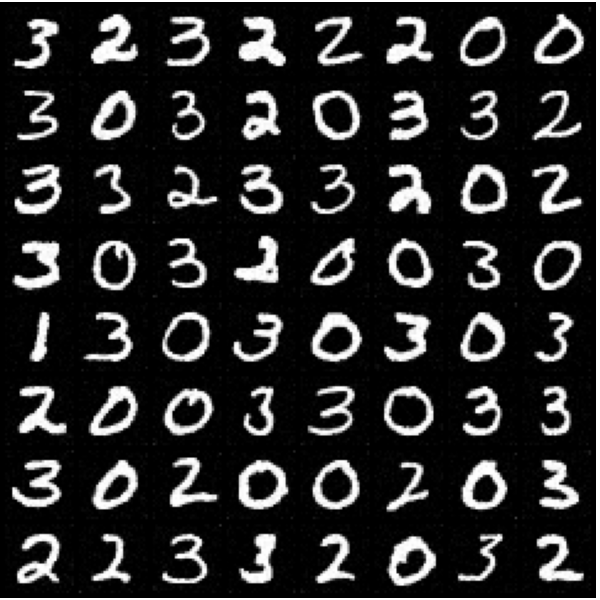}
        \caption{\(p_{\{0,\dots,5\}} \contrast p_{\{4,\dots,9\}}\)}
    \end{subfigure}\hfill%
    \begin{subfigure}{0.19\textwidth}
        \includegraphics[width=\textwidth]{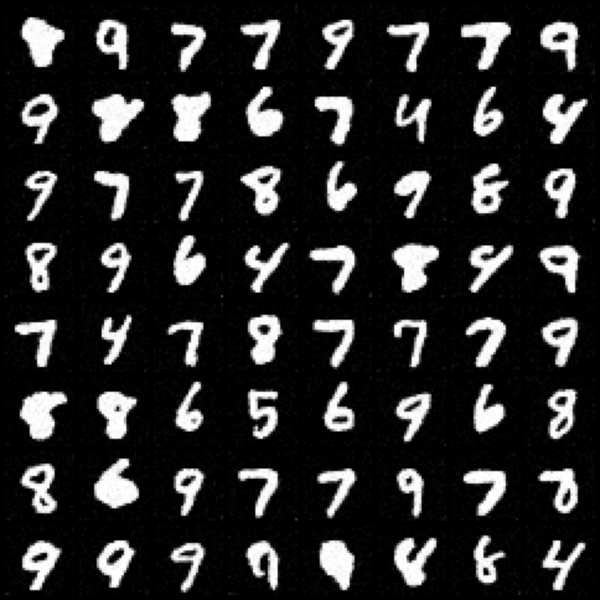}
        \caption{\(p_{\{0,\dots,5\}} \rcontrast p_{\{4,\dots,9\}}\)}
    \end{subfigure}
    \caption{\textbf{Diffusion model composition on MNIST.} (a,b) samples from base diffusion models. (c-e) samples from the resulting harmonic mean and contrast compositions.}
    \label{fig:digit_set_image_experiments}
    \vspace{-1em}
\end{figure*}

Figure~\ref{fig:digit_set_image_experiments} shows samples obtained from the two diffusion models, from the harmonic mean, and from the contrast compositions of these models. We observe that the harmonic mean correctly generates mostly images on which both diffusion models' sampling distributions have high probability, i.e. digits 4 and 5. For the contrasts we see that in both cases digits are generated that have high probability under one model but low probability under the other. We observe some errors, namely some 9's being generated by the harmonic mean and some 4's being generated by the contrast \(p_{\{0,\dots,5\}} \rcontrast p_{\{4,\dots,9\}}\). This is likely because 4 and 9 are visually similar, causing the guiding classifier to misclassify them, and generate them under the wrong composition.

We also present binary operations between three distributions. In Figure~\ref{fig:parity_digits}
 and~\ref{fig:parity_digits_2}, \(p_0\) models even digits, \(p_1\) models odd digits, and \(p_2\) models digits that are divisible by \(3\). We color digits \(\{0, 6\}\) {\color{purple}purple}, \(\{3, 9\}\) {\color{es-blue}blue}, \(\{4, 8\}\) {\color{orange}orange}, and \(\{1, 5, 7\}\) {\color{beige}beige}. In Figure~\ref{fig:parity_digits}, harmonic mean of \(p_0\) and \(p_2\) generates the digit \(0\) and \(6\), whereas the contrast of \(p_0\) with \(p_2\) shows even digits non-divisible by \(3\) (\(p_0\contrast p_2 = \{4, 8\}\)), and odd numbers that are divisible by \(3\) (\(p_0\rcontrast p_2 = \{3, 9\}\)). We observe that the samples from \(p_0 \otimes p_2\) inherit artifacts from the base generator for \(p_2\) (the thin digit \(0\)), which shows the impact that the base models have on the composite distribution. In Figure~\ref{fig:parity_digits_2} we present similar results between odd digits (\(\{5, 3, 5, 7, 9\}\). We noticed that samples from both \(p_1\contrast p_2\) and \(p_1\rcontrast p_2\) includes a small number of the digit \(3\).
 
\begin{figure}[!h]
    \begin{subfigure}{0.19\textwidth}
        \includegraphics[width=\textwidth]{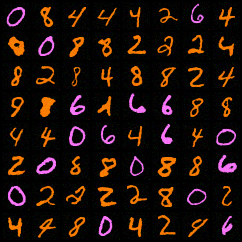}
        \caption{\(p_0\): Even Digits}
    \end{subfigure}\hfill%
    \begin{subfigure}{0.19\textwidth}
        \includegraphics[width=\textwidth]{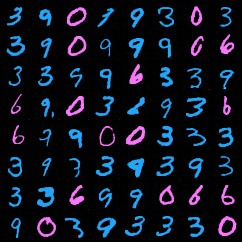}
        \caption{\(p_2\): \(\{0, 3, 6, 9\}\)}
    \end{subfigure}\hfill%
    \begin{subfigure}{0.19\textwidth}
        \includegraphics[width=\textwidth]{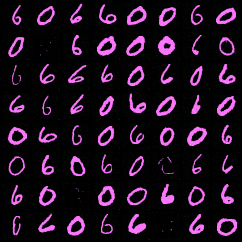}
        \caption{\(p_0 \otimes p_2\)}
    \end{subfigure}\hfill%
    \begin{subfigure}{0.19\textwidth}
        \includegraphics[width=\textwidth]{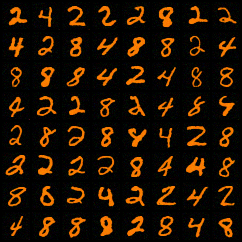}
        \caption{\(p_0 \contrast p_2\)}
    \end{subfigure}\hfill%
    \begin{subfigure}{0.19\textwidth}
        \includegraphics[width=\textwidth]{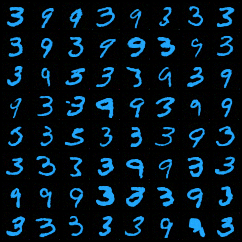}
        \caption{\(p_0 \rcontrast p_2\)}
    \end{subfigure}
    \caption{
    \textbf{Composing even digits and multiples of three.} (a,b) samples from base diffusion models. (c-e) samples from the resulting harmonic mean and contrast compositions.
    }
    \label{fig:parity_digits}
\end{figure}

\begin{figure}[!h]
    \begin{subfigure}{0.19\textwidth}
        \includegraphics[width=\textwidth]{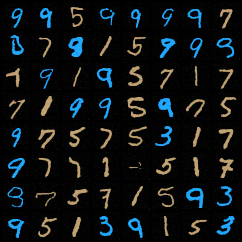}
        \caption{\(p_1\): Odd Digits}
    \end{subfigure}\hfill%
    \begin{subfigure}{0.19\textwidth}
        \includegraphics[width=\textwidth]{figures_bcomp_parity_M_three.png}
        \caption{\(p_2\): \(\{0, 3, 6, 9\}\)}
    \end{subfigure}\hfill%
    \begin{subfigure}{0.19\textwidth}
        \includegraphics[width=\textwidth]{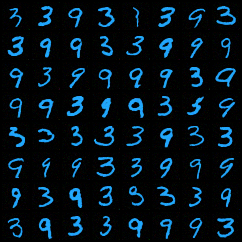}
        \caption{\(p_1 \otimes p_2\)}
    \end{subfigure}\hfill%
    \begin{subfigure}{0.19\textwidth}
        \includegraphics[width=\textwidth]{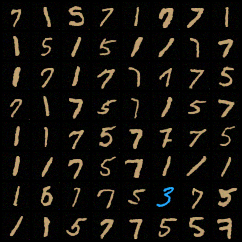}
        \caption{\(p_1 \contrast p_2\)}
    \end{subfigure}\hfill%
    \begin{subfigure}{0.19\textwidth}
        \includegraphics[width=\textwidth]{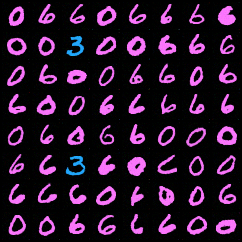}
        \caption{\(p_1 \rcontrast p_2\)}
    \end{subfigure}
    \caption{
    \textbf{Composing odd digits and multiples of three.} (a,b) samples from base diffusion models. (c-e) samples from the resulting harmonic mean and contrast compositions.
    }
    \label{fig:parity_digits_2}
\end{figure}

\subsection{Chaining: Sequential Composition of Multiple Distributions}

We present results on chaining binary composition operations sequentially on a custom colored MNIST dataset.

\paragraph{Setup.} We start with three base generative models that are trained to generate \(p_1\), \(p_2\) and \(p_3\) in Figure~\ref{fig:exp_MN_images_chain}. Specifically, \(p_1\) is a uniform distribution over digits \(\{0, 1, 2, 3, 4, 5\}\), \(p_2\) is a uniform distribution over even digits \(\{0, 2, 4, 6, 8\}\), and \(p_3\) is a uniform distribution over digits divisible by \(3\): \(\{0, 3, 6, 9\}\). Note that we use a different color for each digit consistent across \(p_{1}, p_{2}, p_{3}\). Our goal is to produce combinations of chained binary operations involving all three distributions, where two of them were combined first, then in a second step, combined with the third distribution through either harmonic mean \(\otimes\) or contrast \(\contrast\). 

\paragraph{Binary Classifier Training.} Consider, for example, the operation \((p_1 \otimes p_2) \contrast p_3\). We use the same classifier training procedure for \(p_1\) versus \(p_2\), as well as the composite model \((p_1 \otimes p_2)\) versus \(p_3\), except that in the later case we sample from composite model as a whole. Our classifier training simultaneously optimizes the terminal classifier and the intermediate state classifier.

\paragraph{Implementation Detains.} We adapted diffusion model training code for the base distributions from \cite{song2020improved}. Our diffusion model used a UNet backbone with four latent layers on both the contracting and the expansive path. The contracting channel dimensions were \([64, 128, 256, 256]\) with the kernel size \(3\), and strides \([1, 2, 2, 2]\). The time embedding used a mixture of \(128\) sine and cosine feature pairs, with a total of \(256\) dimensions. These features were passed through a sigmoid activation and then expanded using a different linear head for each layer. The activations were then added to the 2D feature maps at each layer channel-wise. We used a fixed learning rate of \(0.01\) with Adam optimizer~\cite{kingma2014adam}.  

We adapted the classifier training code from the MNIST example in Pytorch~\cite{paszke2019pytorch}. Our binary classifier has two latent layers with channel dimensions \([32, 64]\), stride \(1\) and kernel width of \(3\). We use dropout on both layers: \(p_1=25\%, p_2=50\%\). We train the network for 200 epochs on data sampled online in batches of \(256\) from each source model, and treat the composite model in the second step the same way. We use the Adadelta~\cite{zeiler2012adadelta} optimizer with the default setting of learning rate \(1.0\).

\paragraph{Sampling.} We generate samples according to Sec~\ref{sec:method_diffusion} and multiply the gradient by \(\alpha=20\).

\paragraph{Results.} Row 3 from Figure~\ref{fig:exp_MN_images_chain} contains mostly zeros with a few exceptions. This is in agreement with harmonic mean being symmetric. In \(p_1 \otimes (p_2 \otimes p_3)\), digits that are outside of the intersection still appear with thin strokes. 

\begin{figure}
    \begin{subfigure}{0.16\textwidth} 
        \includegraphics[width=\linewidth]{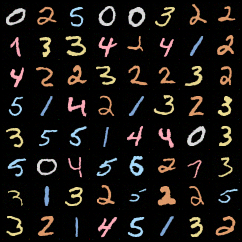}
        \caption{$p_1 \{0 - 5\}$}
    \end{subfigure}\hfill%
    \begin{subfigure}{0.16\textwidth} 
        \includegraphics[width=\linewidth]{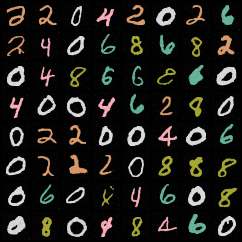}
        \caption{$p_2 \{0, 2, 4, 6, 8\} $}
    \end{subfigure}\hfill%
    \begin{subfigure}{0.16\textwidth}
        \includegraphics[width=\linewidth]{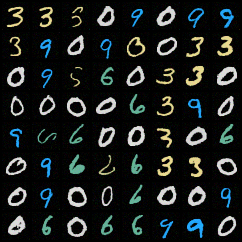}
        \caption{$p_3 \{0, 3, 6, 9\} $}
    \end{subfigure}\hfill%
    \begin{subfigure}{0.16\textwidth} 
        \includegraphics[width=\linewidth]{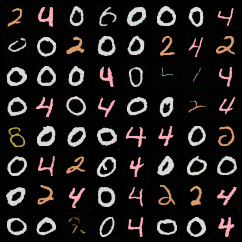}
        \caption{$p_1\otimes p_2$}
    \end{subfigure}\hfill%
    \begin{subfigure}{0.16\textwidth} 
        \includegraphics[width=\linewidth]{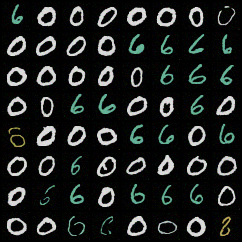}
        \caption{$p_2\otimes p_3$}
    \end{subfigure}\hfill%
    \begin{subfigure}{0.16\textwidth} 
        \includegraphics[width=\linewidth]{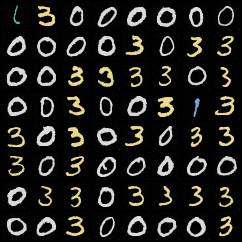}
        \caption{$p_1\otimes p_3$}
    \end{subfigure}\newline%
    \begin{subfigure}{0.16\textwidth} 
        \includegraphics[width=\linewidth]{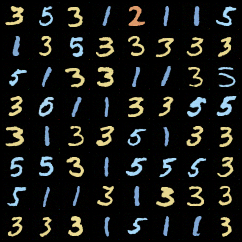}
        \caption{$p_1 \contrast p_2$}
    \end{subfigure}\hfill%
    \begin{subfigure}{0.16\textwidth} 
        \includegraphics[width=\linewidth]{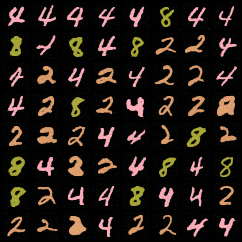}
        \caption{$p_2 \contrast p_3 $}
    \end{subfigure}\hfill%
    \begin{subfigure}{0.16\textwidth}
        \includegraphics[width=\linewidth]{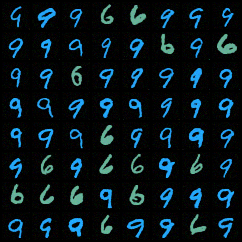}
        \caption{$p_1 \rcontrast p_3$}
    \end{subfigure}\hfill%
    \begin{subfigure}{0.16\textwidth} 
        \includegraphics[width=\linewidth]{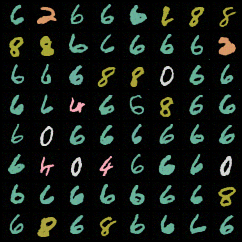}
        \caption{$p_1 \rcontrast p_2$}
    \end{subfigure}\hfill%
    \begin{subfigure}{0.16\textwidth} 
        \includegraphics[width=\linewidth]{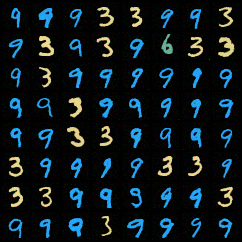}
        \caption{$p_2\rcontrast p_3$}
    \end{subfigure}\hfill%
    \begin{subfigure}{0.16\textwidth} 
        \includegraphics[width=\linewidth]{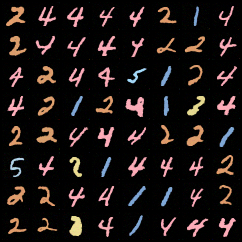}
        \caption{$p_1\contrast p_3$}
    \end{subfigure}\newline%
    \begin{subfigure}{0.16\textwidth} 
        \includegraphics[width=\linewidth]{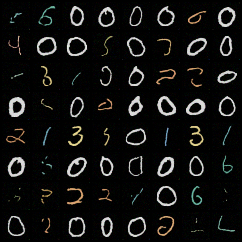}
        \caption*{$p_1 \otimes (p_2 \otimes p_3)$}
    \end{subfigure}\hspace{0.008\textwidth}%
    \begin{subfigure}{0.16\textwidth} 
        \includegraphics[width=\linewidth]{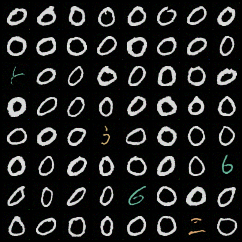}
        \caption*{$p_2 \otimes (p_1 \otimes p_3)$}
    \end{subfigure}\hspace{0.008\textwidth}%
    \begin{subfigure}{0.16\textwidth} 
        \includegraphics[width=\linewidth]{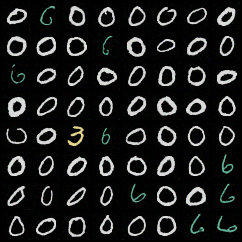}
        \caption*{$p_3 \otimes (p_1 \otimes p_2)$}
    \end{subfigure}\hspace{0.505\textwidth}%
    \newline
    \begin{subfigure}{0.16\textwidth} 
        \includegraphics[width=\linewidth]{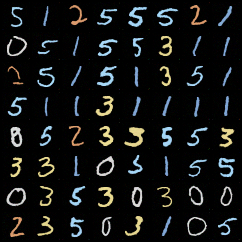}
        \caption*{$p_1 \contrast (p_2 \contrast p_3)$}
    \end{subfigure}\hfill%
    \begin{subfigure}{0.16\textwidth} 
        \includegraphics[width=\linewidth]{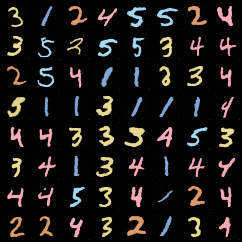}
        \caption*{$p_1 \contrast (p_2 \otimes p_3) $}
    \end{subfigure}\hfill%
    \begin{subfigure}{0.16\textwidth}
        \includegraphics[width=\linewidth]{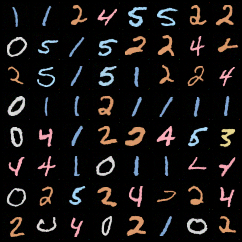}
        \caption*{$p_1 \contrast (p_2 \rcontrast p_3)$}
    \end{subfigure}\hfill%
    \begin{subfigure}{0.16\textwidth} 
        \includegraphics[width=\linewidth]{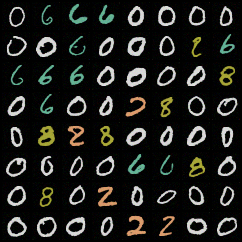}
        \caption*{$p_2 \contrast (p_1 \contrast p_3)$}
    \end{subfigure}\hfill%
    \begin{subfigure}{0.16\textwidth} 
        \includegraphics[width=\linewidth]{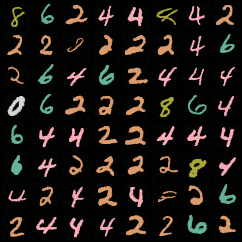}
        \caption*{$p_2\contrast (p_1 \otimes p_3)$}
    \end{subfigure}\hfill%
    \begin{subfigure}{0.16\textwidth} 
        \includegraphics[width=\linewidth]{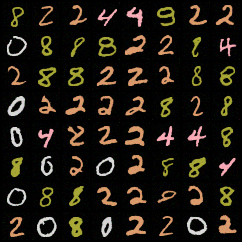}
        \caption*{$p_2 \contrast (p_1 \rcontrast p_3)$}
    \end{subfigure}\newline%
    \begin{subfigure}{0.16\textwidth} 
        \includegraphics[width=\linewidth]{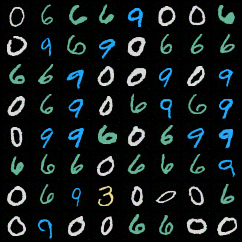}
        \caption*{$p_3 \contrast (p_1 \contrast p_2)$}
    \end{subfigure}\hfill%
    \begin{subfigure}{0.16\textwidth} 
        \includegraphics[width=\linewidth]{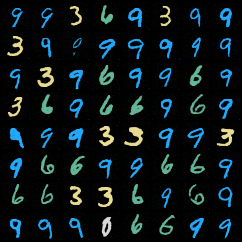}
        \caption*{$p_3 \contrast (p_1 \otimes p_2) $}
    \end{subfigure}\hfill%
    \begin{subfigure}{0.16\textwidth}
        \includegraphics[width=\linewidth]{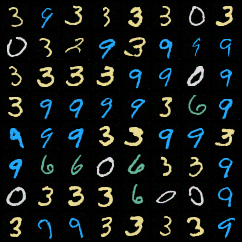}
        \caption*{$p_3 \contrast (p_1 \rcontrast p_2)$}
    \end{subfigure}\hfill%
    \begin{subfigure}{0.16\textwidth} 
        \includegraphics[width=\linewidth]{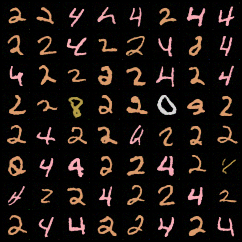}
        \caption*{$p_1 \otimes (p_1 \contrast p_3)$}
    \end{subfigure}\hfill%
    \begin{subfigure}{0.16\textwidth} 
        \includegraphics[width=\linewidth]{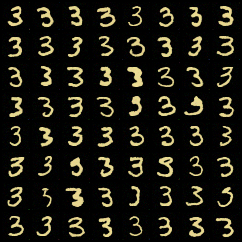}
        \caption*{$p_1 \otimes (p_2 \rcontrast p_3)$}
    \end{subfigure}\hfill%
    \begin{subfigure}{0.16\textwidth} 
        \includegraphics[width=\linewidth]{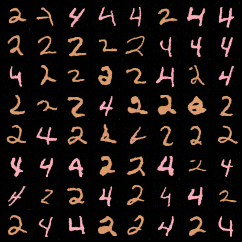}
        \caption*{$p_2 \otimes (p_1 \contrast p_3)$}
    \end{subfigure}\newline%
    \begin{subfigure}{0.16\textwidth} 
        \includegraphics[width=\linewidth]{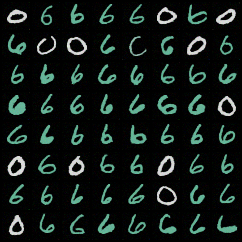}
        \caption*{$p_2 \otimes (p_1 \rcontrast p_3)$}
    \end{subfigure}\hfill%
    \begin{subfigure}{0.16\textwidth} 
        \includegraphics[width=\linewidth]{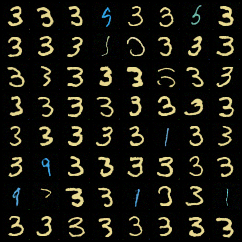}
        \caption*{$p_3 \otimes (p_1 \contrast p_2)$}
    \end{subfigure}\hfill%
    \begin{subfigure}{0.16\textwidth} 
        \includegraphics[width=\linewidth]{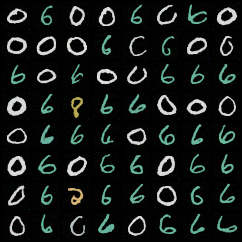}
        \caption*{$p_3 \otimes (p_1 \rcontrast p_2)$}
    \end{subfigure}\hfill%
    \begin{subfigure}{0.16\textwidth} 
        \includegraphics[width=\linewidth]{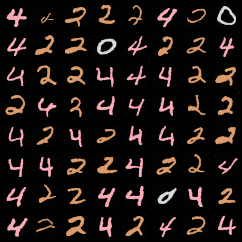}
        \caption*{$p_1 \otimes (p_2 \contrast p_3)$}
    \end{subfigure}\hfill%
    \begin{subfigure}{0.16\textwidth} 
        \includegraphics[width=\linewidth]{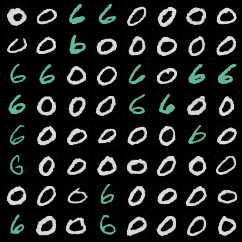}
        \caption*{$p_1 \rcontrast (p_2 \otimes p_3)$}
    \end{subfigure}\hfill%
    \begin{subfigure}{0.16\textwidth} 
        \includegraphics[width=\linewidth]{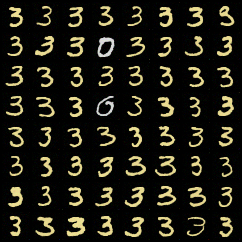}
        \caption*{$p_2 \rcontrast (p_1 \otimes p_3)$}
    \end{subfigure}%
    \caption{\textbf{Chaining Binary Operations.} (a-c) Samples from 3 pre-trained diffusion models. (d-l) Samples from binary compositions. (row 3) The harmonic mean between all three. (row 4 and beyond) various ways to chain the operations. Parentheses indicate the order of composition.}
    \label{fig:exp_MN_images_chain}
\end{figure}

\subsection{Classifier Learning Curves and Training Time}
\label{app:cls_training_stats}

\begin{figure}[!ht]
    \newcommand{\figh}{1.4in}
    \centering
            \includegraphics[height=\figh]{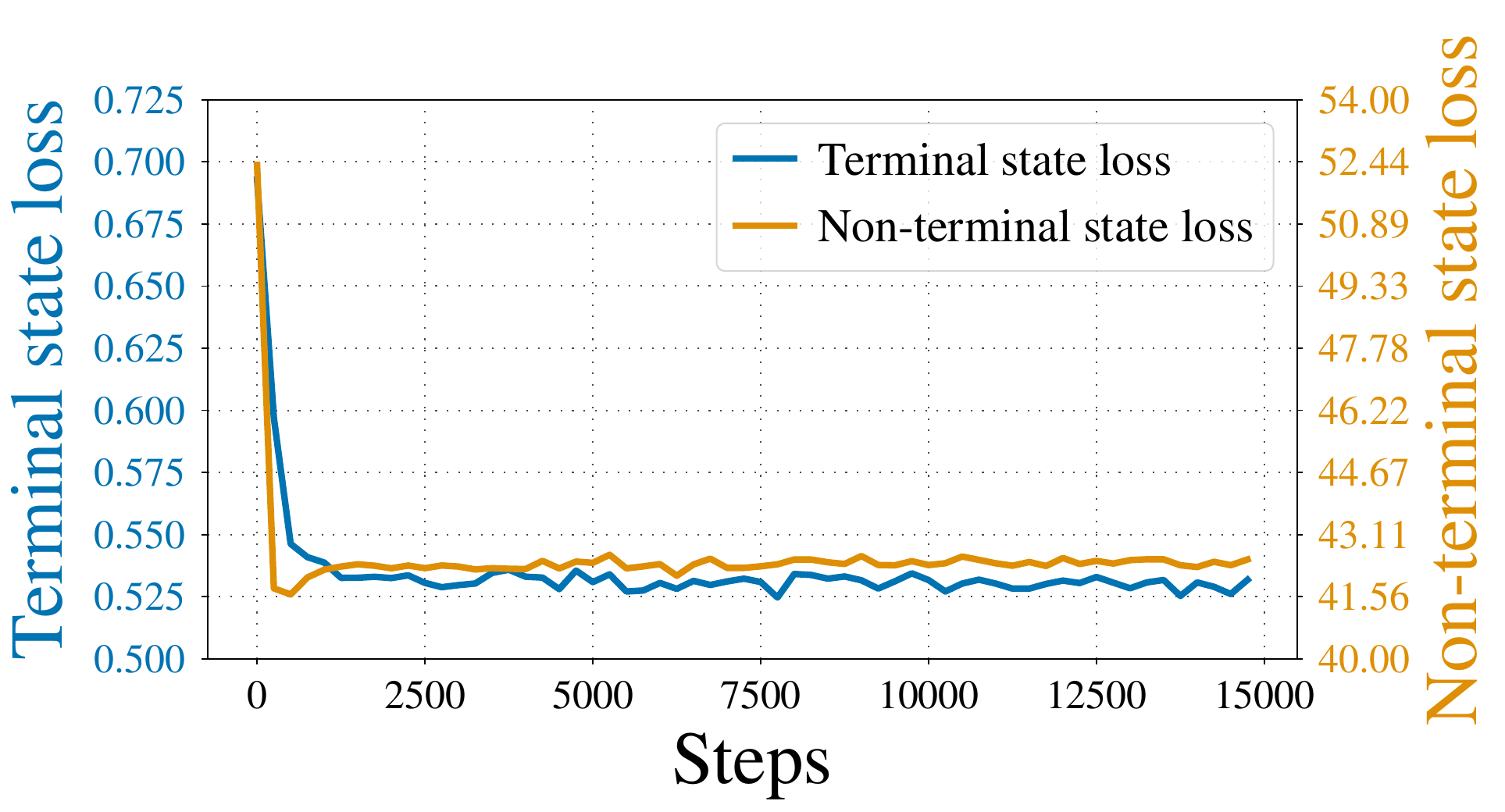}%
        \qquad
            \includegraphics[height=\figh]{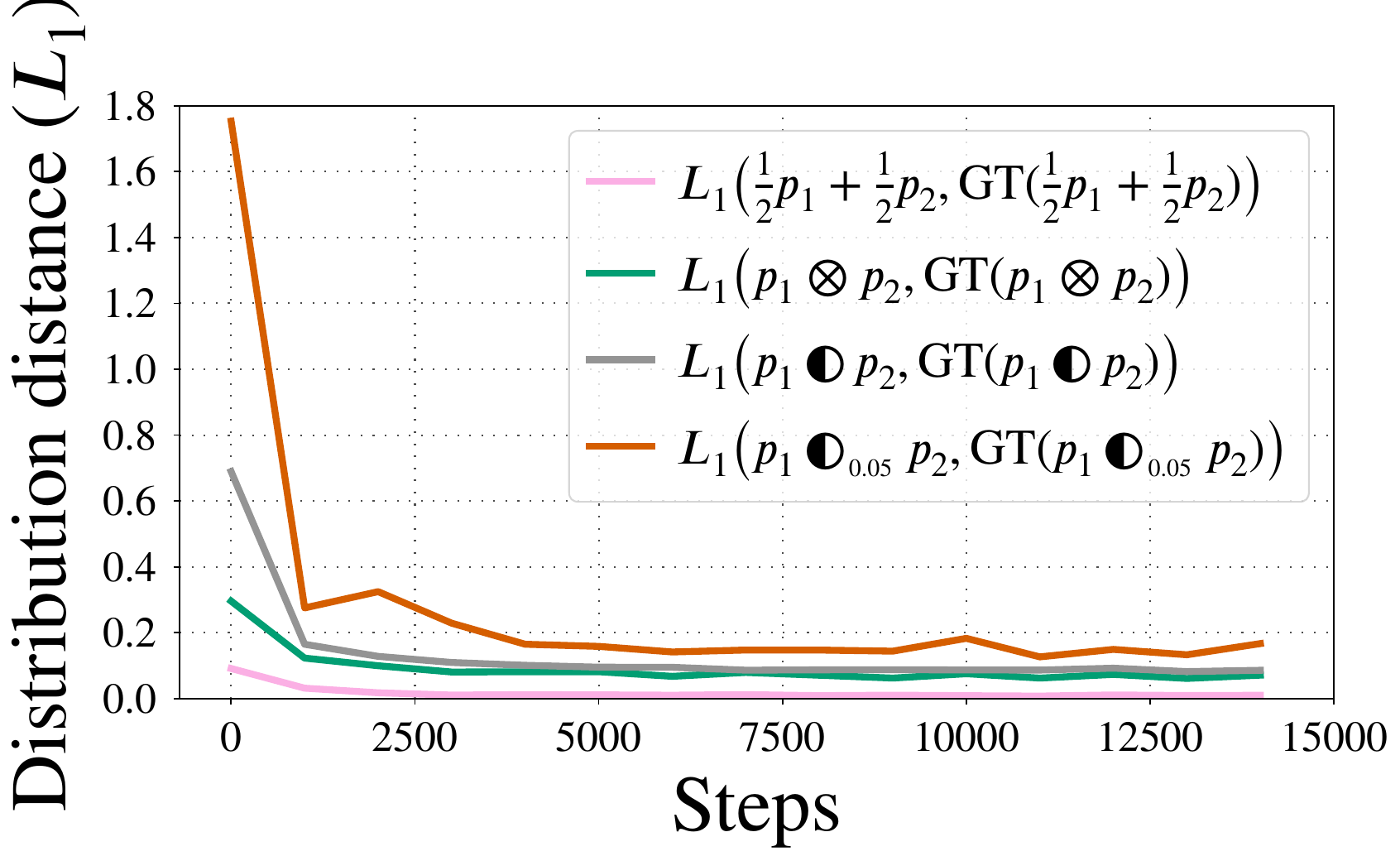}%
    \caption{Training curves of the classifier $\widetilde Q_\phi(y_1, y_2 \vert \cdot)$ in GFlowNet 2D grid domain. The experimental setup corresponds to Section \ref{sec:exp_2dgrid} and Figure \ref{fig:exp_grid} (top row). \textbf{Left}: Terminal state loss and non-terminal state loss (as defined in Algorithm \ref{alg:cls_training_1_step}) as functions of the number of training steps. \textbf{Right}: $L_1$ distance between learned distributions (compositions obtained through classifier-based mixture and guidance) and ground-truth composition distributions as the function of the number of training steps. $L_1(p, q) = \sum_{x \in \mathcal{X}} |p(x) - q(x)|$.}
    \label{fig:cls_loss_2d_grid}
\end{figure}
\begin{figure}[!ht]
    \newcommand{\figh}{1.4in}
    \begin{minipage}[t]{0.48\textwidth}
        \centering
            \includegraphics[height=\figh]{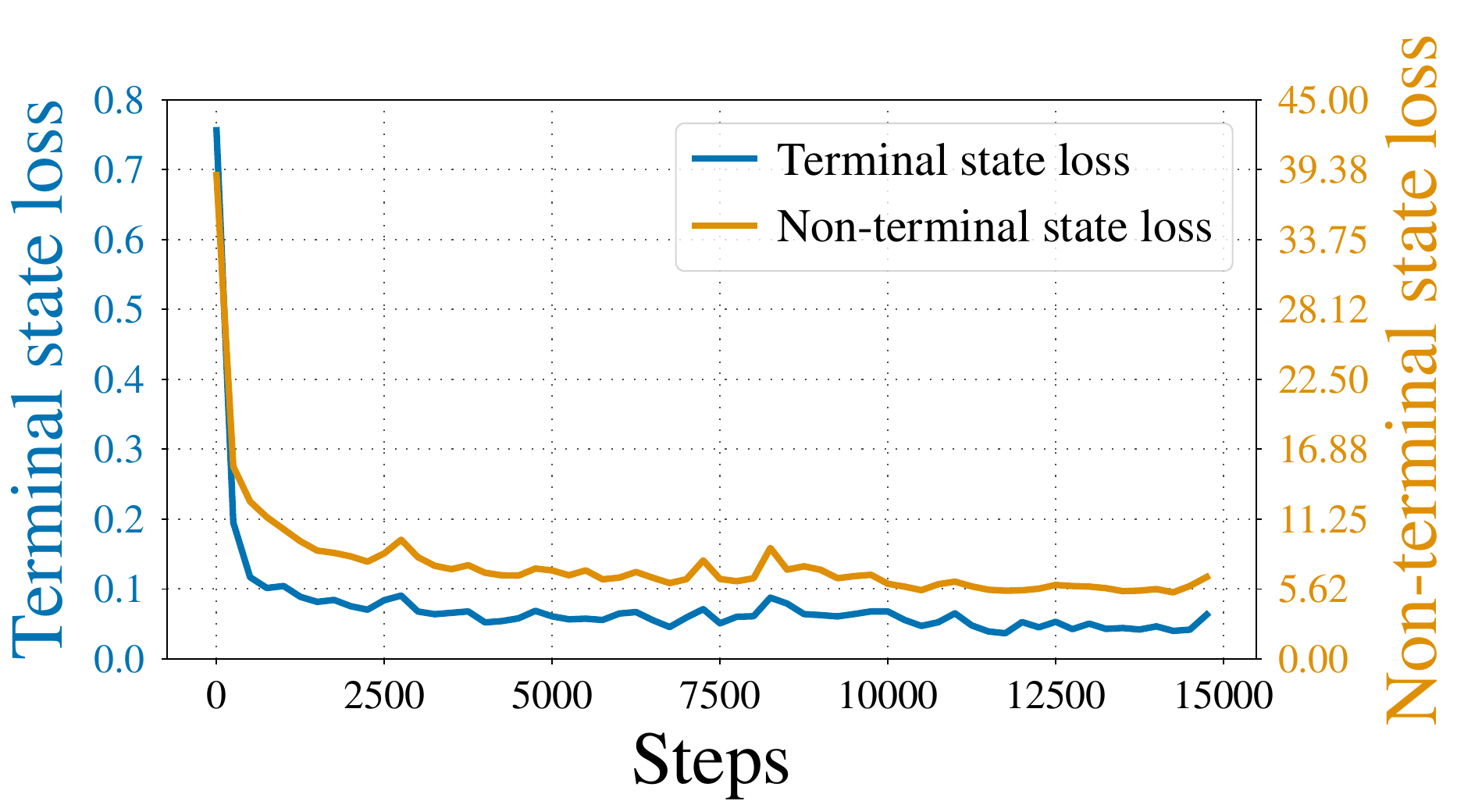}
    \caption{Training curves of the classifier $\widetilde Q_\phi(y_1, y_2 \vert \cdot)$ in GFlowNet molecule generation domain. The experimental setup corresponds to Section \ref{sec:molecule} and Figure \ref{fig:exp_2dist} (a-c). The curves show terminal state loss and non-terminal state loss (as defined in Algorithm \ref{alg:cls_training_1_step}) as functions of the number of training steps.}
    \label{fig:cls_loss_mol}
    \end{minipage}\hfill%
    \begin{minipage}[t]{0.48\textwidth}
        \centering
            \includegraphics[height=\figh]{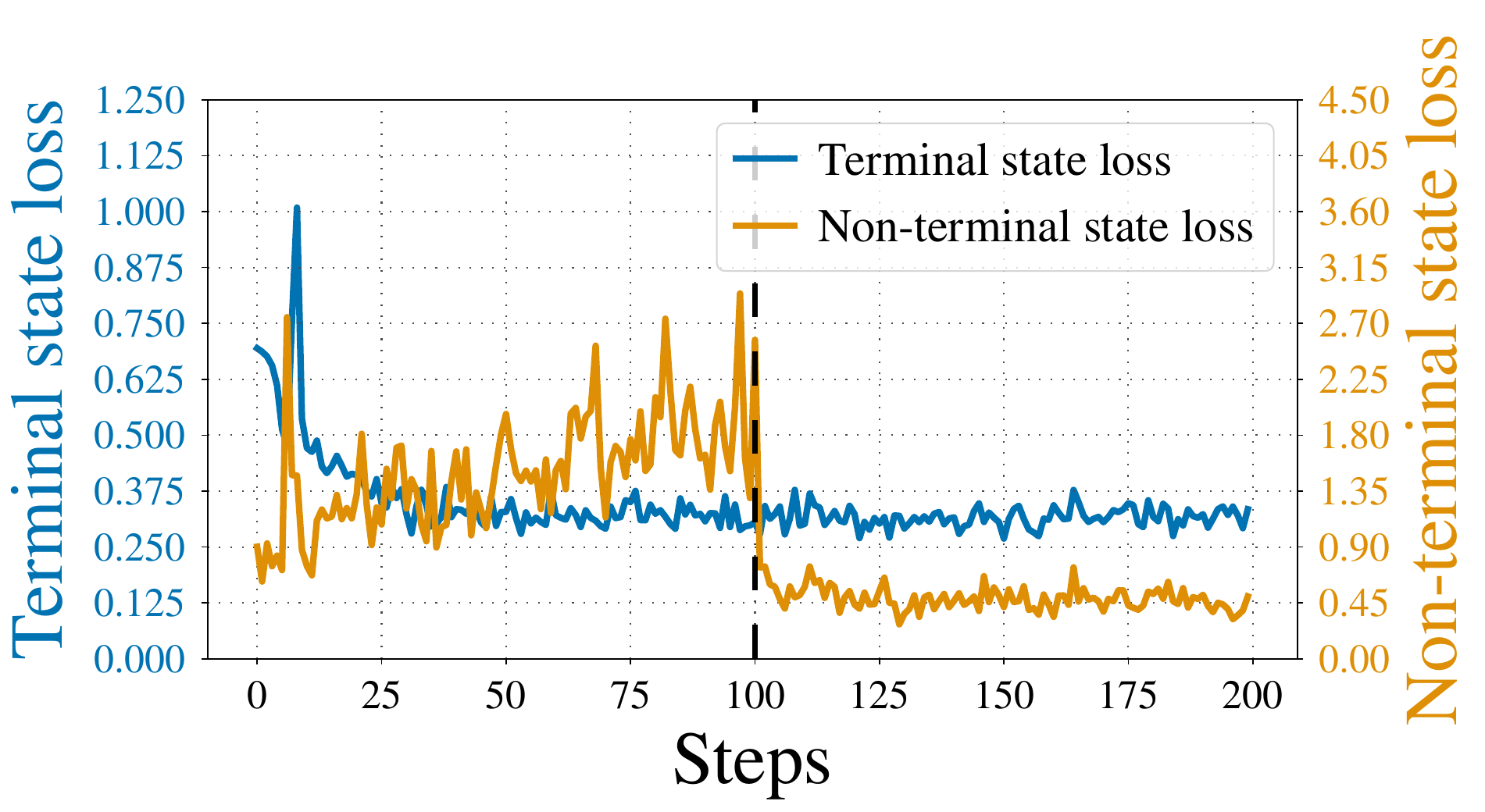}
    \caption{Training curves of the classifier $\widetilde Q_\phi(y_1, y_2 \vert \cdot)$ in diffusion MNIST image generation domain. The experimental setup corresponds to Section \ref{app:exp_binary_MNIST} and Figure \ref{fig:M1M2_image_experiments}. The curves show terminal state loss and non-terminal state loss (as defined in equations \eqref{eq:diff_cls_loss_term}, \eqref{eq:diff_cls_loss_nonterm}) as functions of the number of training steps. The non-terminal loss optimization begins after the first $100$ training steps (shown by the black dashed line).}
    \label{fig:cls_loss_M1M2_image}
    \end{minipage}
\end{figure}%

\begin{figure}[!ht]
    \begin{minipage}[t]{0.49\textwidth}%
        \centering 
        \captionsetup{type=table} %
        \caption{Summary of base GFlowNet and classifier training time in molecule generation domain. The experimental setup corresponds to Section \ref{sec:molecule} \& Figure \ref{fig:exp_2dist} (a-c). All models were trained with a single GeForce RTX 2080 Ti GPU.}
        \label{tbl:gflow_time}
        \resizebox{\textwidth}{!}{%
        \begin{tabular}{ll}%
            \toprule
            Base GFlowNet training steps & $20\,000$
            \\
            Base GFlowNet batch size & $64$
            \\
            Base GFlowNet training elapsed real time & 6h 47m 11s
            \\
            \midrule
            Classifier training steps & $15\,000$
            \\
            Classifier batch size & 8 trajectories per base model
            \\
            & (all states used) 
            \\
            Classifier training total elapsed real time & 9h 2m 19s
            \\
            Classifier training data generation time & 6h 35m 58s ($73\%$)
            \\
            \bottomrule
        \end{tabular}
        }
    \end{minipage}\hfill%
    \begin{minipage}[t]{0.49\textwidth}%
        \centering 
        \captionsetup{type=table} %
        \caption{Summary of base diffusion and classifier training time in MNIST image generation domain. The experimental setup corresponds to  Section \ref{app:exp_binary_MNIST} \& Figure \ref{fig:M1M2_image_experiments} in the main paper. All models were trained with a single Tesla V100 GPU.}
        \label{tbl:diff_time}
        \resizebox{\textwidth}{!}{%
        \begin{tabular}{ll}%
            \toprule
            Base diffusion training steps & $200$
            \\
            Base diffusion batch size & $32$
            \\
            Base diffusion training elapsed real time & 10m 6s
            \\
            \midrule
            Classifier training steps & $200$
            \\
            Classifier batch size & 128 trajectories per base model
            \\
            & (35 time-steps per trajectory) 
            \\
            Classifier training total elapsed real time &  30m 12s
            \\
            Classifier training data generation time & 29m 22s ($97\%$)
            \\
            \bottomrule
        \end{tabular}
        }
    \end{minipage}%
\end{figure}

We empirically evaluated classifier training time and learning curves. The results are shown in Figures \ref{fig:cls_loss_2d_grid}, \ref{fig:cls_loss_mol}, \ref{fig:cls_loss_M1M2_image} and Tables \ref{tbl:gflow_time}, \ref{tbl:diff_time}.

Figures \ref{fig:cls_loss_2d_grid}, \ref{fig:cls_loss_mol}, \ref{fig:cls_loss_M1M2_image} show the cross-entropy loss of the classifier for terminal \eqref{eq:cls_loss_term} and non-terminal states \eqref{eq:cls_loss_nonterm} as a function of the number of training steps for the GFlowNet 2D grid domain, the molecular generation domain, and the Colored MNIST digits domain respectively. They show that the loss drops quickly but remains above 0. Figure \ref{fig:cls_loss_2d_grid} further shows the distances between the learned compositions and the ground truth distributions as a function of the number of training steps of the classifier. For all compositions, as the classifier training progresses, the distance to the ground truth distribution decreases. Compared to the distance at initialization we observe almost an order of magnitude distance reduction by the end of the training.

The runtime of classifier training is shown in Tables \ref{tbl:gflow_time} and \ref{tbl:diff_time}. We report the total runtime, as well as separate measurements for the time spent sampling trajectories and training the classifier.
The classifier training time is comparable to the base generative model training time. However, most of the classifier training time (more than 70\%, or even 90\%) was spent on sampling trajectories from base generative models. Our implementation of the training could be improved in this regard, e.g. by sampling a smaller set of trajectories once and re-using trajectories for training and by reducing the number of training steps (the loss curves in Figures \ref{fig:cls_loss_2d_grid}, \ref{fig:cls_loss_mol}, \ref{fig:cls_loss_M1M2_image} show that classification losses plateau quickly).

\subsection{Analysis of Sample Diversity of Base GFlowNets in Molecule Generation Domain}
\label{app:mol_diversity}

In order to assess the effect of the reward exponent $\beta$ on mode coverage and sample diversity, we evaluated samples generated from GFlowNets pre-trained with different values of $\beta$. The results are in Tables \ref{tbl:gflow_sim_pair} and \ref{tbl:gflow_sim_modes}. The details of the evaluation and the reported metrics are described in the table captions. As expected, larger reward exponents shift the learned distributions towards high-scoring molecules (the total number of molecules with scores above the threshold increases). For ‘SA’ and ‘QED’ models we don’t observe negative effects of large $\beta$ on sample diversity and mode coverage: the average pairwise similarity of top $1\,000$ molecules doesn’t grow as $\beta$ increases and the ratio of Tanimoto-separated modes remains high. For ‘SEH’ models we observe a gradual increase in the average pairwise similarity of top $1\,000$ molecules and a gradual decrease in the ratio of Tanimoto-separated modes. However, the total number of separated modes grows as $\beta$ increases, which indicates that larger reward exponents do not lead to mode dropping. 

\begin{figure}[!ht]
    \begin{minipage}[t]{0.3469\textwidth}
        \centering 
        \captionsetup{type=table} %
        \caption{Average pairwise similarity \citep{bengio2021flow} of molecules generated by GFlowNets trained on 'SEH', 'SA', 'QED' rewards at different values of $\beta$. For each combination (reward, $\beta$) a GFlowNet was trained with the corresponding reward $R(x)^\beta$. Then, $5\,000$ molecules were generated. The numbers in the table reflect the average pairwise Tanimoto similarity of top $1\,000$ molecules (selected according to the target reward function).}
        \label{tbl:gflow_sim_pair}
        \begin{tabular}{llll}%
            \toprule
                       & \multicolumn{1}{c}{SEH}     & \multicolumn{1}{c}{SA}      & \multicolumn{1}{c}{QED}     \\
            \midrule
            $\beta = 1$  & $0.527$ & $0.539$ & $0.480$ \\
            $\beta = 4$  & $0.529$ & $0.527$ & $0.464$ \\
            $\beta = 10$ & $0.535$ & $0.500$ & $0.438$ \\
            $\beta = 16$ & $0.548$ & $0.465$ & $0.422$ \\
            $\beta = 32$ & $0.585$ & $0.411$ & $0.398$ \\
            $\beta = 96$ & $0.618$ & $0.358$ & $0.404$ \\
            \bottomrule
        \end{tabular}%
    \end{minipage}\hfill%
    \begin{minipage}[t]{0.638\textwidth}%
        \centering
        \captionsetup{type=table} %
        \caption{Number of Tanimoto-separated modes found above reward threshold. 
        For each combination (reward, $\beta$) a GFlowNet was trained with the corresponding reward $R(x)^\beta$, and then $5\,000$ molecules were generated. Cell format is "$A/B$", where $A$ is the number of Tanimoto-separated modes found above the reward threshold, and $B$ is the total number of generated molecules above the threshold.  Analogously to Figure 14 in \citep{bengio2021flow}, we consider having found a new mode representative when a new molecule has Tanimoto similarity smaller than $0.7$ to every previously found mode’s representative molecule. Reward thresholds (in $[0, 1]$, normalized values) are 'SEH': $0.875$, 'SA': $0.75$, 'QED': $0.75$. Note that the normalized threshold of $0.875$ for 'SEH' corresponds to the unnormalized threshold of $7$ used in \citep{bengio2021flow}.
        }
        \label{tbl:gflow_sim_modes}
        \begin{tabular}{llll}%
            \toprule
                          & \multicolumn{1}{c}{SEH}         & \multicolumn{1}{c}{SA}          & \multicolumn{1}{c}{QED}         \\
            \midrule
             $\beta = 1$  & $  \paddigits{15} \,/\, \paddigits{17}$ & $  \paddigits{37} \,/\,  \paddigits{37}$ & $   \paddigits{0}\,/\, \paddigits{0}$ \\
             $\beta = 4$  & $  \paddigits{12} \,/\,  \paddigits{17}$ & $  \paddigits{82}\,/\, \paddigits{82}$ & $   \paddigits{0} \,/\, \paddigits{0}$ \\
             $\beta = 10$ & $  \paddigits{85} \,/\, \paddigits{109}$ & $ \paddigits{332} \,/\, \paddigits{337}$ & $  \paddigits{18} \,/\, \paddigits{18}$ \\
             $\beta = 16$ & $ \paddigits{190} \,/\, \paddigits{280}$ & $ \paddigits{886} \,/\, \paddigits{910}$ & $ \paddigits{253} \,/\, \paddigits{253}$ \\
             $\beta = 32$ & $ \paddigits{992} \,/\, \paddigits{1821}$ & $\paddigits{2859} \,/\, \paddigits{3080}$ & $\paddigits{3067} \,/\, \paddigits{3124}$ \\
             $\beta = 96$ & $\paddigits{1619} \,/\, \paddigits{4609}$ & $\paddigits{4268}\,/\,\paddigits{4983}$ & $\paddigits{4470}\,/\,\paddigits{4980}$ \\
            \bottomrule
        \end{tabular}%
    \end{minipage}
\end{figure}

\newpage
~\\
\newpage
\begin{table}[!h]
    \centering
    \caption{Estimated pairwise earthmover's distances between distributions shown in Table \ref{tbl:exp_mol_3dist}.}
    \label{tab:distr_distance}
    \resizebox{\textwidth}{!}{%
    \begin{tabular}{lrrrrrrrrrr}
    \toprule
                   &   y=SEH &   y=SA &   y=QED &   y=SEH,SA &   y=SEH,QED &   y=SA,QED &   y=SEH,SA,QED &   y=SEH $\times\;3$ &   y=SA $\times\;3$ &   y=QED $\times\;3$ \\
    \midrule
     y=SEH         &    0    &   4.42 &    5.77 &       3.39 &        4.20  &       4.88 &           4.10  &            2.46 &         4.44 &            5.73 \\
     y=SA          &    4.42 &   0    &    5.88 &       3.26 &        5.15 &       4.59 &           4.20  &            4.39 &         2.55 &            5.89 \\
     y=QED         &    5.77 &   5.88 &    0    &       5.40  &        4.02 &       3.85 &           4.20  &            5.80  &         5.90  &            3.20  \\
     y=SEH,SA      &    3.39 &   3.26 &    5.40  &       0    &        4.25 &       4.19 &           3.68 &            3.39 &         3.30  &            5.39 \\
     y=SEH,QED     &    4.20  &   5.15 &    4.02 &       4.25 &        0    &       3.80  &           3.67 &            4.22 &         5.19 &            4.00    \\
     y=SA,QED      &    4.88 &   4.59 &    3.85 &       4.19 &        3.80  &       0    &           3.65 &            4.91 &         4.59 &            3.87 \\
     y=SEH,SA,QED  &    4.10  &   4.20  &    4.20  &       3.68 &        3.67 &       3.65 &           0    &            4.12 &         4.23 &            4.20  \\
     y=SEH $\times\;3$ &    2.46 &   4.39 &    5.80  &       3.39 &        4.22 &       4.91 &           4.12 &            0    &         4.43 &            5.73 \\
     y=SA $\times\;3$  &    4.44 &   2.55 &    5.90  &       3.30  &        5.19 &       4.59 &           4.23 &            4.43 &         0    &            5.90  \\
     y=QED $\times\; 3$ &    5.73 &   5.89 &    3.20  &       5.39 &        4.00    &       3.87 &           4.20  &            5.73 &         5.90  &            0    \\
    \bottomrule
    \end{tabular}
    }
\end{table}

\end{document}